\definecolor{darkred_f}{RGB}{182, 85, 85}
\definecolor{darkblue_f}{RGB}{86, 116, 172}
\definecolor{darkorange_f}{RGB}{209, 136, 92}
\definecolor{darkgreen_f}{RGB}{106, 165, 110}
\newcommand{\rootg}{o_\mathcal G}
\def\eqref#1{equation~\ref{#1}}
\def\myeqref#1{Eq.~(\ref{#1})}
\def\1{\bm{1}}
\def\vd{{\bm{d}}}
\def\ve{{\bm{e}}}
\def\vf{{\bm{f}}}
\def\vk{{\bm{k}}}
\def\vl{{\bm{l}}}
\def\vr{{\bm{r}}}
\def\vt{{\bm{t}}}
\def\vu{{\bm{u}}}
\def\vx{{\bm{x}}}
\def\vP{{\bm{P}}}
\def\vY{{\bm{Y}}}
\def\vxi{{\bm{\xi}}}
\def\veta{{\bm{\eta}}}
\def\vI{{\bm{I}}}
\def\vN{{\bm{N}}}
\DeclareMathAlphabet{\mathsfit}{\encodingdefault}{\sfdefault}{m}{sl}
\SetMathAlphabet{\mathsfit}{bold}{\encodingdefault}{\sfdefault}{bx}{n}
\def\gC{{\mathcal{C}}}
\def\gE{{\mathcal{E}}}
\def\gF{{\mathcal{F}}}
\def\gG{{\mathcal{G}}}
\def\gK{{\mathcal{K}}}
\def\K{{\mathcal{K}}}
\def\gL{{\mathcal{L}}}
\def\gM{{\mathcal{M}}}
\def\gN{{\mathcal{N}}}
\def\gP{{\mathcal{P}}}
\def\gQ{{\mathcal{Q}}}
\def\gT{{\mathcal{T}}}
\def\gX{{\mathcal{X}}}
\def\X{{\mathcal{X}}}
\def\Y{{\mathcal{Y}}}
\newcommand{\E}{\mathbb{E}}
\newcommand{\R}{\mathbb{R}}
\newcommand{\inputx}{\bm \gX}
\newcommand{\infntk}{\Theta}
\newcommand{\barntk}{\overline{\Theta}}
\newtheorem{definition}{Definition}
\newtheorem{theorem}{Theorem}
\newtheorem{lemma}{Lemma}
\newtheorem{proposition}{Proposition}
\newcommand{\N}{{\mathcal N}}
\newcommand{\Ad}{{\mathscr A}}
\newcommand{\path}{{\text{path}}}
\DeclareMathOperator{\Span}{span}
\newcommand{\spatial}{{\mathscr S}}
\newcommand{\frequency}{{\mathscr F}}
\newcommand{\learning}{{\mathscr L}}
\newcommand{\shapep}{{{ \Lambda}_{\bm\gG} }}
\newcommand{\Gd}{{\mathcal G^{(d)}}}
\newcommand{\Nd}{{\mathcal N^{(d)}}}
\newcommand{\Ndzero}{{\mathcal N^{(d)}_0}}
\newcommand{\Ed}{{\mathcal E^{(d)}}}
\newcommand{\din}{{d_{\text{in}}}}
\newcommand{\textand}{{{\text{and}}}}
\newcommand{\sym}{{{\text{Sym}}}}
\newcommand{\sphere}{\mathbb S}
\newcommand{\sphered}{\mathbb S_{\din-1}}
\newcommand{\inputnodes}{{u\in\Ndzero}}
\newcommand{\barY}{{\overline Y}}
\newcommand{\nn}{{\mathscr N}}
\newcommand{\nnode}{{\mathscr n}}
\newcommand{\kk}{{\mathscr K}}
\newcommand{\hop}{{\mathscr H}_d}
\newcommand{\kkpen}{{\mathscr K}_{pen}}
\newcommand{\pkk}{\overline{\mathscr K}}
\newcommand{\residual}{{\mathscr R}}
\newcommand{\leaf}{{\mathcal N_0}}
\newcommand{\penultimate}{{\mathcal N_{-1}}}
\newcommand{\hidden}{{\mathcal N_1^{(d)}}}
\newcommand{\Hilbert}{{\mathbb H}}
\newcommand{\bmx}{{\bm\gX}} 
\newcommand{\spbar}{{\overline{\mathbb S}_{p-1}}}
\newcommand{\spherep}{{\mathbb S}_{p-1}}
\definecolor{darkred_f}{RGB}{182, 85, 85}
\definecolor{darkblue_f}{RGB}{86, 116, 172}
\definecolor{darkorange_f}{RGB}{209, 136, 92}
\definecolor{darkgreen_f}{RGB}{106, 165, 110}
\definecolor{plot_blue}{RGB}{66, 153, 225}
\definecolor{plot_orange}{RGB}{237, 137, 54} 
\definecolor{plot_red}{RGB}{245, 101, 101}
\definecolor{plot_green}{RGB}{72, 187, 120}
\definecolor{plot_purple}{RGB}{159, 122, 234}
\definecolor{plot_green2}{RGB}{56, 178, 172}
\definecolor{plot_pink}{RGB}{237, 100, 166}
\definecolor{plot_yellow}{RGB}{236, 201, 75}
\definecolor{plot_1}{RGB}{66, 153, 225}
\definecolor{plot_2}{RGB}{237, 137, 54} 
\definecolor{plot_3}{RGB}{245, 101, 101}
\definecolor{plot_4}{RGB}{72, 187, 120}
\definecolor{plot_5}{RGB}{159, 122, 234}
\definecolor{plot_6}{RGB}{56, 178, 172}
\definecolor{plot_7}{RGB}{237, 100, 166}
\definecolor{plot_8}{RGB}{236, 201, 75}
\definecolor{goldenyellow}{rgb}{1.0, 0.87, 0.0}
\definecolor{color0}{rgb}{1.        , 0.17254902, 0.1372549 }
\definecolor{color1}{rgb}{0.96078431, 0.39607843, 0.39607843 }
\definecolor{color2}{rgb}{0.78529412, 0.44705882, 0.51764706 }
\definecolor{color3}{rgb}{0.60980392, 0.49803922, 0.63921569 }
\definecolor{color4}{rgb}{0.43431373, 0.54901961, 0.76078431 }
\definecolor{color5}{rgb}{0.25882353, 0.6       , 0.88235294}
\definecolor{color6}{rgb}{0.05555556, 0.43398693, 1.   }
\newif\ifshowcomments
    \newcommand{\xlc}[1]{{\color{blue}[XLC: #1]}}
    \newcommand{\xlc}[1]{}
\title{
Eigenspace Restructuring: 
a Principle of \\ Space and Frequency in Neural Networks
}
\author{Lechao Xiao \\
Google Research, Brain Team\\
\texttt{xlc@google.com} \\
}
\begin{document}

\maketitle

\begin{abstract}

Understanding the fundamental principles behind the massive success of neural networks is one of the most important open questions in deep learning. However, due to the highly complex nature of the problem, progress has been relatively slow. In this note, through the lens of infinite-width networks, a.k.a. neural kernels, we present one such principle resulting from hierarchical localities. It is well-known that the eigenstructure of infinite-width multilayer perceptrons (MLPs) depends solely on the concept {\it frequency}, which measures the order of interactions. We show that the topologies from deep convolutional networks (CNNs) restructure the associated eigenspaces into finer subspaces. In addition to frequency, the new structure also depends on the concept {\it space}, which measures the spatial distance among nonlinear interaction terms. The resulting fine-grained eigenstructure dramatically improves the network's learnability, empowering them to simultaneously model a much richer class of interactions, including Long-Range-Low-Frequency interactions, Short-Range-High-Frequency interactions, and various interpolations and extrapolations in-between. Additionally, model scaling can improve the resolutions of interpolations and extrapolations and, therefore, the network's learnability. Finally, we prove a sharp characterization of the generalization error for infinite-width CNNs of any depth in the high-dimensional setting. Two corollaries follow: (1) infinite-width deep CNNs can break the curse of dimensionality without losing their expressivity, and (2) 
scaling improves performance in both the finite and infinite data regimes. 

\end{abstract}


\section{Introduction}

Learning in high dimensions is commonly believed to suffer from the curse of dimensionality, in which the number of samples required to solve the problem grows rapidly (often polynomially) with the dimensionality of the input \citep{bishop2006pattern}. Nevertheless, modern neural networks often exhibit an astonishing power to tackle a wide range of highly complex and high-dimensional real-world problems, many of which were thought to be out-of-scope of known methods \citep{krizhevsky2012imagenet, vaswani2017attention, devlin2018bert, silver2016mastering, senior2020improved, kaplan2020scaling}. What are the mathematical principles that govern the astonishing power of neural networks? This question perhaps is the most crucial research question in the theory of deep learning because such principles are also the keys to resolving fundamental questions in the practice of machine learning such as (out-of-distribution) generalization \citep{zhang2021understanding}, calibration \citep{Ovadia2019CanYT}, interpretability \citep{montavon2018methods}, robustness \citep{goodfellow2014explaining}.

Unarguably, there can be more than one of such principles. They are related to one or more of the three basic ingredients of machine learning methods: the data, the model, and the inference algorithm. Among them, the models, a.k.a. architectures of neural networks, are the most crucial innovation in deep learning that set it apart from classical machine learning methods. More importantly, the current revolution in machine learning is initialized by the (re-)introduction of convolution-based architectures \citep{krizhevsky2012imagenet, lecun1989generalization}, and subsequent breakthroughs are often driven by discoveries or applications of novel architectures (\citet{vaswani2017attention, devlin2018bert}). As such, identifying and understanding the fundamental roles of architectures are of great importance, which is the main focus of the current paper.  

In this paper, we take a step forwards by leveraging 
recent developments in overparameterized networks (\cite{poole2016exponential, daniely2016toward, schoenholz2016deep, lee2018deep, matthews2018, xiao2018dynamical, jacot2018neural,du2018gradienta, novak2019bayesian, lee2019wide, yang2019scaling} and many others.) These developments have discovered an important connection between neural networks and kernel machines: the Neural Network Gaussian Process (NNGP) kernels and the neural tangent kernels (NTKs). Under certain scaling limits, the former describes the distribution of the outputs of a randomly initialized network (a.k.a. {\it prior}), and the latter can describe the network's gradient descent dynamics.
Although recent work \citep{ghorbani2019limitations, yang2020feature} has identified several limitations of using them in studying the feature learning dynamics of practical networks, we show that they do capture several crucial and perhaps surprising properties of the architectural inductive biases.

\begin{figure}[t]
    \centering
    \includegraphics[width=\textwidth]{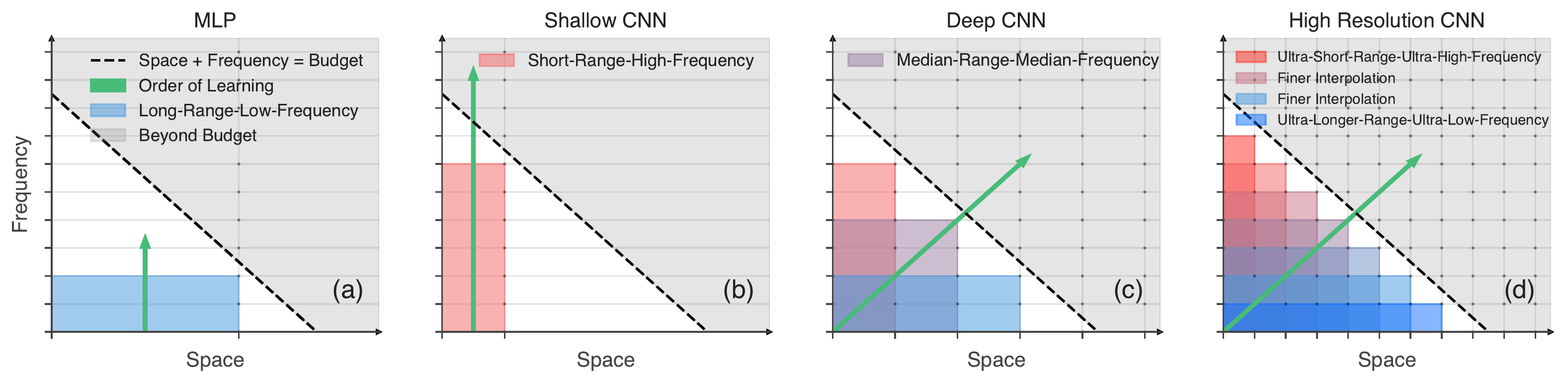}
    \caption{{\bf Architectural Inductive Biases.} A conceptual illustration of learnable functions vs architectures for four families of architectures. Each shaded box indicates the maximum learnable eigenspaces within a given compute/data budget ({\bf\color{black}{Dashed Line.}}) The complexity of an eigenspace is the sum of the complexities in space (X-axis) and in frequency (Y-axis). 
    From left to right: (a) MLPs can model {\bf\color{color5}Long-Range-Low-Frequency (LRLF)} interactions; (b) S-CNNs can model {\bf\color{color1}Short-Range-High-Frequency (SRHF)} interactions; 
    (c) Additionally, D-CNNs can also model interactions between {\bf\color{color5}LRLF} and {\bf\color{color1}SRHF}, a.k.a.,  {\bf\color{color3}Median-Range-Median-Frequency} interactions. (d) Finally, HS-CNNs can additionally model interactions of {\bf\color{color0} Ultra-Short-Range-Ultra-High-Frequency},
    {\bf\color{color6} Ultra-Long-Range-Ultra-Low-Frequency}, and {\bf\color{color2} finer} {\bf\color{color4}interpolations} in-between. The {\bf \color{plot_green}{Green Arrow}} indicates the direction of expansion of learnable eigenspaces when increasing the compute/data budget.}
    
    \label{fig:high-level-picture}  
\end{figure}

Our main contribution is the {\it eigenspace restructuring} theorem. When the input dimension is sufficiently large and the widths of the network approach infinity, it characterizes a mathematical relation between the network's architecture and its learnability through a trade-off between {\it space} and {\it frequency}, providing novel insights behind the mystery power of deep CNNs (more generally, hierarchical localities \citep{deza2020hierarchically, vasilescu2021causalx}.) We summarize our main contributions below; see Fig.~\ref{fig:high-level-picture} for a conceptual illustration.
\begin{enumerate}
    \item The learning order (see {\bf \color{plot_green}{Green Arrow}} in Fig.~\ref{fig:high-level-picture}) of eigenfunctions is governed by the learning index (LI), the sum of the frequency index (FI), and the spatial index (SI), which can be characterized precisely by the network's topology in high dimensions. 
    \item By leveraging a deep analytical result from \cite{mei2021generalization}, the LI provides a sharp characterization of the generalization error for infinite-width CNNs of arbitrary depth. As a consequence, we prove that deep CNNs (D-CNNs) can break the curse of dimensionality without losing their expressivity. In addition, we provide precise characterizations of the generalization benefits of depth for CNNs via the spatial index and the frequency index. 
    \item There is a trade-off between {\it space} and {\it frequency}: within a fixed (compute/data) budget, it is impossible to model generic Long-Range-High-Frequency interactions. MLPs can model {\bf\color{color5}Long-Range-Low-Frequency (LRHF)} (Fig.~\ref{fig:high-level-picture} (a)) interactions but fail to model {\bf\color{color1}Short-Range-High-Frequency (SRHF)} interactions, while shallow CNNs (S-CNNs) are the opposite (Fig.~\ref{fig:high-level-picture} (b)). Remarkably, D-CNNs can simultaneously model both and various interpolating interactions between them (e.g., {\bf\color{color3}Median-Range-Median-Frequency (MRMF) interactions}. (Fig.~\ref{fig:high-level-picture} (c))) 
    
    \item In addition, {\it high-resolution} CNNs (HR-CNNs), networks with EfficientNet-type \citep{tan2019efficientnet} of model scaling, further broaden the class of learnable functions to contain (1) { extrapolation}: {\bf\color{color6} Ultra-Long-Range-Ultra-Low-Frequency} and the {\bf\color{color0}dual} interactions and (2) {\bf\color{color2} finer} {\bf\color{color4}interpolations} interactions (Fig.~\ref{fig:high-level-picture} (d).)
    \item Finally, we verify the above claims empirically for neural kernels and finite-width networks using {\it SGD + Momentum} for datasets and networks of practical sizes.   
\end{enumerate}
The rest of the paper is organized as follows. Sec.~\ref{Sec: toy example} provides a toy example to help understand the motivations of the paper and to explain two core concepts: {\it spatial} and {\it frequency} complexities, which jointly define a function-and-architecture dependent complexity measure.
Sec.~\ref{sec:linear} briefly recaps the role of eigenstructures in studying the learning dynamics of linear models, and the connection between (in)finite-width networks and kernels (NNGP kernels and NTKs). Sec.~\ref{sec:neural-computation} introduces the main notations and expresses the neural network computations via directed acyclic graphs (DAGs). The definition of the learning index and the main results are presented in
Sec.~\ref{sec:main results}. We provide interpretations and experimental support for the main results in Sec.\ref{sec: interpretation} and Sec.\ref{sec:experiment}, resp. Finally, additional related work and further discussions are in Sec.~\ref{sec:related} and Sec.~\ref{sec:discussion}.

\section{Motivation and a Toy Example}\label{Sec: toy example}
Before diving into the technical details, it is helpful to have one toy example in mind. Consider learning the following polynomials in $\sphere_{d-1}\equiv \{x\in \mathbb R^{d}: \|x\|_2=1\}$ using (in)finite-width neural networks and for concreteness we have set $d=10$: 
\begin{align}
    f_1(x) = x_9,\,\,
    f_2(x) = x_0x_1, \,\,
    f_3(x) = x_0x_8,\,\,
    f_4(x) = x_6x_7(x_6^2 - x_7^2),  \,\,
    f_5(x) = x_2x_3x_5
\end{align}
Which architectures (e.g. MLPs, CNNs) can efficiently learn $f_i$ or the sum of $f_i$? More precisely, (1) if we have sufficient training data, how much time (compute) is required to learn $f_i$ for a given architecture? (2) Alternatively, if we have a sufficient amount of compute, how much data is needed to learn $f_i$? To address these questions, one crucial step is to provide a meaningful definition of the ``learning complexity" of a function $f_i$ under architecture $\gM$. Denote this complexity associated to compute and to data by $\gC_C(f_i; \gM)$ and $\gC_D(f_i; \gM)$, resp. With such the complexity properly defined, the questions are reduced to solving the min-max problem $\min_{\gM}\max_i\{\gC_{C/D}(f_i; \gM)\}$, if the task is, e.g., to learn the sum of $f_i$. 

Let's focus on the complexity. For infinite-width MLPs, a.k.a. inner product kernels, it is well-known that they have the inductive biases \citep{bach2017breaking, yang2020finegrained, ghorbani2020linearized, mei2021generalization} (known as the frequency biases) that the model prioritizes learning low-frequency modes (i.e., low degree polynomials) over high-frequency modes. In addition, the models require $\sim d^r$ many data points to learn {\it any} degree $r$ polynomials in $\mathbb S_{d-1}$. The frequency biases of MLPs are the consequence of the fact that the eigenspaces of inner product kernels are structured based only on frequencies. 
Specific to our example, for MLPs, the order of learning is $f_1 / f_2, f_3/f_5/f_4$ and it requires about $d/d^2, d^2/d^4$ ($d=10$ in the example here) many data points to learn the functions. Clearly, the model is very inefficient in learning $f_4$ and, more generally, any high-frequency modes. 

To design new models that improve the learning efficiency, we must take the modality of the functions into account, which MLPs have overlooked. 
We observe that:  (1) although of high-frequency, $f_4$ depends only on two consecutive terms $x_6$ and $x_7$, which are spatially close; (2) in contrast, $f_3(x)=x_0x_8$ is of low-frequency but the spatial distance between the two interaction terms $x_0$ and $x_8$ are ``large'' from each other; (3) the function $f_5(x)=x_2x_3x_5$ is somewhere in-between: the order of interaction is 3 (lower than that of $f_4$) and the spatial distance (not yet defined) among interaction terms is conceptually ``smaller" than that of $f_3(x)=x_0x_8$, but ``greater" than the terms in $f_5$. Using the terminologies from the introduction, the functions $f_2/f_3/f_5/f_4$ model interactions of types: Short-Range-Low-Frequency/Long-Range-Low-Frequency/Median-Range-Median-Frequency/Short-Range-High-Frequency. 
By {\it Range} we mean the distance among the nonlinear interacting terms, and by {\it Frequency} we mean the order(=degree) of interactions.  
Clearly, the MLPs are inefficient since they totally ignore the ``spatial structure" of the functions. 
As such, a good architecture must balance the "spatial structure" and the ``frequency structure" of the functions. 
For the same reason, a good complexity measure (1) must be able to capture both the {\it frequency} of the functions and the {\it spatial distance} among interaction terms; (2) must be able to precisely characterize the data and the computation efficiency of learning and their dependence on architectures. 
The learning index mentioned in the introduction is defined to meet these two conditions. It is the sum of the frequency index and the spatial index. The former measures the order (=degree=frequency) of interactions, depending on how the network partitions the input into patches. The latter measures the spatial distance among the interaction terms, depending on how the network hierarchically organizes these patches. 
Later we show that, in the high-dimensional setting, the learning index provides a sharp characterization for the learnability (in terms of compute and learning efficiency) of eigenfunctions, and certain CNNs can perfectly balance the learning of $f_3$ and $f_4$, i.e., informally   
\begin{align}
\mathcal C_{C/D}(f_3; \text{CNN}) \approx \mathcal C_{C/D}(f_4; \text{CNN}) \approx \mathcal C_{C/D}(f_{2/3}; \text{MLP}) << \mathcal C_{C/D}(f_4; \text{MLP})
\end{align}
A crucial step in the paper is to understand the exact meaning of {\it spatial distance} and define it properly. It turns out that this distance and thus the spatial index relies on the topology of the network through the length of the minimum spanning tree(s) (MSTs) connecting the nonlinear interacting terms of the eigenfunction of interest to the output of the network. We explain, conceptually, why MSTs emerge in the definition of the spatial index below. Heuristically, we need to compute the dependence of the output function (i.e., the function defined by the network) on an eigenfunction $f$ of interest, which is a polynomial in our setting. This amounts to computing certain mixed derivatives w.r.t. to the interacting terms in $f$. After applying the chain rule and the product rule, this dependence can be expressed as a sum of many ``paths" (possibly with repeated edges) from the output to the interacting terms. This sum is intractable in general. However, in certain scaling limits, one can indeed compute the exponent of the leading terms w.r.t. to the input dimension $d$. Not surprisingly, this exponent is the infimum of the lengths of all such ``paths", namely, the length of the MST(s) connecting interacting terms to the output. To formally define the spatial index, we need to express the neural network computations using DAGs (Sec.\ref{sec:neural-computation}) and properly define the shapes of DAGs in high dimensions (Sec.~\ref{sec:main results}).

\section{Linear and linearized models}\label{sec:linear}
As a warm-up exercise, we briefly go through the training dynamics of a linear regression model.
The goal is to explain the relation between eigenstructures and training dynamics. 
\subsection{Linear Regression}
Let $\{(\vx_i, y_i )\in\mathbb R^d \times \mathbb R: i \in [m]\}$ be the training set, where $m\in\mathbb N$. 
For convenience, we also use $\X\in \mathbb R^{m\times d}$ and $\Y\in\mathbb R^{m}$ to denote the input matrix and label vector resp., where the $i$-th row of $\X$ and $\Y$ are $\vx_i^T$ and $y_i$ resp. Let $J:\mathbb R^d \to\mathbb R^{1\times n}$ be a feature map and $J(\X)\in \mathbb R^{m\times n}$ denote the features of the inputs. The task is to learn a linear function $f(\vx, \theta) \equiv J(\vx)\theta$ to minimize the MSE of the residual $\residual(\X, \theta) \equiv f(\X, \theta) - \Y$, 
$$
\frac 1 2  \|\residual(\X, \theta)\|_2^2 \equiv \frac 1 2 \sum_{i\in [m]} (f(\vx_i, \theta) - y_i)^2 \, .
$$
Here $\theta\in\mathbb R^n$, a column vector, is the (trainable) parameter of the linear model. 
Then, by the chain rule, the gradient flow dynamics can be written as 
\begin{align}\label{eq:residual dynamcis}
    \frac d {dt} \residual(\X, \theta) = - J(\X) J^T(\X) \residual(\X, \theta)  \equiv - \K(\X, \X) \residual(\X, \theta) \, . 
\end{align}
Since the feature kernel $\K(\X, \X) = J(\X) J^T(\X)\in\mathbb R^m\times \mathbb R^m$ is constant in time, the above ODE can be solved in closed form. Let $\hat\K(j)/\vu_j$ be the $j$-th eigenvalue/eigenvector of $\K(\X, \X)$ in descending order. 
By initializing $\theta=0$ at time $t=0$ and denoting the projection by $\eta_j = \vu_j^T \residual(\X, 0)$,
the dynamics of the residual and the loss can be reduced to  
\begin{align}\label{eq:linear-eigen-mode}
\residual(\X, \theta_t) = \sum_{j\in [m]}  e^{-\hat \K(j) t} \eta_j \vu_j, 
\quad \mathcal L(\theta_t) = \frac 1 2\sum_{j\in [m]}  e^{-2\hat \K(j) t} \eta_j^2 \, .
\end{align}

Therefore, 
to make the residual loss in $\vu_j$ smaller than some $\epsilon>0$, namely, $\frac 1 2 (e^{-\hat \K(j) t} \eta_j )^2 < \epsilon$,
the amount of time needed is $t > \log\frac{2\epsilon}{\eta_j^2}/(2\hat\K(j)) $. The larger $\hat \K(j)$ is, the shorter amount of time it takes to learn $\vu_j$. In addition, if we know the distribution of $\{\sum_{j>i} \eta_j^2: i\in[m]\}$, then we can plot the scaling law of the loss \citep{kaplan2020scaling, bahri2021explaining}, which is roughly $(\hat \K(i)^{-1}, \sum_{j> i} \eta_j^2)_{ i\in[m]}$.

Although simple, linear models provide us with the most useful intuition behind the relation between the eigenstructure of the kernel matrix and the learning dynamics of the associated network. 

\subsection{Linearized Neural Networks: NNGP Kernels and NT Kernels}
Let $f(x, \theta)$ be a general function, e.g. $f$ is a neural network parameterized by $\theta$. Similarly, 
\begin{align} 
    \frac d {dt} \residual(\X, \theta) = - J(\X, \theta) J^T(\X, \theta) \residual(\X, \theta)  \equiv - \K(\X, \X; \theta) \residual(\X, \theta) \,. 
    \label{eq:new-residual-dynamcis}
\end{align}
However, the kernel $\K(\X, \X; \theta)$ depends on $\theta$ via the Jacobian $J(\X, \theta)$ of $f(\X, \theta)$, and evolves with time. The above system is unsolvable in general. However, under certain parameterization methods (namely, the NTK-parameterization, see \citet{sohldickstein2020infinite}) and when the network is sufficiently wide, this kernel does not change much during training and converges to a deterministic kernel called the NTK \citep{jacot2018neural},  
\begin{align}
    \K(\X, \X; \theta) \to \Theta(\X, \X) \quad \text{as      width} \to \infty. 
\end{align}
The residual dynamics becomes a constant coefficient ODE again 
$
    \dot \residual(\X, \theta) = - \Theta(\X, \X) \residual(\X, \theta) . 
$
To solve this system, we need the initial value of $\residual(\X, \theta)$. Since the parameters $\theta$ are often initialized with iid standard Gaussian variables, as the width approach infinity, the logits $f(\X, \theta)$ converge to a Gaussian process (GP), known as the neural network Gaussian process (NNGP). Specifically, $f(\X, \theta) \sim \gN(\bm 0; \kk(\X, \X))$, 
where $\kk$ is the NNGP kernel. Note that one can also treat infinite-width networks as Bayesian models, a.k.a. Bayesian Neural Networks, and apply Bayesian inference to compute the posteriors. This approach is equivalent to training {\it only} the network's classification layer \citep{lee2019wide} and the gradient descent dynamics is described by the kernel $\kk$.  

As such, there are two natural kernels, the NTK $\Theta$ and the NNGP kernel $\kk$, associated to infinite-width networks, whose training dynamics are governed by constant coefficient ODEs. To make progress, it is tempting to apply Mercer's Theorem to eigendecompose $\Theta$ and $\kk$, e.g.,  
\begin{align} \label{eq:mercer's theorem}
    \kk(x, \bar x) = \sum \hat \kk(j) \phi_j(x)\phi_j(\bar x)\quad \textand \quad 
    \Theta(x, \bar x) = \sum \hat \Theta(j) \psi_j(x)\psi_j(\bar x)
\end{align}
One advantage of applying this decomposition is that it almost has no constraint on the kernels and the inputs. However, this decomposition is too coarse to be useful since it can hardly provide fine-grained information about the eigenstructures. E.g, it is not clear what are the corrections to \myeqref{eq:mercer's theorem}  when changing the architecture from a 2-layer CNN to a 4-layer CNN. For this reason, we choose to work on ``concrete" input spaces (product of hyperspheres) with richer mathematical structures on which we can perform calculus (namely, harmonic analysis on spheres). Our primary goal is to characterize the precise analytical dependence of the decomposition  \myeqref{eq:mercer's theorem} on the network's topology in the high-dimensional limit.  

\section{Neural computations on DAGs}\label{sec:neural-computation}
This section aims to express the finite-width and the infinite-width neural network computations (neural computations) through DAGs. The following section will rely crucially on neural computations on DAGs to define the spatial and frequency indices. 

For a positive integer $p$, let $\spherep$ denote the unit sphere in $\mathbb R^p$ and $\spbar = \sqrt p \spherep$, the sphere of radius $\sqrt p$ in $\mathbb R^p$. We introduce the normalized  sum (integral) 
\begin{align}
\displaystyle\fint_{x\in X} f(x) \equiv |X|^{-1} \sum_{x\in X}f(x)  
\quad 
\left(  \quad \fint_{x\in X} f(x) \equiv \mu(X)^{-1} \int_{x\in X}f(x) \mu(dx)      \right) 
\end{align}
where $X$ is a finite set (a measurable set with a finite positive measure $\mu$).

We find it more convenient to express the computations in neural networks, and in neural kernels via DAGs \citep{daniely2016toward}, as both computations are of {\it recursive} nature. The associated DAG of a network can be thought of as the original network by setting all its widths (or the number of channels for CNNs) to 1. As such, changing the widths of a network won't alter the associated DAG. 
Let $\mathcal G = (\mathcal N, \mathcal E)$ denote a DAG, where $\mathcal N$ and $\mathcal E$ are the nodes and edges, resp. We always assume the graph to have a unique output node $\rootg$ and is an ancestor of all other nodes. 
Denote $\leaf \subseteq \N$ the set of input nodes (leaves) of $\gG$, i.e., the collection of nodes with no child. 
Each node $u\in\N$ is associated with a pointwise function $\phi_u:\mathbb R\to \mathbb R$, which is normalized in the sense  $\E_{z\in\N(0, 1)} \phi_u^2(z) = 1 \,. $
It induces a function $\phi_u^*: I\equiv [-1, 1] \to I$ defined to be
$$
    \phi^*_u(t) = \E_{(z_1, z_2)\in\N_t} \phi_u(z_1) \phi_u(z_2) \,.
$$
Here $\N_t$ denotes a pair of standard Gaussians with correlation $t$. We associate each $u\in\gN$ a finite-dimensional Hilbert space $\Hilbert_u$, and each $uv\in\gE$ (where the first node $u$ is the parent) a bounded linear operator $\mathcal L_{uv}: \Hilbert_v\to \Hilbert_u$. Let 
\[\bmx\equiv \prod_{u\in\leaf} \bmx_u 
\equiv\prod_{u\in\leaf}\overline \sphere_{\text{dim}(\Hilbert_u)-1} \subseteq \prod_{u\in\leaf} \Hilbert_u
\quad \textand \quad 
\bm I = I^{|\leaf|} 
\]
be the input {\it tensors} and the input {\it correlations} to the graph $\mathcal G$, resp. We associate two types of computations to a DAG: finite-width neural network computation and kernel computation,  
\begin{align}
    \nn_{\gG}: \bmx \to \Hilbert_{\rootg} \quad \text{and}\quad \mathscr K_{\gG}: \bm I\to I \, ,
\end{align}
resp. They are defined recursively as follows 
\begin{align}
\nn_u(\vx) &= \phi_u \left(\sum_{v: uv\in\gE} \gL_{uv} (\nn_v(\vx)) \right) \quad  &&\text {if} \quad u\notin \leaf \quad  \text{else} \quad \nn _u(\vx) = \vx_u
\\
\kk_u(\vt) &= \phi_u^*\left(\fint_{v: uv\in\gE} \kk_v(\vt)\right)
\quad &&\text {if} \quad  u\notin \leaf \quad \text{else} \quad \kk_u(\vt) = \vt_u \label{eq:nngp-recursion}
\end{align}
where $\vx\in\bmx$ and $\vt\in \vI$. 
The outputs of the computations are 
$
    \nn_{\gG}(\vx) = \nn{\rootg}(\vx) $ and $ \mathscr K_{\gG}(\vt) = \kk_{\rootg}(\vt).
    $
Note that $\mathscr K_{\gG}$ is indeed the NNGP kernel \citep{neal1996priors, lee2018deep, matthews2018gaussian}. The NTK \citep{jacot2018neural, lee2019wide} can also be written recursively as 
\begin{align}
\label{eq:ntk-recursion}
    \Theta_{u}(\vt) = 
    \dot \phi_u^*\left(\fint_{v: uv\in\gE}  \kk_v(\vt)\right)\fint_{v:uv\in\gE}
    \left(\kk_v(\vt) +\Theta_{v}(\vt) \right)
    \quad \text{with}\quad \Theta_{\gG} = \Theta_{\rootg}. 
\end{align}
Here, $\Theta_u=0$ if  $u\in \leaf$ and $ \dot \phi_u^*$ is the derivative of $\phi_u^*$. See Table 1 in \citet{neuraltangents2020} for more details regarding the computations of NNGP kernels and NTKs. For formal proofs of the convergence of the NNGP kernels and NTKs, see  \citet{daniely2016toward, novak2019bayesian,yang2019scaling, arora2019exact}. 

\subsection{Three Examples: MLPs, S-CNNs and D-CNNs.}
To unpack the notation, we consider three concrete examples: an $L$-hidden layer MLP, a shallow convolutional network (S-CNN) that contains only one convolutional layer and a deep convolutional network (D-CNN) that contains $(1+L)$ convolutional layers. The architectures are 
\begin{align}\label{eq:mlp-model}
&\textbf{MLP:}
&&[\textit{Input}]\to [\textit{Dense-Act}]^{\otimes L}\to [\textit{Dense}]
\\
\label{eq:s-cnn-model}
&\textbf{S-CNN:}
&&[\textit{Input}]\to [\textit{Conv}(p)\textit{-Act}]\to[\textit{Flatten-Dense}]
\\
\label{eq:d-cnn-model}
&\textbf{D-CNN:}
&&[\textit{Input}]\to [\textit{Conv}(p)\textit{-Act}]\to[\textit{Conv}(k)\textit{-Act}]^{\otimes L}\to[\textit{Flatten-Dense-Act}]\to[\textit{Dense}]
\end{align}
where $p/k$ is the filter size of the first/hidden layers and ${\it Dense}/{\it Conv}/{\it Act}/{\it Flatten}$ means an dense / convolutional / activation / flattening layer. We choose the stride to be the same as the size of the filter for all convolutional layers and choose {\it flattening} as the readout strategy rather than pooling. See Fig.~\ref{fig:nn-demonstration} (a, b, c) for the DAGs associated to a 4-layer MLP, a (1 + 1)-layer CNN (with $p=k=d^{\frac 1 2}$) and a (1+3)-layer CNN (with $p=k=d^{\frac 1 4}$).
\begin{figure}[t]
\centering
\centering
    \begin{subfigure}[b]{0.1\textwidth}
     \includegraphics[width=\textwidth]{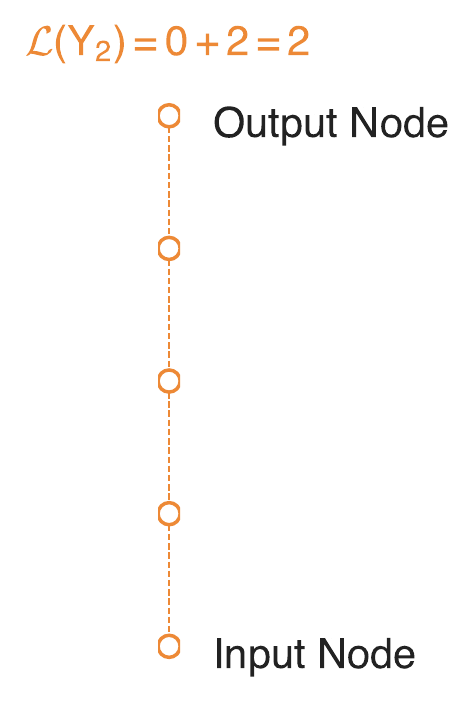}    \caption{$\text{MLP}$}
    \end{subfigure}
    \begin{subfigure}[b]{0.3\textwidth}
     \includegraphics[width=\textwidth]{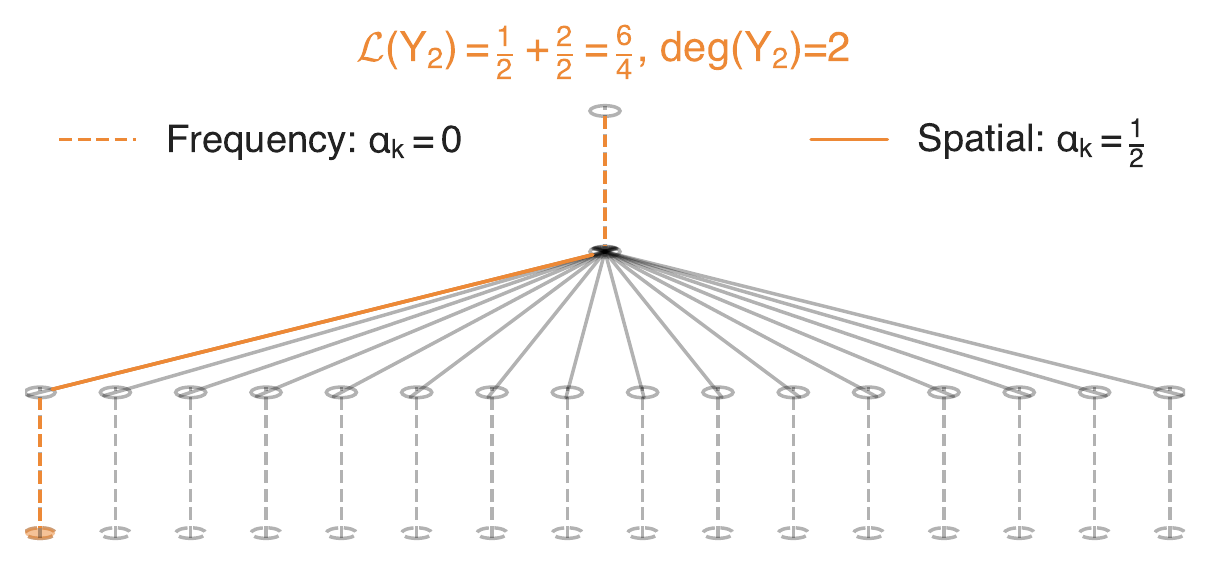}
         \subcaption{$\text{CNN}(p^2)^{\otimes{2}}$}
         \end{subfigure}
    \begin{subfigure}[b]{0.3\textwidth}
    \includegraphics[width=\textwidth]{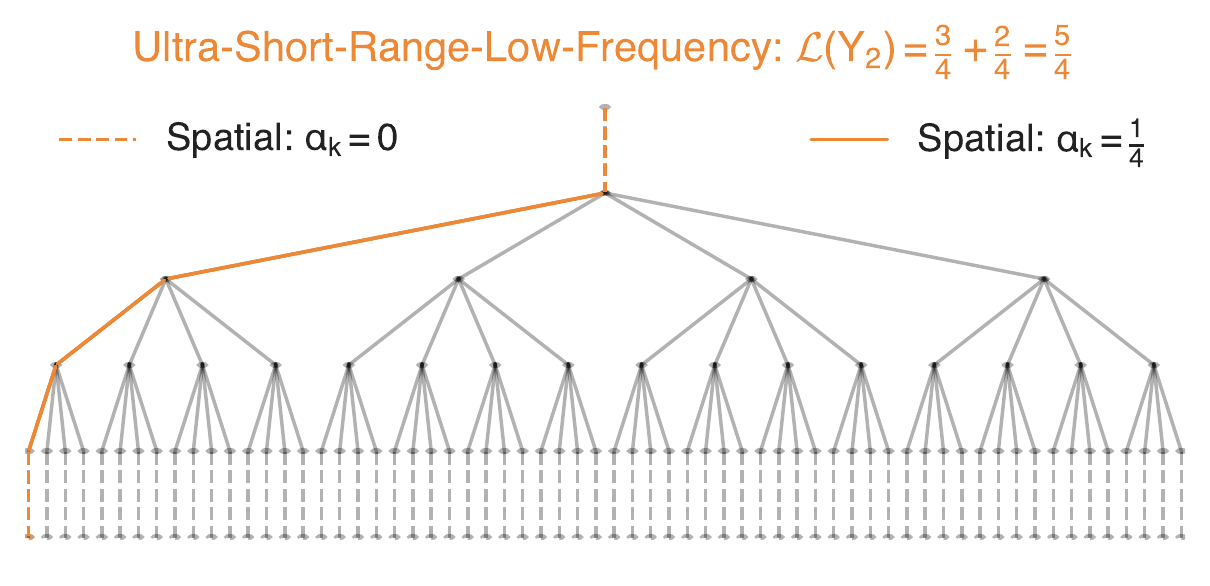}
        \caption{$\text{CNN}(p)^{\otimes{4}}$}
    \end{subfigure}
        \begin{subfigure}[b]{0.25\textwidth}
    \includegraphics[width=\textwidth]{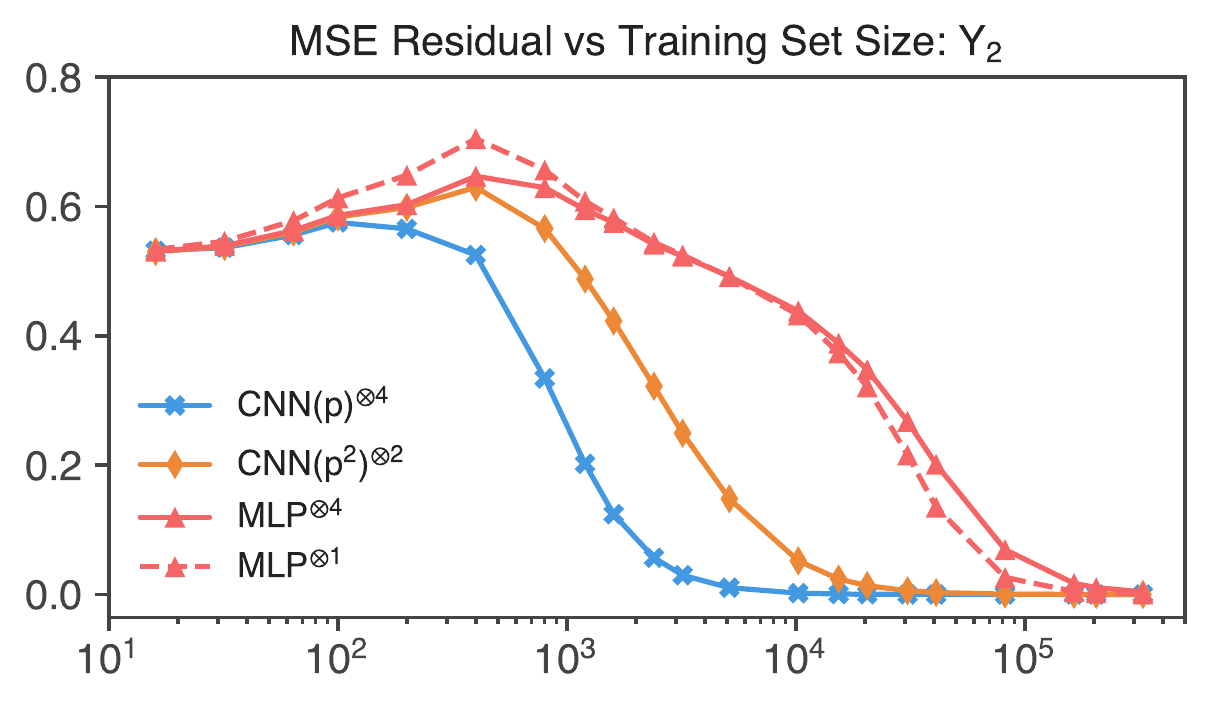}    \caption{MSE Residual}
    \end{subfigure}
    \caption{{\bf Architectures/DAGs. vs Eigenfunctions vs Learning Indices.}
    Left to right:  
     DAGs associated to (a) a four-layer MLP; (b) $\text{CNN}(p^2)^{\otimes{2}}$, a``D"-CNN that has two convolutional layers (c) $\text{CNN}(p)^{\otimes{4}}$, a ``HR"-CNN that has four convolutional layers; 
    and (d) MSE (Y-axis) vs training set size (X-axis) for ${\bf\color{plot_2} Y_2}$ obtained by NTK-regression for 4 architectures. Here ${\bf\color{plot_2} Y_2}$ is a linear combination of eigenfunctions of Short-Range-Low-Frequency interactions  ($\deg({\bf\color{plot_2} Y_2})=2$); see Sec.~\ref{Sec:learning, dags} for the expression. The DAGs are generated with $p=4$.
    In each DAG, the {\bf Dashed Lines} represent the edges with zero weights. The {\bf Solid Lines} have weights $0$, $\frac 12$ and $\frac 1 4$ in (a), (b) and (c), resp. 
    The {\bf\color{plot_2} colored path} represents the minimum spanning tree used to compute the spatial indices of ${\bf\color{plot_2} Y_2}$. Under architectures (a), (b) and (c), the spatial indices are $0$, $\frac 1 2$ and $\frac 3 4$, resp. Each input node represents an input patch of dimension $p^4=d^1$, $p^2=d^{\frac 1 2}$ and $p=d^{\frac 1 4}$ and the frequency indices are $2\times 1$, $2\times \frac 1 2$ and $2\times \frac1 4$ in (a), (b) and (c), resp.} 
    \label{fig:nn-demonstration}
\end{figure}
\paragraph{MLPs.} Let $\gG=(\gN, \gE)$ be a linked list with $(L+2)$ nodes, including the input/output nodes. Let $\gL_{uv}\in\mathbb R^{n_u\times n_v}$, where $n_{u/v}=\text{dim}(\Hilbert_{u/v})$ and the activations of the input/output nodes be the identity function. Then $\mathscr N_{\gG}$ represents a $L$-hidden-layer MLP. In addition, let $u$ and $v$ be the nodes in the $l$- and $(l-1)$-th layers and let $\gL_{uv}$ be initialized iid as 
\begin{align}
    \gL_{uv} =
    \frac{1} {\sqrt{n_v}} \left(\omega_{uv, ii'}\right)_{i\in [n_u], i'\in [n_v]} \equiv 
    \frac{1} {\sqrt{n_v}} \left(\omega_{ ii'}^{(l)}\right)_{i\in [n_u], i'\in [n_v]}
    , \quad \omega_{uv, ii'} \equiv  \omega_{ ii'}^{(l)}  \sim \gN(0, 1) \,. 
\end{align} 
Then the MLP can be written recursively as 
\begin{align}
    \mathscr N_u(\vx) = \phi_u(\gL_{uv}(\mathscr N_v(\vx))) = 
    \phi_u\left(\frac{1} {\sqrt{n_v}}\omega^{(l)}\mathscr N_v(\vx)\right)
\end{align}
Let $\theta=(\omega_{uv, ii'}:i\in [n_u], i'\in [n_v], uv\in\gE)$ denote the collection of all trainable parameters.  
Let $\vt_{\vx, \vx'}= \vx^T\vx'/{n_{u}}$ for $u\in\leaf$ and $n_v\to\infty$ for all hidden nodes, then the outputs of $\mathscr N_{\gG}(\bmx)$ converge weakly to the GP $\gG\gP(0, \kk_{\gG}(\vt_{\vx, \vx'})_{\vx, \vx'\in\bmx})$ and $ \Theta_{\gG}(\vt_{\vx, \vx'})_{\vx, \vx'\in\bmx}$ 
is the NTK in the sense  
\begin{align}
\label{eq:nngp-ntk-convergence}
\mathbb E \nn_{\gG}(\vx) \nn_{\gG}(\vx') 
\xrightarrow{\text{in prob.}} \kk_{\gG}(\vt_{\vx, \vx'})
\quad \text{and}\quad 
    \langle \nabla_{\theta} \nn_{\gG}(\vx), \nabla_{\theta} \nn_{\gG}(\vx') \rangle 
    \xrightarrow{\text{in prob.}}
    \Theta_{\gG}(\vt_{\vx, \vx'}). 
\end{align}
Indeed, note that $\deg(u)=1$ for all $u\notin\leaf$. \myeqref{eq:nngp-recursion} and \myeqref{eq:ntk-recursion} become 
\begin{align}
    \kk_u(\vt_{\vx, \vx'}) = \phi^*_u(\kk_v(\vt_{\vx, \vx'})) \quad 
    \textand  \quad 
    \Theta_u(\vt_{\vx, \vx'}) = \dot \phi^*_u(\kk_v(\vt_{\vx, \vx'})) (\kk_v(\vt_{\vx, \vx'}) + \Theta_v(\vt_{\vx, \vx'})) 
\end{align}
which are exactly the recursive formulas for the NNGP kernel and NTK for MLPs; see e.g. Sec.E in \citet{lee2019wide}.  
\paragraph{S-CNN.}
The input $\bmx= (\spbar)^{w}\subseteq \mathbb R^{d}$, where $p$ is the patch size, $w$ is the number of patches, $d=pw$ is the dimension of the inputs. Here, the inputs have been {\it pre-processed} by a patch extractor and then by a normalization operator. In words, the S-CNN has one convolutional layer with filter size $p$, followed by an activation function $\phi$ (e.g., Relu), and finally by a {\it flatten-dense} readout layer. 
Mathematically, by letting $n\in\mathbb N$ be the number of channels in the hidden layer, the output (i.e., logit) is given by  
\begin{align}
&\text{\bf Convolution + Activation:} \quad    &&z_{ij}(\vx) = \phi\left(p^{-\frac 1 2} \sum_{i'\in[p]} \omega^{(1)}_{j, i'}\vx_{i, i'} \right)  \quad \text{for} \quad i\in [w], j\in [n]
    \\
&\text{\bf Flatten + Dense:}\quad 
    &&f(\vx) = (w n)^{-\frac 1 2}\sum_{i\in [w], j\in [n]}\omega^{(2)}_{ij} z_{ij}(\vx)\,, 
 \end{align}
where $\omega^{(1)}_{j, i'}$ and $\omega^{(2)}_{ij}$ are the parameters of the first and readout layers, resp.

We can associate a DAG $\gG=(\gN, \gE)$ to the above S-CNN. Let the input, hidden and output nodes be $\leaf= \{0\}\times [w]$, $\gN_1=\{1\}\times [w]$ and $\gN_2 = \{\rootg\}=\{(2)\}$, resp. and $\gN = \leaf\cup\gN_1\cup\gN_2$. Moreover, $uv\in\gE$ if $u=\rootg$ and $v\in\gN_1$ or 
$u=(1,i)\in\gN_1$ and $v=(0, i)\in\leaf$.
Let $\Hilbert_v = \mathbb R^{p}$ for $v\in\gN_0$, $\Hilbert_u = \mathbb R^{n}$ if $u\in\gN_1$ and $\Hilbert_{\rootg} = \mathbb R$. The associated linear operators are given by 
\begin{align}
    \gL_{uv} &= p^{-\frac 1 2 }\left (\omega^{(1)}_{j, i'}\right)_{j\in [n], i'\in[p]}\in\mathbb R^{n\times p} \quad &&\text{for }\quad  (u,v) \in \gN_1\times \leaf,  \quad uv\in\gE.  
    \\
    \gL_{\rootg v} &= (w n)^{-\frac 1 2}\left(\omega^{(2)}_{ij}\right)_{j\in [n]} \in \mathbb R^{n} \quad &&\text{if} \quad v = (1, i) \in\gN_1
\end{align}
Note that the weights are shared in the first layer but not in the readout layer (i.e., the network has no pooling layer). To compute the NNGP kernel and NTK, we initalize all parameters $\omega^{(1)}_{j, i'}$ and $\omega^{(2)}_{ij}$ with iid Gaussian $\gN(0, 1)$. Letting $n\to\infty$ and denoting $\vt_v =\vx_v^T\vx_v'/p$ and $\bm\vt = (\vt_v)_{v\in\leaf}$, we have 
\begin{align}
    \kk_{\gG}(\bm\vt) = \fint_{v\in\leaf} \phi^{*}(\vt_v) 
    \quad \text{and} \quad \Theta_{\gG}(\bm\vt) = 
     \fint_{v\in\leaf} \phi^{*}(\vt_v) + \dot \phi^*(\vt_v) 
\end{align}

\paragraph{D-CNN.}
The input space is $\bmx = (\spbar)^{w \times k^L}\subseteq \mathbb R^{w \times k^L \times p}$, where $p$ is the patch size of the input convolutional layer, $k$ is the filter size in {\it hidden} convolutional layers, $L$ is the number of {\it hidden} convolutional layers and $w$ is the spatial dimension of the layer before flattening. The total dimension of the input is $d=p\cdot k^L\cdot w$, and the number of input nodes is $|\leaf| = k^L\cdot w$. Since the stride is equal to the filter size for all convolutional layers, the {\it spatial} dimension is reduced by a factor of $p$ in the first layer, a factor of $k$ by each hidden convolutional layer, and is reduced to 1 by the {\it Flatten-Dense-Act} layer. Similar to S-CNNs, one can associate a DAG to a D-CNN. Briefly, the input layer has $k^L\times w$ nodes and is reduced by a factor of $k$ by each convolutional layer. The layer before {\it flattening}, the second last layer and the output layer have $w$,  $1$, and  $1$ nodes, resp. More precisely, we can identify the nodes using tuples. 
\begin{itemize}
    \item For $j=0$, i.e., the input layer, $\gN_0=\{0\} \times [w]\times [k]^L$. 
    \item For $1\leq j\leq L+1$, $\gN_j=\{j\}\times [w]\times [k]^{L-j+1}$. 
\item  The second last and output layers are $\gN_{L+2}=\{(L+2)\}$ and $\gN_{L+3}=\{(L+3)\}\equiv \{\rootg\}$.
\end{itemize}
Note that the first index of a tuple specifies the layer index of the node, i.e. if $u=(j, \dots)$ then $u\in\gN_j$. The remaining indices specify its spatial location in that layer. To define the edges $\gE$, we only need to specify the parents of the nodes. 
\begin{itemize}
    \item For $j=0$, the parent of $v=(0, k_0, k_1, \dots, k_{L})\in\gN_0$ is $u=(1, k_0, k_1, \dots, k_{L})\in\gN_1$. 
    \item For $1\leq j\leq L$, the parent of $v=(j, k_0, k_1, \dots, k_{L-j+1})\in \gN_j$ is $u=(j+1, k_0, k_1, \dots, k_{L-j})\in \gN_{j+1}$. 
    \item For $j=L+1$, the parent of $v=(L+1, k_0)\in \gN_{L+1}$ is $u=(L+2)$, whose parent is $\rootg=(L+3)$. 
\end{itemize}
We write $\gE =\cup_{1\leq j\leq L+3}\gE_j$, where $\gE_j$ is the collection of edges between the $(j-1)$-th and $j$-th layer. Next, we define the neural network associated to this D-CNN. For $u\in\gN_j$, $\mathbb H_u = \mathbb R^{n_j}$, where $n_0=p$ is the size of the patch, $n_j$ is the number of channels in the $j$-th hidden layer for $1\leq j\leq L+1$, and $n_{L+2}$ is the number of features in the second last layer and $n_{L+3}=1$. Thus $n_u = n_j$ if $u\in \mathcal N_j$. For $\vx \in\bm\X$, if $u=(0, k_0, k_1, \dots, k_{L})\in\gN_0$, let 
\begin{align}
    \mathscr N_u(\vx)  \equiv \vx_{u}\equiv \vx_{k_0, k_1, \dots, k_{L}} \in \spbar
\end{align}
and otherwise, $u\in \gE_j$ for some $j\geq 1$ and let  
\begin{align}
    \mathscr N_{u}(\vx)_i = \phi_u\left( \left(\deg(u)n_v\right)^{-\frac 1 2}\sum_{v: uv\in \gE_j} \sum_{i'\in [n_v]} \omega_{uv, ii'} \mathscr N_v(\vx)_{i'}\right)\, \quad \text{for} \quad i\in[n_u].  
\end{align}
Here, for $u\in \gN_j$, the degree $\deg(u)=p, k, w $ and $1$ if $j = 1$, $ 2\leq j \leq L+1 $, $ j=L+2 $ and $ j=L+3$ resp., and the parameters are usually initialized with iid $\omega_{uv, ii'} \sim \mathcal N(0, 1)$ and the shape of $\omega_{uv}$ is $(\deg(u), n_v, n_u)$. 
It is more appropriate to call the current network a locally-connected network, or a ``convolutional network" {\it without} weight-sharing, as the parameters $\omega_{uv}$ depend on the node $u$ and are not shared within the same layer. To make it a true convolutional network, we enforce weight-sharing by setting $\omega_{uv, ii'}= \omega_{u'v, ii'}\equiv \omega^{(j)}_{v, ii'}$ if $u, u'\in\gN_j$ for some $j$. 

Denoting $\vt_u =\vx_u^T\vx_u'/p$ for $u\in\gN_0$ and $\bm\vt = (\vt_u)_{u\in\leaf}$. As $\min\{n_j\}_{1\leq j\leq L+2}\to\infty$, we have \myeqref{eq:nngp-ntk-convergence} and 
the NNGP kernel and the NTK associated to this network are given by \myeqref{eq:nngp-recursion} and \myeqref{eq:ntk-recursion}. For formal proofs, see \citet{daniely2016toward, novak2018bayesian, yang2019scaling}. 
This is true for both convolutional networks and locally-connected networks, as long as the architecture contains no pooling \citep{novak2018bayesian}. For a convolutional network with a global average pooling layer (GAP), the kernel computations need to be modified to capture the translation invariance from GAP. See Sec.~\ref{Sec:gap} for more details. 

\section{Main Results}\label{sec:main results}
The goal is to obtain a precise charaterization of the relation between the eigenstructures of $\kk$ / $\infntk$ and the DAG associated to the network's architectures in the large input dimension setting. As such we consider a sequence of graphs $\bm\gG = \left(\Gd\right)_{d\in\mathbb N}$, where $\Gd=(\Nd, \Ed)$. We associate a {\it finite} set of non-negative numbers $\shapep $ to $\bm \gG$, which is called the shape parameters of $\bm \gG$,
\begin{align}
    0\in \shapep \subseteq [0, 1] \quad \text{and} \quad |\shapep|< \infty .
\end{align}
For the rest of the paper, we will use the following notations. For $A, B: \mathbb N\to \mathbb R^{+} $, 
\begin{align}
    B(d) \gtrsim A(d) \iff A(d) \lesssim B(d) \iff	
   \exists c, \,d_0 > 0 \quad s.t. \quad B(d) \geq c A(d) >0 \quad \text{for all}\quad  d > d_0\,
\end{align}
and 
\begin{align}
    B(d) \sim A(d) \iff	
    B(d) \gtrsim A(d) \quad \text{and} \quad A(d) \gtrsim B(d)\, .
\end{align}
\subsection{Assumptions}
We need several technical assumptions on $\bm \gG$ regarding the asymptotic shapes of $\Gd$, which are summarized as {\bf Assumption-$\bm\gG$} below. 

{\bf Assumption-$\bm\gG$. } Let $\bm \gG = (\Gd)_{d}$. There are absolute constants $c, C >0$ and $d_0>0$ such that the followings hold for $d\geq d_0$. 
\begin{enumerate}[label=(\alph*.)]
\item For each {\it non-input} node $u\in\Nd$, there is $\alpha_u\in \shapep$ such that 
    \begin{align}
        \displaystyle 
       c d^{\alpha_u}  \leq \deg(u) \leq C d^{\alpha_u} \,. 
    \end{align}
For each edge $uv\in\Ed$, its weight is defined to be  $\pi_{uv}\equiv\alpha_u$.
\item For each {\it input} node $v$, there are $d_v\in\mathbb N$ and $0<\alpha_v\in\shapep$ such that 
\begin{align}
        \displaystyle 
       c d^{\alpha_v}  \leq d_v\leq C d^{\alpha_v}\quad \text{and}\quad \sum_{v\in\Ndzero} d_v =d. 
    \end{align}
\item Let $\hidden \equiv \{u: \exists v\in \gN_0^{(d)} \, \text{ s.t.}\,  uv\in \Ed \}$ be the collection of nodes in the {\it first} hidden layer. We assume that for every $u\in \hidden$, $\alpha_u=0$ and all children of $u$ are input nodes. 
\item Every node $v\in \Nd$ has at most $C$ many parents, i.e. 
$
    \displaystyle  |\{u: uv\in \Ed\}| \leq C. 
$
Moreover, the number of {\it layers} is uniformly bounded, namely, for any node $u$, any path from $u$ to $\rootg$ contains at most $C$ edges. 
\end{enumerate}
The first two assumptions also help to create spectral gaps between eigenspaces. When $d$ is not large, the ``finite-width" effect is no longer negligible, and we expect that the spectra decay more smoothly. Assumption (c.) says there are no (skip) connections from the input layer to other layers except to the first hidden layer, which is often the case in practice.

We say $\phi^*$ is semi-admissible if, for all $r\geq 1$, the $r$-th derivative of $\phi^*$ at zero is non-vanishing, i.e., 
    $
        {\phi^*}^{(r)}(0) > 0. 
    $
If, in addition, $\phi^*(0)=0$ (i.e., the activation is centered), then we say $\phi$ is admissible. An activation $\phi$ is (semi-)admissible if $\phi^*$ is (semi-)admissible. 
Note that if $\phi^*$ is (semi-)admissible, then $\dot \phi^*$ is semi-admissible. 
Finally, we say $\phi$ is poly-admissible if there is a $J$ sufficiently large such that
$ {\phi^*}^{(r)}(0) > 0$
for all $1\leq r\leq J$ and ${\phi^*}^{(r)}(0)=0$
otherwise. The exact value of $J$ will depend on the problem of interest. 
\newline 
{\bf Assumption-$\phi$.} We make the following assumptions on the activations. 
    (a.) If $\inputnodes$, $\phi_u$ is the identity function. 
    (b.) If $u\notin \Ndzero\cup\{\rootg\}$, $\phi_u$ is admissible. 
    (c.) If $u=\rootg$, $\phi_u$ semi-admissible. 

This assumption is stronger than what are needed in most of the cases. E.g., if we are only interested in learning polynomials of degree at most $l$, it suffices to assume ${\phi^*}^{(r)}(0)>0$ for $1\leq r \leq l$. 
\subsection{Spatial, Frequency and Learning Indices}
We introduce the key concept which defines the {\it spatial distance} among nodes. It is the length of the minimum spanning tree (MST) of the nodes.   
\begin{definition}[Spatial Index of Nodes]
Let $\mathscr n \subseteq \Nd$. The spatial index of $\mathscr n$ is defined to be 
\begin{align}
\displaystyle
    \spatial(\mathscr n) \equiv \min_{{\mathscr n \subseteq \gT \leq \Gd }} \,\, \sum_{uv\in \gE(\gT)} \pi_{uv} 
    =
    \min_{{\mathscr n \subseteq \gT \leq \Gd }} \,\, \sum_{uv\in \gE(\gT)} 
    \alpha_u 
\end{align}
where $\mathscr n \subseteq \gT \leq \Gd$ means $\gT$ is a sub-graph containing $\mathscr n$.
By default, $\spatial(\mathscr n) = 0$ if $\mathscr n$ contains only one or zero node. 
\end{definition}

Next, we define the spatial/frequency/learning indices for a multi-index $\vr\in\bm I=  I^{|\Ndzero|}$. 
Let $\vt^\vr: \bm I \to I $ be a monomial, where $\vr: \Ndzero\to \mathbb N^{|\Ndzero|}.$ We use $\nnode(\vr) = \{v\in\Ndzero: r_v\neq 0\}$ to denote the support of $\vr$ and $\nnode(\vr;\rootg) = \nnode(\vr)\cup \{\rootg\}$. 

\begin{definition}[Spatial, Frequency and Learning Indices of $\vr$]
We say $\vr\in\mathbb N^{|\Ndzero|}$ is $\Gd$-learnable, or learnable for short, if there is a common ancestor node $u$ of $\nnode(\vr)$ such that $\phi_u$ is semi-admissible. We use $\mathscr{A} (\Gd) \equiv \{\vr\in\mathbb N^\Ndzero: \vr \text{ is learnable} \}$.  
For $\vr\in \mathscr A(\Gd)$, the spatial index, frequency index and the learning index 
are defined to be, 
\begin{align}
    \spatial(\vr) \equiv \spatial(\mathscr n (\vr; \rootg)) , \quad
\frequency (\vr) \equiv \sum_{v\in\Ndzero} \vr_v \alpha_v
 \quad 
 \text{and}\quad 
    \learning (\vr) \equiv \spatial(\vr) + \frequency(\vr) \, , 
\end{align}
resp. If $\vr\notin \Ad(\Gd)$, we set $\spatial(\vr)=\frequency(\vr)=\learning(\vr)=+\infty$. Let $\learning({\Gd})$ denote the sequence of learning indices in non-descending order, i.e.  
\begin{align}
    \learning ({\Gd}) \equiv \left( \learning (\vr):\vr\in\mathscr A(\Gd) \right) \equiv (\dots \leq r_j\leq r_{j+1}\leq \dots ) 
\end{align}
\end{definition}
Finally, we specify the eigenfunctions of the kernels and the indices associated them. 
For each $u\in\Ndzero$, let $\{\barY_{r, l}\}_{l\in [N(d_u, r)], r\in\mathbb N}$ be the family of normalized spherical harmonics in $\overline\sphere_{d_u-1}$, where $N(d_u, r)$ is the number of degree $r$ spherical harmonics in $\sphere_{d_u-1}$; see Sec.~\ref{sec:spherical harmonics} for more details about spherical harmonics. Define 
\begin{align}
        \bm\barY_{\vr, \vl}(\bm\xi) = \prod_{\inputnodes}\barY_{\vr_u, \vl_u}(\bm\xi_u), \quad \vl =(\vl_u)_{\inputnodes} \in [\vN(\vd, \vr)] \equiv \prod_{\inputnodes}[N(d_u, \vr_u)] \, 
\end{align}
for $\bm\xi = (\bm\xi_u)\in\bmx$, where $\vd=(d_u)_{\inputnodes}$ and, for consistency, $\vd_u = d_u$.  The spatial, frequency and learning indices of $\bm\barY_{\vr, \vl}$ are
$\spatial(\bm\barY_{\vr, \vl})\equiv \spatial(\vr)$, 
$\frequency(\bm\barY_{\vr, \vl})\equiv \frequency(\vr)$ and
$\learning(\bm\barY_{\vr, \vl})\equiv \learning(\vr)$, resp. 
\subsection{Eigenspace Restructuring}
The following is our main theorem. It describes an analytical relation between the architecture of a network and the eigenstructure of its inducing kernels in high dimensions.  
\begin{theorem}[\bf {Eigenspace Restructuring}]
\label{thm:main}
Assume 
{\bf 
Assumption-$\bm\gG$} and {\bf Assumption-}$\phi$. 
We have the following eigen-decomposition for $\gK = \kk_{\Gd}$ or $\Theta_{\Gd}$. For $ \bm \xi, \bm\eta \in \bmx $
\begin{align}
    \gK(\bm\xi,\bm \eta) = \sum_{\vr\in \mathbb N^{|\Ndzero|}}
    \lambda_\gK(\vr) 
    \sum_{\vl\in \vN(\vd, \vr)} \bm\barY_{\vr, \vl}(\bm\xi) \bm\barY_{\vr, \vl}(\bm\eta),  
\end{align}
where 
\begin{align}\label{eq:eigen-decay}
    \lambda_\gK(\vr)\sim d^{-\learning(\vr)} \quad  \text{if} \,\, \vr\neq \bm 0\, \quad \text{and} \quad |\vr| \lesssim 1. 
\end{align}
\end{theorem}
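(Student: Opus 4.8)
The plan is to split the argument into a \emph{structural} part (which functions diagonalise the kernel) and a \emph{quantitative} part (how fast the eigenvalues decay), obtaining the latter by an induction along $\Gd$ from the leaves up to $\rootg$; throughout I treat $\kk_{\Gd}$ first and then rerun the argument for $\Theta_{\Gd}$.

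\textbf{Eigenfunctions by symmetry.} By the recursions \myeqref{eq:nngp-recursion} and \myeqref{eq:ntk-recursion}, both $\kk_{\Gd}$ and $\Theta_{\Gd}$ are functions of the tuple of input correlations $(t_v)_{v\in\Ndzero}$, $t_v=\bm\xi_v^T\bm\eta_v/d_v$, alone; hence each is invariant under the product group $\prod_{v\in\Ndzero}O(d_v)$ acting sphere-by-sphere on $\bmx$. The $\prod_v O(d_v)$-isotypic decomposition of $L^2(\bmx)$ is $\bigoplus_{\vr}\Span\{\bm\barY_{\vr,\vl}\}_{\vl}$, each summand being irreducible (a tensor product of the irreducible $O(d_v)$-modules of degree-$\vr_v$ spherical harmonics) and the $\bm\barY_{\vr,\vl}$ forming a complete orthonormal system, so Schur's lemma forces $\gK$ to act on each summand as a scalar $\lambda_{\gK}(\vr)\ge 0$. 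This gives the stated eigen-decomposition for both kernels; since under Assumption-$\phi$ every non-input node (in particular $\rootg$) carries a semi-admissible activation, $\mathscr A(\Gd)=\mathbb N^{|\Ndzero|}$ and the only remaining content is the estimate \myeqref{eq:eigen-decay} on $\lambda_{\gK}(\vr)$.

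\textbf{A propagation estimate.} To each node $w$ associate the partial kernel $\kk_w$ (a function of the correlations at input nodes below $w$) with Mercer coefficient $\mu_w(\vr)$, so $\mu_{\rootg}(\vr)=\lambda_{\kk}(\vr)$, and let $\spatial_w(\vr)$ be the length of the minimum spanning tree joining $\nnode(\vr)\cup\{w\}$ inside the sub-DAG rooted at $w$ (so $\spatial_{\rootg}=\spatial$). I would prove by induction from the leaves that
\begin{align}
    \mu_w(\vr)\ \sim\ d^{-\frequency(\vr)-\spatial_w(\vr)}\qquad\text{for }\vr\neq\bm 0,\ |\vr|\lesssim 1 .
\end{align}
The base cases are the input nodes, where $\kk_w=t_w$ is the degree-one zonal so $\mu_w(e_w)=1/d_w\sim d^{-\alpha_w}$ and $\spatial_w(e_w)=0$; and, crucially because of Assumption-$\bm\gG$(c), the first hidden layer, where $\deg(w)\sim 1$, $\kk_w=\phi_w^*(\fint_v t_v)$, and the classical fact that the degree-$r$ zonal component of $t^r$ on $\sphere_{p-1}$ equals $r!/\prod_{i=0}^{r-1}(p+2i)\sim p^{-r}$ --- together with admissibility of $\phi_w^*$, i.e.\ all its Taylor coefficients strictly positive --- yields $\mu_w(\vr)\sim d^{-\frequency(\vr)}$ with $\spatial_w\equiv 0$ there. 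For a deeper node $w$ with children $v_1,\dots,v_m$, $m=\deg(w)\sim d^{\alpha_w}$ (all non-input by Assumption-$\bm\gG$(c), with pairwise disjoint input-supports since stride $=$ filter size), expand $\phi_w^*(s)=\sum_n a_n^{(w)}s^n$ in $s=\tfrac1m\sum_i\kk_{v_i}$ by the multinomial theorem. If $\vr$ meets exactly children $v_{i_1},\dots,v_{i_k}$ with restrictions $\vr^{(i_1)},\dots,\vr^{(i_k)}$, the monomial $\prod_j\kk_{v_{i_j}}$ (each exponent one) contributes to the $\vr$-coefficient exactly $\prod_j\mu_{v_{i_j}}(\vr^{(i_j)})$ --- no within-sphere convolution, the supports being disjoint --- with strictly positive prefactor $a_k^{(w)}\,k!\,m^{-k}\sim d^{-k\alpha_w}$; by the inductive hypothesis and the MST recursion $\spatial_w(\vr)=k\alpha_w+\sum_j\spatial_{v_{i_j}}(\vr^{(i_j)})$ this has order exactly $d^{-\frequency(\vr)-\spatial_w(\vr)}$, the lower bound. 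Every other monomial either raises the harmonic degree on a sphere outside $\nnode(\vr)$ --- which costs extra powers of $d$ via the ``$t^{r+2}$ loses a factor $p$'' refinement of the identity above --- or routes some $\vr^{(i_j)}$ through a child used with exponent $\ge 2$ or through extra nodes, which by sub-additivity of MST length, $\spatial(\vr'+\vr'')\le\spatial(\vr')+\spatial(\vr'')$, cannot improve the exponent; since all $a_n^{(w)}$ and all Gegenbauer linearisation coefficients are nonnegative these contributions add rather than cancel, which upgrades the bound to ``$\sim$'' and secures the lower bound even when several spanning trees attain the minimum. Uniformity in $d$ comes from truncating $\phi_w^*$ at a finite order $N$ (legitimate since $|\vr|\lesssim 1$), controlling the $\vr$-component of the remainder by $\|R_N(s)\|_2=o(d^{-\learning(\vr)})$ for $N$ large, and using Assumption-$\bm\gG$(d) (bounded depth and in-degrees) to bound the number of minimal monomials.

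\textbf{Conclusion and the hard part.} Evaluating the estimate at $w=\rootg$ gives \myeqref{eq:eigen-decay} for $\kk$. For $\Theta$ one runs the parallel induction on $\Theta_w$, using that in \myeqref{eq:ntk-recursion} the only operation capable of spreading $\vr$ across several children of $w$ is the prefactor $\dot\phi_u^*(\fint_v\kk_v)$ (the additive term $\fint_v(\kk_v+\Theta_v)$ being a sum of kernels each supported under one child), with $\dot\phi_u^*$ --- semi-admissible because $\phi_u^*$ is admissible --- in the role of $\phi_u^*$; inductively $\mu^{\Theta}_w(\vr)$ has the same exponent as $\mu^{\kk}_w(\vr)$. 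The step I expect to be the real obstacle is the dominance claim in the inductive step: showing that among all the monomials generated by iterating ``compose with $\phi^*$'' and ``average over patches'', the ones realising the minimum spanning tree are exactly the leading ones. This needs the \emph{sharp} single-sphere exponent $\sim p^{-r}$ (not merely $\lesssim$), which is where (semi-)admissibility of every activation is essential, together with a careful triangle-type inequality for MST lengths ensuring that any finer splitting of $\vr$ or any detour through extra nodes never helps --- all while keeping every implied constant independent of $d$ as $\Gd$, the multi-index $\vr$, and the truncation order $N$ vary.
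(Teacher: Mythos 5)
Your plan follows the same broad strategy as the paper---identify the tensor spherical harmonics $\bm\barY_{\vr,\vl}$ as eigenfunctions via the $\prod_{v}O(d_v)$-zonality of the kernel, then estimate the eigenvalue exponent by an induction from the leaves to $\rootg$---but it tracks a genuinely different quantity through that induction. You induct on the Mercer coefficient $\mu_w(\vr)$ of the partial kernel $\kk_w$, a quantity that already mixes the spatial and the frequency scalings. The paper's key Lemma~\ref{lemma:derivative-key} instead inducts on the mixed derivative $\kk_{\Gd}^{(\vr)}(\bm 0)$ (together with the companion $L^\infty$ bound on $\kk_{\Gd}^{(\vr+\ve_u)}$) and proves the frequency-free statement $\kk_{\Gd}^{(\vr)}(\bm 0)\sim d^{-\spatial(\vr)}$. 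The conversion to a Legendre/Mercer coefficient is done once, at the end, via the Rodrigues formula and integration by parts (Theorem~\ref{thm:2nd main}); the factor $d^{-\frequency(\vr)}$ falls out there from $c_\vr$, $\vN(\vd,\vr)$, and ratios of sphere surface areas, and the integral remainder is controlled by the $L^\infty$ part of the lemma. This keeps space and frequency disentangled, and, more importantly, keeps all spherical-harmonic algebra out of the induction.

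The concrete gap in your route is the one you flag, and it is not minor. After expanding $\phi_w^*\bigl(\tfrac1m\sum_i\kk_{v_i}\bigr)$ and extracting the degree-$\vr$ Mercer coefficient, the monomial $\prod_j\kk_{v_{i_j}}$ with all exponents equal to $1$ is a product over disjoint sub-products of spheres and factors cleanly; but every monomial with some exponent $\ge 2$ contains a self-product $\kk_v^n$ whose expansion in the $\bm\barY$ basis requires a Gegenbauer/Clebsch--Gordan linearisation of spherical-harmonic products on the \emph{same} factors. You need a two-sided bound: positivity gives you the lower bound from the ``clean'' monomial, but the matching upper bound requires showing, uniformly in $d$, that all the linearised self-product terms have exponent at least $\frequency(\vr)+\spatial_w(\vr)$. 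Invoking ``$t^{r+2}$ loses a factor of $p$'' is precisely the single-sphere instance of that upper bound, not a proof of it in the general recursive situation, and the MST subadditivity argument addresses only the spatial part of the exponent, not the harmonic linearisation part. The derivative route sidesteps this entirely: one chain-rule step pulls out $\dot\phi_\rootg^*\circ\pkk_\rootg$, the product rule turns the induction into a finite sum over splittings $\bar\vr_1+\bar\vr_2=\vr-\ve_u$ and children $v$, and the MST argument then gives both directions with no harmonic analysis inside the induction. I would either switch to the derivative induction, or first prove a standalone lemma bounding the $\vr$-Mercer coefficient of $\kk_v^n$ uniformly in $d$ before running yours.
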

The theorem says that the eigenfunctions of $\gK$ are  $\{\bm\barY_{\vr, \vl}\}$, whose eigenvalues are of order $\{d^{-\learning(\vr)}\}$, which is determined by the learning index. When the DAGs are more {\it regular}, e.g. the network architectures are the CNNs, MLPs in Sec.~\ref{sec:neural-computation}, then one can also prove that the dimension of eigenspaces with eigenvalues $\sim d^{-\learning(\vr)}$ is $\sim d^{\learning(\vr)}$ (see Lemma \ref{lemma:dimension}.)

Coupling with the observation in \myeqref{eq:linear-eigen-mode}, Theorem \ref{thm:main} implies that within $t\sim d^r$ amount of time for gradient flow, when $d$ is sufficiently large, only the eigenfunctions $\bm\barY_{\vr, \vl}$ with $\learning(\vr)\leq r$ can be learned. More precisely, let $\bm\sigma$ be the uniform (product) probability measure on $\bmx$ and denote $L^p(\bmx) \equiv L^p(\bmx, \bm\sigma)$. For $r\notin \learning(\Gd)$, define the projection operator to be 
\begin{align*}
    \bm P_{<r}(f) =
    \sum_{\vr: \learning(\vr)< r}\sum_{\vl\in \vN(\vd, \vr)}\langle f, \bm\barY_{\vr, \vl}\rangle_{L^2(\bmx)} \bm\barY_{\vr, \vl} \, , 
\end{align*}
and similarly for $ \bm P_{>r}$. Let $\bm \gF:L^2(\bmx)\to L^2(\bmx)$ be the solution operator associated to the kernel gradient descent
\begin{align}
    &\dot h = -\gK (h -f)  \quad \text{with initial value} \quad 
h_{t=0}=\bm 0
\end{align}
i.e. 
$h_t \equiv \bm \gF_t(f) \equiv ( { \text{\bf Id}} - e^{-\gK t}) f$, where  
\begin{align}
   ( { \text{\bf Id}} - e^{-\gK t}) f (\vx) 
    \equiv  \int_{\bar \vx 
    \in\gX} (1 - e^{-\gK(\vx, \bar \vx)t})  f (\bar \vx) d\bar \vx  
\end{align}
The following theorem says that when $t\sim d^r$,  $\bm \gF_t\approx \bm P_{<r}$. 
\begin{theorem}\label{thm:inf-data}
Assume 
{\bf 
Assumption-$\bm\gG$} and {\bf Assumption-}$\phi$ and $\gK = \kk_{\Gd}$ or $\Theta_{\Gd}$. Let $r\notin \learning(\Gd)$ and $t\sim d^r$. Then for $0<\epsilon< \inf\{|r-\bar r|: \bar r \in \learning(\Gd)\}$ and $f\in L^2(\bmx)$ with $\mathbb E_{\bm\sigma} f = 0$, we have 
\begin{align}\label{eq:gradient-flow-learnability}
    \|\bm \gF_t ({\bm P}_{<r} f )- {\bm P}_{<r} f \|^2_2 \lesssim e^{-d^{\epsilon}}\|{\bm P}_{<r} f\|_2^2
    \quad \textand 
    \quad \|\bm \gF_t ({\bm P}_{>r} f )- {\bm P}_{>r} f\|^2_2 \gtrsim e^{-d^{-\epsilon}}\|{\bm P}_{>r} f\|_2^2 \,.
\end{align}
\end{theorem}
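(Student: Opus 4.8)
The plan is to diagonalize every operator in the product spherical–harmonic basis $\{\bm\barY_{\vr,\vl}\}$ produced by Theorem~\ref{thm:main}, reduce the two estimates in \eqref{eq:gradient-flow-learnability} to elementary scalar bounds on $e^{-\lambda_\gK(\vr)t}$, and then read off the required decay of $\lambda_\gK(\vr)$ from the learning index. Concretely: since $\bm\sigma$ is the uniform product measure, $\{\bm\barY_{\vr,\vl}\}$ is a complete orthonormal system of $L^2(\bmx)$ that diagonalizes $\gK$ by Theorem~\ref{thm:main}; hence $\bm\gF_t=\Id-e^{-\gK t}$, $\bm P_{<r}$ and $\bm P_{>r}$ are simultaneously diagonal. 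Writing $f=\sum_{\vr,\vl}c_{\vr,\vl}\bm\barY_{\vr,\vl}$ and $c_\vr^2:=\sum_\vl c_{\vr,\vl}^2$, Parseval gives
\[
\|\bm\gF_t(\bm P_{<r}f)-\bm P_{<r}f\|_2^2=\sum_{\vr:\,\learning(\vr)<r}e^{-2\lambda_\gK(\vr)t}\,c_\vr^2,\qquad
\|\bm\gF_t(\bm P_{>r}f)-\bm P_{>r}f\|_2^2=\sum_{\vr:\,\learning(\vr)>r}e^{-2\lambda_\gK(\vr)t}\,c_\vr^2,
\]
where the $\vr=\bm0$ term drops out since $\E_{\bm\sigma}f=0$. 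Setting $\lambda_-:=\min\{\lambda_\gK(\vr):\vr\neq\bm0,\ \learning(\vr)<r\}$ and $\lambda_+:=\sup\{\lambda_\gK(\vr):\learning(\vr)>r\}$, the theorem reduces to showing $2\lambda_-t\ge d^{\epsilon}$ and $2\lambda_+t\le d^{-\epsilon}$ for all large $d$, since the first sum is then $\le e^{-d^\epsilon}\|\bm P_{<r}f\|_2^2$ and the second is $\ge e^{-d^{-\epsilon}}\|\bm P_{>r}f\|_2^2$.

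Next I would dispatch the ``$\bm P_{<r}$'' half, which is the easy one. Put $\gamma:=\inf\{|r-\bar r|:\bar r\in\learning(\Gd)\}>\epsilon$ (positive by hypothesis) and $\alpha_{\min}:=\min(\shapep\setminus\{0\})>0$. If $\learning(\vr)<r$ then $\alpha_{\min}|\vr|\le\frequency(\vr)\le\learning(\vr)<r$, so $|\vr|$ is bounded and Theorem~\ref{thm:main} applies with constants uniform over that range: $\lambda_\gK(\vr)\gtrsim d^{-\learning(\vr)}$. Since $\learning(\vr)\in\learning(\Gd)$ and $\learning(\vr)<r$ we get $\learning(\vr)\le r-\gamma$, hence $\lambda_-\gtrsim d^{-(r-\gamma)}$ and, with $t\gtrsim d^r$,
\[
2\lambda_-t\gtrsim d^{\gamma}=d^{\gamma-\epsilon}\cdot d^{\epsilon}\ge d^{\epsilon}\qquad(d\ \text{large}),
\]
the implied constant being absorbed because $\gamma-\epsilon>0$. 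This yields the first inequality in \eqref{eq:gradient-flow-learnability}, in fact with a factor far smaller than $e^{-d^\epsilon}$.

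For the ``$\bm P_{>r}$'' half I would split the supremum defining $\lambda_+$ at $|\vr|=M$ for a large constant $M$ to be fixed. On $\{|\vr|\le M\}$, Theorem~\ref{thm:main} gives $\lambda_\gK(\vr)\lesssim d^{-\learning(\vr)}\le d^{-(r+\gamma)}$ (now $\learning(\vr)\ge r+\gamma$), so $\lambda_\gK(\vr)t\lesssim d^{-\gamma}\le\tfrac12 d^{-\epsilon}$ for large $d$. On the tail $\{|\vr|>M\}$, where Theorem~\ref{thm:main} says nothing, I would fall back on an a priori trace bound: $\gK\succeq0$ has, for each $\vr$, an eigenspace of dimension $N(\vd,\vr):=\prod_{u\in\Ndzero}N(d_u,\vr_u)$ at eigenvalue $\lambda_\gK(\vr)$, so $\lambda_\gK(\vr)\le\Tr\gK/N(\vd,\vr)$, while $\Tr\gK=\fint_{\bmx}\gK(\bm\xi,\bm\xi)\,d\bm\sigma=\kk_{\rootg}(\bm1)=1$ for $\gK=\kk_{\Gd}$ (the normalization $\E_z\phi_u(z)^2=1$ propagates through the NNGP recursion) and $\Tr\gK=O(1)$ for $\gK=\Theta_{\Gd}$ by the same computation on the NTK recursion (bounded depth and in-degree). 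Since every active coordinate contributes a factor $N(d_u,\vr_u)\ge N(d_u,1)=d_u\gtrsim d^{\alpha_u}$ and $N(d_u,k)$ grows without bound in $k$, a short estimate should give $N(\vd,\vr)\gtrsim d^{\,r+\epsilon}$ once $|\vr|>M=M(r,\epsilon,\shapep)$, so that $\lambda_\gK(\vr)t\lesssim d^r/N(\vd,\vr)\lesssim d^{-\epsilon}$ on the tail as well. Combining the two ranges gives $2\lambda_+t\le d^{-\epsilon}$ for large $d$, hence the second inequality in \eqref{eq:gradient-flow-learnability}.

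The hard part will be exactly this tail control: Theorem~\ref{thm:main} pins down $\lambda_\gK(\vr)$ only for $|\vr|\lesssim1$, yet the ``$\bm P_{>r}$'' bound must hold for \emph{every} learnable $\vr$ with $\learning(\vr)>r$ — including multi-indices supported on arbitrarily many input coordinates, where nothing from the excerpt applies directly — so a supplementary decay estimate is unavoidable. Either one pushes through the trace argument above, for which making the threshold $M$ and the inequality $N(\vd,\vr)\gtrsim d^{r+\epsilon}$ explicit means separately treating ``spread-out'' $\vr$ (controlled by the $d_u$-factors, using $\sum_u d_u=d$ and $\alpha_u\ge\alpha_{\min}$) and ``concentrated'' $\vr$ (controlled by the growth of $N(d_u,k)$ in the degree $k$); or one invokes the fact that the \emph{upper} bound $\lambda_\gK(\vr)\lesssim d^{-\learning(\vr)}$ in Theorem~\ref{thm:main} actually holds for all $\vr$ (only the matching lower bound needs $|\vr|\lesssim1$), which makes the tail immediate via $\learning(\vr)\ge r+\gamma$. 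Everything else — positivity of the gap $\gamma$, uniformity of the constants of Theorem~\ref{thm:main} over bounded $|\vr|$, and the absorption of multiplicative constants into $d^{\pm\epsilon}$ (legitimate precisely because $\gamma>\epsilon$ strictly) — is routine bookkeeping.
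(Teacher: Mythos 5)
Your argument is the one the paper itself sketches (the one‑line remark after the theorem statement is the entire proof the paper offers), so both halves reduce correctly to pointwise bounds on $e^{-2\lambda_\gK(\vr)t}$ via the diagonalization of Theorem~\ref{thm:main}, with the $\vr=\bm 0$ mode absent because $\E_{\bm\sigma}f=0$. The $\bm P_{<r}$ half is clean: any $\vr$ with $\learning(\vr)<r$ has $|\vr|\le r/\alpha_{\min}$, so the lower bound from Theorem~\ref{thm:main} applies and the gap $\gamma>\epsilon$ absorbs constants. You are also right that the $\bm P_{>r}$ half has a tail issue the paper silently skips --- Theorem~\ref{thm:main} controls only $|\vr|\lesssim 1$, while the claim requires $\lambda_\gK(\vr)t\lesssim d^{-\epsilon}$ for \emph{every} $\vr$ with $\learning(\vr)>r$ --- and your trace‑bound fix does close it. To make it airtight: $\lambda_\gK(\vr)\le\Tr\gK/\vN(\vd,\vr)$ with $\Tr\kk_{\Gd}=\kk_{\Gd}(\bm 1)=1$ (and $\Tr\Theta_{\Gd}=O(1)$ by bounded depth); since $N(d_u,\cdot)$ is nondecreasing on $\{k\ge1\}$ for $d_u\ge2$, one need only estimate $\vN(\vd,\vr)$ at the threshold: if $|\nnode(\vr)|\ge M_1:=\lceil(r+\epsilon)/\alpha_{\min}\rceil+1$ then $\vN(\vd,\vr)\ge\prod_{u\in\nnode(\vr)}d_u\gtrsim d^{M_1\alpha_{\min}}\ge d^{r+\epsilon}$; otherwise some $\vr_{u_0}\ge M/M_1=:M_2$, and $\vN(\vd,\vr)\ge N(d_{u_0},M_2)\gtrsim d^{\alpha_{\min}M_2}/M_2!\ge d^{r+\epsilon}$ once $M_2>(r+\epsilon+1)/\alpha_{\min}$, both with $d$‑independent thresholds. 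Be careful with your stated alternative, though: the assertion that the upper bound $\lambda_\gK(\vr)\lesssim d^{-\learning(\vr)}$ of Theorem~\ref{thm:main} ``actually holds for all $\vr$'' is \emph{not} supported by the paper's proof --- the induction in Lemma~\ref{lemma:derivative-key} sums over all splittings $\bar\vr_1+\bar\vr_2=\bar\vr$ and its implied constants grow with $|\vr|$ --- so the trace argument, not that shortcut, is the route to take.
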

In words, in the infinite-training-data regime, within $t\sim d^r$ amount of time only the eigenfunctions $\bm \barY_{\vr, \vl}$ with $\learning(\vr)<r$ are learnable. The proof follows directly from the arguments in Sec.~\ref{sec:linear}: an eigenfucntion $
\bm Y_{\vr, \vl}$ is learnable if $e^{-t \lambda_\K(\vr)} \sim e^{- d^{r-\learning(\vr)}}$ is sufficiently small.   
Therefore, the learning index is the correct complexity measure that describes the compute efficiency mentioned in Sec.\ref{Sec: toy example}.

\subsection{Generalization: CNN without pooling}
Our next theorem concerns the finite-training-data regime, in which 
we will leverage a deep analytical result from \citet{mei2021generalization} (Sec. 3 Theorem 4). 
We restrict the architectures to D-CNNs \myeqref{eq:d-cnn-model} which contain S-CNNs and MLPs as special cases. This restriction could be relaxed by proving a dimension counting lemma similar to Lemma \ref{lemma:dimension} in the appendix.

For $X\subseteq \bmx$, define the regressor to be 
\begin{align*}
    \bm R_{X}(f)(x) = \gK(x, X) \gK(X, X)^{-1}f(X). 
\end{align*}
Let $\alpha_p, \alpha_k, \alpha_w\in [0, 1]$ and $\alpha_p\neq 0$. 
In what follows, we assume $p\sim d^{\alpha_p}$, $k\sim d^{\alpha_k}$ and $w\sim d^{\alpha_w}$ as $d\to\infty$. Thus the shape parameters are $\Lambda_{\bm \gG} = \{0, \alpha_p, \alpha_k, \alpha_w\}$.
\begin{theorem}\label{thm:generalization}
Let $\bm \gG = \{\Gd\}_d$, where each $\Gd$ is a DAG associated to a D-CNN \myeqref{eq:d-cnn-model}. Let $r\notin \learning(\Gd)$ be fixed. 
Assume {\bf Assumption-}$\phi$ and $\alpha_p>0$. 
Let $f\in L^2(\bmx)$ with $\mathbb E_{\bm\sigma} f = 0$. Then for $\epsilon >0$, 
\begin{align}\label{eq:generalization-finite-data}
\left|    \left\|\bm R_X(f) - f\right\|_{L^2(\bmx)}^2 - 
    \left\|\bm P_{>r}(f)\right\|_{L^2(\bmx)}^2 \right| 
    = c_{d, \epsilon} \|f\|_{{L^{2+\epsilon}(\bmx)}}^2, \quad 
\end{align}
where $c_{d, \epsilon} \to 0$ in probability as $d\to\infty$ over $X\sim\bm\sigma^{[d^r]}$. 
\end{theorem}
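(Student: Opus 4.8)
The plan is to reduce Theorem~\ref{thm:generalization} to a direct application of Theorem~4 of \citet{mei2021generalization}, using Theorem~\ref{thm:main} to verify the hypotheses of that result for the particular kernel $\gK = \kk_{\Gd}$ (or $\Theta_{\Gd}$) restricted to D-CNN DAGs. Recall the structure of the Mei--Misiakiewicz--Montanari result: if one has a kernel whose Mercer eigenbasis splits into a ``low-complexity'' part (spanned by finitely many eigenfunctions with eigenvalues bounded below) and a ``high-complexity'' part (eigenvalues much smaller), and if the low-complexity eigenspace has dimension $\ll$ (number of samples) $\ll$ dimension of the next block, together with a suitable hypercontractivity / concentration property of the eigenfunctions, then kernel ridge(less) regression with $n$ samples exactly fits the projection onto the low-complexity subspace and leaves the complement essentially untouched. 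Concretely, $\|\bm R_X(f)-f\|_2^2 \to \|\bm P_{>r}f\|_2^2$ provided $n = \lfloor d^r\rfloor$ sits strictly between the cumulative dimension of eigenspaces with $\learning(\vr)<r$ and that of eigenspaces with $\learning(\vr)>r$. So the proof is: (i) invoke Theorem~\ref{thm:main} to identify the eigenbasis $\{\bm\barY_{\vr,\vl}\}$ and the eigenvalue orders $d^{-\learning(\vr)}$; (ii) invoke (a D-CNN instance of) the dimension-counting Lemma~\ref{lemma:dimension} to show that the number of eigenfunctions with $\learning(\vr)\le s$ is $\sim d^{s}$; (iii) check the spectral-gap and eigenfunction-regularity hypotheses; (iv) quote \citet{mei2021generalization} Theorem~4 to conclude.

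First I would set up the block decomposition. Since $r\notin\learning(\Gd)$ and (by Assumption-$\bm\gG$, finitely many layers and bounded in/out-degrees, plus $\shapep$ finite) the learning indices $\learning(\vr)$ take values in a discrete set with no accumulation below any fixed level, there is a genuine gap: $\delta \equiv \inf\{|r-\bar r| : \bar r\in\learning(\Gd)\} > 0$. Let $V_{<r} = \Span\{\bm\barY_{\vr,\vl} : \learning(\vr)<r\}$ and $V_{>r}$ its orthocomplement in $L^2(\bmx)$ (equivalently the span of the $\bm\barY_{\vr,\vl}$ with $\learning(\vr)>r$; there are none with $\learning(\vr)=r$). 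By Theorem~\ref{thm:main}, every eigenvalue of $\gK$ on $V_{<r}$ is $\gtrsim d^{-r+\delta}$ and every eigenvalue on the ``immediately-next'' block is $\lesssim d^{-r-\delta}$, giving the polynomial spectral gap Mei et al.\ require. The dimension $\dim V_{<r} = \sum_{\learning(\vr)<r}|\vN(\vd,\vr)|$: using the D-CNN structure, $|\vN(\vd,\vr)| \sim \prod_u d_u^{\vr_u} \sim d^{\frequency(\vr)}$ and, summing over the $\vr$ realizing a given spatial index (a counting argument over subtrees of the bounded-size DAG), the total is $\sim d^{s^*}$ where $s^* = \max\{\learning(\vr):\learning(\vr)<r\} < r$; this is exactly Lemma~\ref{lemma:dimension} specialized to D-CNNs. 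Similarly the next block has dimension $\sim d^{s^{**}}$ with $s^{**}>r$. Hence $\dim V_{<r} \ll d^r = n \ll \dim(\text{next block})$, which is the sample-complexity sandwich.

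Next I would verify the eigenfunction-regularity condition. Mei--Misiakiewicz--Montanari need the eigenfunctions (and products relevant to the analysis) to satisfy a hypercontractivity estimate of the form $\|g\|_{L^4} \lesssim \|g\|_{L^2}$ uniformly for $g$ in the low-degree span, with the implied constant independent of $d$; this is what lets one control $\gK(X,X)$ and the cross terms via concentration. Here $\bm\sigma$ is a product of uniform measures on spheres $\overline\sphere_{d_u-1}$, and the $\bm\barY_{\vr,\vl}$ are products of spherical harmonics of bounded total degree $|\vr|\lesssim 1$. Spherical harmonics of bounded degree on $\sphere_{m-1}$ enjoy exactly such a dimension-free $L^q$-hypercontractivity (Gegenbauer/Funk--Hecke estimates; this is a standard ingredient also used in \citet{ghorbani2020linearized, mei2021generalization}), and tensorization over the bounded number $|\Ndzero|$ of input nodes preserves it. I would also need the technical nondegeneracy that the $\bm\barY_{\vr,\vl}$ with $\learning(\vr)<r$ are genuinely present with eigenvalue of the stated order --- this uses Assumption-$\phi$ (admissibility/semi-admissibility ensures the Hermite/Gegenbauer coefficients that produce these eigenfunctions do not vanish), so no cancellation kills a block. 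The error term $c_{d,\epsilon}\|f\|_{L^{2+\epsilon}}^2$ in \myeqref{eq:generalization-finite-data} is precisely the form of the remainder in \citet{mei2021generalization}, so once the hypotheses are matched the conclusion transfers verbatim.

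The main obstacle, I expect, is not the high-level reduction but making the hypotheses of \citet{mei2021generalization} Theorem~4 literally applicable: that theorem is stated for inner-product kernels on a single sphere (or on $\{\pm1\}^d$) with eigenspaces indexed by a single degree, whereas here the index set is the multi-index lattice $\mathbb N^{|\Ndzero|}$ with the non-standard grading $\learning(\vr) = \spatial(\vr)+\frequency(\vr)$, and the underlying space is a product of spheres of heterogeneous dimensions $d_u \sim d^{\alpha_u}$. So the real work is to repackage our spectral data into the abstract input of their theorem: define a single ``complexity'' grading on the eigenbasis via $\learning(\vr)$, verify that the two quantities their proof actually uses --- (a) eigenvalue ratios between the block at level $\le s^*$ and everything below level $r$, and (b) the $L^4$/$L^2$ norm ratio on $V_{<r}$ --- are controlled uniformly in $d$, and check that the ridgeless limit is well-defined ($\gK(X,X)$ invertible a.s., which follows from the hypercontractivity + dimension count since $n<\dim V_{<r}+\dim(\text{block at }s^{**})$ generically gives full rank on the relevant subspace). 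Once these are in place, the statement is immediate; the dimension-counting lemma for D-CNNs and the uniform hypercontractivity on products of bounded-degree spherical harmonics are the two self-contained pieces I would prove in full.
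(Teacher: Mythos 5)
Your proposal is essentially the paper's own proof: group the eigenspaces $E_i$ by learning index using Theorem~\ref{thm:main}, count their dimensions with Lemma~\ref{lemma:dimension} (giving the sandwich $m(d)\sim d^{r_j}\ll n(d)=d^{r}\ll d^{r_{j+1}}$), establish dimension-free hypercontractivity on bounded-degree spans via tensorized Beckner (Proposition~\ref{prop:hyper} in the paper), and verify Assumptions~4 and 5 of \citet{mei2021generalization} in order to invoke their Theorem~4. The obstacle you anticipate---that Mei et al.\ is stated only for an inner-product kernel on a single sphere---does not actually arise: the version the paper cites (their Sec.~3, Theorem~4, recapped in Sec.~\ref{sec: kernel concentraction}) is abstract, holding for any compact self-adjoint positive-definite operator on a probability space, so the multi-index grading $\learning(\vr)$ and the product-of-spheres input space plug in directly; and the one hypothesis you did not address explicitly (their Assumption~4(c), concentration of the tail kernel's diagonal) is trivially satisfied here because $\K_{d,>m(d)}(\bm\xi,\bm\xi)$ is deterministic on the product of unit spheres, equal to its expectation.
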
 
In words, with $[d^{r}]$ many training examples where $r\notin\learning(\Gd)$, the NNGP kernel and the NTK are able to learn all $\bm\barY_{\vr, \vl}$ with $\learning(\vr) < r$ but not any eigenfunctions with $\learning(\vr) > r$.  Therefore, the learning index is the correct complexity measure that describes the data efficiency mentioned in Sec.\ref{Sec: toy example}.  
\subsection{Generalization: CNN with global average pooling}
We discuss the benefits of pooling in this section. Let a CNN with and without a global average pooling (GAP) be defined as 
\begin{align}\label{eq:model-cnn-gap}
&\textbf{CNN+GAP}&&[\textit{Input}]\to [\textit{Conv}(p)\textit{-Act}]\to[\textit{Conv}(k)\textit{-Act}]^{\otimes L}\to[\textit{GAP}]\to[\textit{Dense}]\\
&\textbf{CNN+Flatten}&&[\textit{Input}]\to [\textit{Conv}(p)\textit{-Act}]\to[\textit{Conv}(k)\textit{-Act}]^{\otimes L}\to[\textit{Flatten}]\to[\textit{Dense}]
\end{align}
resp. Note that there isn't any activation after the GAP/Flatten layer. The DAGs associated to {\it CNN+GAP} and {\it CNN+Flatten} are identical, and the associated kernel and network computations in each layer are also identical but the last layer. 
Weight-sharing together with GAP induces translation invariance symmetry to the function spaces given by the networks, and one needs to modify the kernel computation in the last layer to take into account this symmetry group; see Sec.~\ref{Sec:gap}.

Our last theorem states that the GAP improves the data efficiency by a factor of $w$, the window size of the pooling, but does not improve the training efficiency in the infinite-training-data regime. The former is due to the dimension reduction effect of GAP in the eigenspaces, and the latter is because the GAP layer does not modify the eigenvalues of the associated eigenspaces.  

Let $\K_\sym = \kk_\sym$ or $\Theta_\sym$ be the NNGP kernel or NTK associated to \myeqref{eq:model-cnn-gap},  $L^p_\sym(\X)\leq L^p(\X)$ be the subspace of ``translation-invariant" functions, whose co-dimension is $w$, and $\{\barY^\sym_{\vr, \vl}\}$ be the basis elements in $L^2_\sym(\X)$. Let $\bm \gF^\sym$,
$\bm P^{\sym}$ and $\bm R^{\sym}$ be the solution operator, projection operator and regressor associated to $\K_\sym$, resp.
See Sec.~\ref{Sec:gap} for the definitions. 
\begin{theorem}[Informal]\label{thm:cnn+gap}
For the architectures defined as in by \myeqref{eq:model-cnn-gap}, we have 
\begin{enumerate} [label=(\alph*)]
\item Similar to Theorem \ref{thm:main}, we have
\begin{align}
    \gK_{\sym}(\bm\xi,\bm \eta) = \sum_{\vr\in \mathbb N^{|\Ndzero|}}
    \lambda_{\gK_\sym}(\vr) 
    \sum_{\vl\in \vN(\vd, \vr)} \bm\barY^\sym_{\vr, \vl}(\bm\xi) \bm\barY^\sym_{\vr, \vl}(\bm\eta),  
\end{align}
with 
\begin{align}\label{eq:eigen-decay}
    \lambda_{\gK_{\sym}}(\vr)\sim d^{-\learning(\vr)} \quad  \text{if} \,\, \vr\neq \bm 0\, \quad \text{and} \quad |\vr| \lesssim 1. 
\end{align}
    \item Theorem \ref{thm:inf-data} holds with $L^2(\X)$,  $\bm \gF$ and 
$\bm P$ replaced by $L^2_\sym(\X)$, $\bm \gF^\sym$ and 
$\bm P^{\sym}$, resp. 
\item \myeqref{eq:generalization-finite-data} holds with $L^p(\X)$,  $\bm R$ and 
$\bm P$ replaced by $L^p_\sym(\X)$, $\bm R^\sym$ and 
$\bm P^{\sym}$, resp and with $X\sim \bm\sigma^{[d^{r-\alpha_w}]}$ under the assumptions that all activations in the hidden layers are poly-admissible. 
\end{enumerate}
\end{theorem}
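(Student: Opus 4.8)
\textbf{Proof proposal for Theorem~\ref{thm:cnn+gap}.}

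The plan is to reduce everything to the already-established results for the flattening architectures (Theorems~\ref{thm:main}, \ref{thm:inf-data}, \ref{thm:generalization}) by carefully analyzing how the single modification---replacing \textit{Flatten-Dense} with \textit{GAP-Dense}---changes the kernel recursion in the last layer, and then invoking symmetry. First I would make the translation-invariance structure explicit. The weight-sharing in all convolutional layers together with the global average pooling means that $\nn_\gG$ commutes with the cyclic-shift action of the group $\mathbb Z_w$ (or $[w]$, acting on the spatial index of the layer before pooling, lifted to the leaves). Consequently $\gK_\sym(\bm\xi,\bm\eta)$ depends only on the orbit-averaged correlations, and $L^2_\sym(\bmx)$ is precisely the fixed-point subspace of the induced unitary action on $L^2(\bmx)$, which has codimension related to the number of non-trivial orbits. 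Since each basis element $\bm\barY_{\vr,\vl}$ is a product over leaves of spherical harmonics, the group averages these products over the orbit of $\vr$; the averaged functions $\bm\barY^\sym_{\vr,\vl}$ span $L^2_\sym$ and one obtains a bijection between symmetrized eigenspaces and $\mathbb Z_w$-orbits of multi-indices $\vr$. This establishes the reindexing needed for part (a).

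Next, for part (a) I would redo the last-layer kernel computation. Away from the pooling layer, the DAGs and recursions for \textit{CNN+GAP} and \textit{CNN+Flatten} coincide, so by Theorem~\ref{thm:main} the pre-pooling kernel $\kk_{L+1}$ (a $w\times w$ matrix-valued object, or rather the collection over the $w$ spatial outputs of the last conv layer) already has the eigen-decay $\lambda\sim d^{-\learning(\vr)}$ when restricted to each spatial slot. The flattening readout sums the $w$ slots \emph{before} the final dense layer without weight sharing, so it produces a direct sum of $w$ copies; the GAP readout instead averages the $w$ slots \emph{with} the shared structure, which is exactly the projection onto the diagonal (translation-invariant) component. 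Thus $\kk_\sym$ is the compression of $\kk_{L+1}$ to the invariant subspace: the eigenvalue attached to the orbit of $\vr$ is still $\sim d^{-\learning(\vr)}$ (the pooling neither shrinks nor inflates it, since it is an orthogonal projection acting within a fixed eigenspace), while the \emph{dimension} of that eigenspace drops by a factor $\sim w \sim d^{\alpha_w}$ because only one representative per orbit survives. This is the content of the ``dimension reduction by $w$'' claim and gives \eqref{eq:eigen-decay} for $\gK_\sym$. I expect this last-layer bookkeeping---precisely tracking that the GAP does not alter eigenvalues but does contract eigenspace dimension by the orbit size, uniformly in $d$---to be the main obstacle, since one must verify that the orbit sizes are genuinely $\sim w$ for the relevant low-complexity $\vr$ (i.e.\ that almost all orbits are free) and that the spectral gaps from {\bf Assumption-$\bm\gG$} survive the symmetrization.

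Given part (a), parts (b) and (c) are almost immediate. For (b), the gradient-flow argument of Section~\ref{sec:linear} is purely spectral: once $\gK_\sym$ has eigenvalues $\sim d^{-\learning(\vr)}$ on $L^2_\sym$, the same computation $e^{-t\lambda}\sim e^{-d^{r-\learning(\vr)}}$ yields \eqref{eq:gradient-flow-learnability} verbatim with $\bm\gF^\sym$, $\bm P^\sym$ in place of $\bm\gF$, $\bm P$; nothing about the ambient space matters beyond the eigenvalue asymptotics and the spectral gap, both of which transfer. For (c), I would invoke \citet{mei2021generalization} (Theorem 4) on the input space $\bmx$ restricted to the invariant subspace: the relevant ``effective dimension'' counting the number of eigenfunctions with $\learning(\vr)<r$ is now smaller by a factor $\sim d^{\alpha_w}$ because of the dimension reduction in (a), so the phase transition in the generalization error occurs at sample size $\sim d^{r-\alpha_w}$ rather than $\sim d^r$; the poly-admissibility hypothesis is exactly what is needed to ensure the finitely many relevant eigenvalues are genuinely nonzero and well-separated so that the hypotheses of \citet{mei2021generalization} apply after symmetrization. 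The only genuinely new ingredient beyond Theorems~\ref{thm:main}--\ref{thm:generalization} is the orbit-counting lemma controlling $\dim L^2_\sym$ eigenspaces, which would be proved by the same regularity-of-DAG argument underlying Lemma~\ref{lemma:dimension}.
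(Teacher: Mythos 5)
Your high-level picture is correct: GAP acts as an orthogonal projection within each eigenspace of the flatten kernel, preserving the eigenvalues $\lambda_{\gK_\sym}(\vr)\sim d^{-\learning(\vr)}$ while collapsing the eigenspace dimension by a factor $\sim w$, and the generalization threshold then shifts from $d^r$ to $d^{r-\alpha_w}$. However, both the mechanics of part (a) and a key technical step in part (c) differ from what the paper actually does, and one of your steps is a genuine gap.

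On part (a), you set up a $\mathbb Z_w$-orbit argument on multi-indices $\vr\in\mathbb N^{wk^L}$ and then worry about whether ``almost all orbits are free.'' The paper sidesteps this entirely: because the stride equals the filter size and there is no hidden layer after readout, the penultimate-layer kernel $\kkpen$ depends only on a single slot's correlation vector $\vt_{uv}\in[-1,1]^{k^L}$, so both readouts admit the explicit closed forms $\kk_{\text{CNN-Flatten}}(\vt)=\frac1w\sum_{v\in\penultimate}\kkpen(\vt_{vv})$ and $\kk_{\text{CNN-GAP}}(\vt)=\frac1{w^2}\sum_{u,v\in\penultimate}\kkpen(\vt_{uv})$. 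Expanding $\kkpen$ by Theorem~\ref{thm:main} with single-slot multi-indices $\vr\in\mathbb N^{k^L}$ immediately diagonalizes $\gK_\sym$ with eigenfunctions $\bm\barY^\sym_{\vr,\vl}=w^{-1/2}\sum_{u}\bm\barY_{\vr,\vl}(\bm\xi_u)$ and eigenvalues $\frac1w\lambda_{\kk_{pen}}(\vr)$, giving an exact factor-of-$w$ dimension reduction against the flatten case (one symmetric function per $(\vr,\vl)$ versus $w$ copies, one per slot). No orbit-counting or freeness argument is needed; the worry you flag is a non-issue in the paper's route.

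On part (c), the gap is real. You claim ``the only genuinely new ingredient beyond Theorems~\ref{thm:main}--\ref{thm:generalization} is the orbit-counting lemma controlling $\dim L^2_\sym$ eigenspaces.'' In the flatten case, verification of Assumption 4(c) from \citet{mei2021generalization} is trivial because the diagonal entries $\K_{d,>m(d)}(\bm\xi,\bm\xi)$ and the averaged second moments $\mathbb E_{\bar\vxi}\K_{d,>m(d)}(\bm\xi,\bar\vxi)^2$ are exactly deterministic (they reduce to sums of $\lambda\cdot\vN(\vd,\vr)$ since $\vt_{vv}=\bm1$). With GAP this breaks down: the kernel now depends on the off-diagonal correlations $\vt_{uv}$, $u\neq v$, which are random, so the diagonal quantities are genuine random variables. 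The paper has to prove a concentration bound for $\sup_{i\leq n(d)}|\Delta_i|$ using a Markov-plus-hypercontractivity argument, and the poly-admissibility assumption enters exactly here---to truncate $\kkpen$ and $\Theta_{pen}$ to polynomials of uniformly bounded degree so that the hypercontractive $L^q\to L^2$ comparison applies. Your stated reason for poly-admissibility (``to ensure the finitely many relevant eigenvalues are genuinely nonzero and well-separated'') is not what the assumption is for; eigenvalue separation already comes from Theorem~\ref{thm:main} and Lemma~\ref{lemma:dimension}. Without the concentration-of-diagonals argument the proposal does not close part (c).
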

Thus, the major benefit of GAP is the $w=d^{\alpha_w}$ improvement in terms of data efficiency. 
\subsection{Remarks}
Several remarks are in order. 
\begin{enumerate} 
    \item In terms of order of learning, our results say that  (infinite-width) neural networks progressively learn more complex functions, where complexity is defined to be the learning index $\learning(\vr)$. This (in the infinite-width setting) rigorously justifies several empirical observations that neural networks biases towards {\it simple} functions \citep{rahaman2019spectral, DBLP:journals/corr/abs-1905-11604, vall2019deep}. Regardless of architectures (MLPs vs CNNs), linear functions always have the smallest learning index\footnote{Ignoring the constant functions.} $\learning(\vr)=1$ and are always learned first, which was first proved in \citet{hu2020surprising} for MLPs. After learning the linear functions, the learning dynamics of MLPs and CNNs diverge. MLPs start to learn quadratic functions ($\learning(\vr)=2$, require $d^{2+}$ many training points), then cubic functions ($\learning(\vr)=3$) and so on. While for CNNs, $\learning(\vr)$ can be fractional numbers (e.g. $\learning(\vr)=5/4, 6/4$) and it is possible for the network to learn higher order functions (with shorter range of interactions) before lower order functions (with longer range of interactions). See Sec.\ref{Sec:learning, dags} and Sec.~\ref{sec:experiment} for more details.    
    \item Our results could be translated into the overparameterized network setting by coupling an NTK-style convergent argument (e.g., \citet{du2018provably, allen2018convergence-fc, zou2018stochastic}) with a standard Gronwall-type argument \citep{lee2019wide, chizat2019lazy}. However, the proof requires the learning rate of the gradient descent to be small and the number of channels be sufficiently large (polynomially in $d$ \citep{Huang2020OnTN}) so that the network remains close to the linearized regime \citep{lee2019wide, chizat2019lazy}, which is often unrealistic and undesirable. It remains open to prove similar results for networks with realistic widths.  
    \item Theorem \ref{thm:cnn+gap} states that {\it CNN+GAP} is better than {\it CNN+Flatten} in terms of data efficiency but not compute efficiency. In particular, when the dataset size is sufficiently large, {\it CNN+GAP} and {\it CNN+Flatten} perform equally well; see Sec.~\ref{sec:exp2} for empirical verification. Therefore, in the large (infinite) data set regime, {\it CNN+Flatten} might be preferable since the associated function class is larger.  

    \item 
Finally, to help understand the main results, we provide a (very) conceptual explanation of the eigenspace restructuring theorem and the follow-up theorems by drawing an analogy between eigenstructures of kernels and the internal structures of high-rise buildings. In the baseline case, a ``MLP building" is a building in which each floor is a single giant room (``no spatial structure'') that contains only one stair to the floor above. Moreover, the stairs on any two consecutive floors are always located at the two different ends of the building (bad architecture design!). As such, within a fixed amount of time, one can only visit (``learn'') lower floors (``eigenspaces''). In contrast, in a ``D-CNN building'', each floor is restructured to have more rooms (``subspaces") and more {\it stairs} that are evenly distributed. Technically speaking, with the same budget of time, one can only visit one of (a) all rooms on lower floors (``long-range-low-frequency interactions'',) (b) rooms on higher floors which are horizontally close to the entrance (``short-range-high-frequency interactions",) and (c) any one interpolation between them (``median-range-median-frequency interactions''.) Indeed, one can visit all of (a), (b), and (c) in a ``D-CNN building'' because the ``horizontal distance'' and the ``height'' are in log scales. Finally, a ``HR-CNN building`` is even more finely restructured to have more rooms and stairs. 
\end{enumerate}

\section{Interpretation of the main results}\label{sec: interpretation}
We say $r$ is the budget index if (1) ({\bf Finite Training Set}) the training set size $m\sim d^{r}$, or (2) ({\bf Finite Compute Time}) the training set $X=\bmx$ and the total number of training steps/time $t\sim d^{r}$. 
\paragraph{MLPs and {\color{color5}LRLF Interactions} Fig.~\ref{fig:high-level-picture} (a).}
Each $\Gd$ is a linked list. Let $v$ be the input node. Clearly, we have $d_v= d$, i.e., $\alpha_v=1$ and $\alpha_u=0$ (since $\deg(u)=1$) for all other nodes $u$. As such 
\begin{align}
\Ad(\Gd) = \mathbb N\backslash\{0\}\, , \,\, 
    \frequency(\vr) = |\vr|, \,\, \spatial(\vr)=0 ,\,\,  \learning(\vr) = |\vr| \,\,
    \text{and} \,\, \learning({\Gd}) = (|\vr|: \vr\in\mathbb N\backslash\{0\}). 
\end{align}
There isn't any {\it spatial} structure since $\spatial(\vr)=0$. 
Given a budget index $r$, the kernels can only learn $\Span\{\bm\barY_{\vr, \vl}: \vr<r \, ,\,\vr \in 
\mathbb N\backslash\{ 0\}\}$ and MLPs are good at modeling LRLF interactions. Note that infinite-width MLPs require $\sim d^{l}$ many training points / SGD steps to learn any degree $l$ polynomials, which quickly becomes infeasible as $d$ becomes large, e.g. when $d=224^2\times 3$ (ImageNet) and $l=3$, $d^3\sim 10^{18}$. Thus MLPs suffer from the curse of dimensionality.   

\paragraph{S-CNNs and {\color{color1}SRHF Interactions} Fig.~\ref{fig:high-level-picture} (b).}
For one-hidden layer convolutional networks, we have $d = p\times k^0\times w$, (i.e. $L=0)$. The number of input nodes is $w\equiv d^{\alpha_w}$ and for each input node $v$ we have $d_v = p\equiv d^{\alpha_p}$. We have $\alpha_p+\alpha_w=1$. Since the activation function of the last layer is the identity function, there is no non-linear interactions between different patches and $\Ad(\Gd) = \{\vr \in\mathbb N^{|\Ndzero|}\backslash\{\bm 0\}, |\nnode(\vr)|= 1\}$. Therefore for $\vr\in \Ad(\Gd)$, 
\begin{align}
   \spatial(\vr) = \alpha_w\, ,  \quad \frequency(\vr) = |\vr|\alpha_p, \quad \learning_{{\gG}} =\{ \alpha_w + |\vr|\alpha_p, \vr\in \Ad(\Gd)\}
\end{align}
Given a budget index $r$, the learnable $\vr$ are the ones $\vr\in\Ad(\Gd)$ with 
\begin{align}\label{eq:short-range-interaction}
  r > \alpha_w + |\vr|\alpha_p = 1 + (|\vr|-1) \alpha_p \Rightarrow |\vr| < 1 + (r -1) / \alpha_p \,. 
\end{align}
When $\alpha_p=1$ (and thus $\alpha_w=0$), this S-CNN is essentially a shallow MLP and we have $|\vr|<r$. On the other hand, if $\alpha_p$ is small (say $\alpha_p=0.1$), this network can model interactions with much higher frequencies at the cost of giving up long-range interactions. Therefore, there is a trade-off between space and frequency, and S-CNNs with small patch sizes are good at modeling SRHF interactions. That is to say, by choosing the patch size small, S-CNNs can break the curse of dimensionality (learning certain very high frequency interactions) at the cost of dramatically reducing the networks' expressivities. 

\paragraph{D-CNNs and {\color{color1}Inter}{\color{color3}ter}{\color{color5}polation} 
Fig.~\ref{fig:high-level-picture} (c).} Recall that $d=p \times k^L \times w$. Let $p = d^{\alpha_p}, k=d^{\alpha_k}$ and $w=d^{\alpha_w}$. Then we have the constraint $\alpha_p + L \alpha_k + \alpha_w=1$.
The learnable terms are the ones with 
\begin{align}\label{eq:deep-cnn-budget}
r > \spatial(\vr) + \frequency(\vr) = 
  \spatial(\vr) + |\vr| \alpha_p . 
\end{align}
D-CNNs can simultaneously model interpolations (including the two ends) between LRLF and SRHF, i.e., MRMF interactions. 
Consider two extreme cases. (i.) $|\nnode(\vr)|=1$, i.e. there is an input node $v$ s.t. $\vr_v=|\vr|$ and thus the non-linear interaction happens within one patch. In this case, the length of the MST reaches its infimum $\spatial(\vr)= \alpha_w + L\alpha_k=1-\alpha_p$ and \myeqref{eq:deep-cnn-budget} implies $|\vr| < (r-1)/\alpha_p + 1$, which is exactly \myeqref{eq:short-range-interaction}. Thus D-CNNs can model SRHF interactions.
(ii.) $|\nnode(\vr)|=|\vr|$, i.e. $\vr_v=0$ or $1$ for all $v\in\Ndzero$. Then $\spatial(\vr) = |\vr| (1-\alpha_p)$ and \myeqref{eq:deep-cnn-budget} implies $|\vr|<r$, which is the constraint for MLPs. In particular, D-CNNs can model LRLF interactions; see ${\bf\color{plot_8}{Y_5}}$. By varying $|\vr|$ and then varying $|\nnode(\vr)|$ in $[1, |\vr]$, D-CNNs can model various interpolating interactions between LRLF and SRHF.

\paragraph{HR-CNNs {\color{color6}Extra}{\color{color0}polations} {and} {\color{color2}Finer} {\color{color4}Interpolations} Fig.~\ref{fig:high-level-picture} (d).} The resolution of the learning indices $\learning(\mathcal \gG^{(d)})$ can be improved by decreasing $\alpha_k$, $\alpha_p$ and increasing $L$ accordingly. E.g., changing $\alpha_p\to\frac {\alpha_p} 2$, $\alpha_k\to\alpha_k/2$ and doubling the number of convolutional layers accordingly,  
then the resolution of the range of spatial/frequency/learning indices is doubled. 
This empowers the network to model finer-grained interpolating modes and extrapolating modes. 
E.g., the last equation in \myeqref{eq:short-range-interaction} becomes $|\vr| < 1 + 2(r-1)/\alpha_p$ (almost doubles the upper bound of $|\vr|$) and the network can additionally model
{\bf\color{color0}Ultra-Short-Range-Ultra-High-Frequency} interactions (see {\color{plot_5} ${\bf Y_5^*}$} in Fig.~\ref{fig:learning-eigen-modes}) without sacrificing its expressivity, which isn't the case for S-CNNs due to the space-frequency trade-off. 
We refer to such networks as high-resolution CNNs (HR-CNNs). For practical networks, e.g., ResNet \citep{he2016deep}, the filters/patches are already quite small, and there isn't much room to reduce them. Equivalently, one increases the resolution of the input images instead, i.e., increasing $d$. From this point of view, HR-CNNs and, therefore, our theorems justify the additional performance gain from model scaling, e.g. EfficientNet \citep{tan2019efficientnet}.     

In Sec.~\ref{Sec:learning, dags}, we provide several concrete examples to illustrate how to compute the spatial/frequency/learning indices. Overall, finer network architectures imply finer eigenstructures, which imply better performance in terms of compute and data.    

\section{Experiments}\label{sec:experiment}
There are many practical consequences due to the above theorems. For Theorem \ref{thm:main} and Theorem \ref{thm:generalization}, 
we focus on two of them which are about {\it the impact of architectures to learning / generalization}: 
\begin{enumerate}
    \item {\bf Order of Learning ({Fig.~\ref{fig:high-level-picture} \color{plot_green}{Green Arrow}}.)} The order of learning is restructured from frequency-based (MLPs) to space-and-frequency-based (CNNs). 
    \item {\bf Learnability
    ({Fig.~\ref{fig:high-level-picture} Cells under the budget line.)}} With the same budget index, MLP-Learnable $\subsetneq$ D-CNN-Learnable $\subsetneq$ HR-CNN-Learnable. Moreover, the set differences between these learnable sets are captured as in Sec.\ref{sec: interpretation}. 
\end{enumerate}
For Theorem \ref{thm:cnn+gap}, we focus on the training and data efficiency of {\it CNN+GAP} (GAP for short) and {\it CNN+Flatten} (Flatten for short): 
\begin{enumerate}
    \item When the dataset size is sufficiently large, GAP and Flatten are equally efficient.
     \item When the dataset size is relatively small compared to the learning index, GAP is more data-efficient.
\end{enumerate}

Overall, we see excellent agreements between predictions from our theorems and experimental results from both practical-size networks and kernel methods using NNGP/NT kernels, even when $d=256$ is moderate-size. We detail the setup, results, corrections, etc., for the experiments below. 

\subsection{Experimental Setup}\label{sec:setup}
Set $d=p^4$ and the input $\bm \X = (\spbar)^{p^3} \subseteq \mathbb R^{p^4}$, where $p\in\mathbb N$. Note that $\alpha_p=1/4$. The task is learning a function $Y\in L^2(\bmx)$ by minimizing the MSE, where 
\begin{align}
    { Y} = {\bf\color{plot_1}Y_1} + 
    {\bf\color{plot_2}Y_2} + 
    {\bf\color{plot_3}Y_3} + 
    {\bf\color{plot_4}Y_4} + 
    {\bf\color{plot_8}Y_5} + 
    {\bf\color{plot_5}Y_5^*} + 
    {\bf\color{plot_6}Y_6} +
    {\bf\color{plot_7}Y_7}  
\end{align}
and each eigenfunction $Y_i$ is a normalized so that $\|Y_i\|_2^2=\|Y\|_2^2/8=1$. See Sec.~\ref{Sec:learning, dags} for the expressions of these functions. 

We optimize finite-width networks by {\it SGD+Momentum} and infinite-width networks (NNGP and NTK) by kernel regression. We investigate three types of architectures: (1) $\text{MLP}^{\otimes 4}$, a four hidden layer MLP; (2) $\text{Conv}(p^2)^{\otimes 2}$, a ``deep" CNN with filter size/stride $k=p^2$, and (3) $\text{Conv}(p)^{\otimes 4}$, a ``HR"-CNN with filter size/stride $k=p$. See Fig.\ref{fig:high-level-picture} for a visualization of the associated DAGs. There is an activation $\phi$ in each {\it hidden} layer, which is chosen so that $\phi^*$ is the Gaussian kernel. For the CNNs, the readout layer(s) is {\it Flatten-Dense-Act-Dense}. We carefully chose the eigenfunctions $\{Y_i\}$ so that they cover a wide range of space-frequency combinations $(\spatial(Y_i), \frequency(Y_i))$ w.r.t. $\text{Conv}(p)^{\otimes 4}$. Under $\text{Conv}(p)^{\otimes 4}$, the corresponding learning indices are $\learning(Y_i) = \spatial(Y_i) + \frequency(Y_i) = 3\alpha_p + i\alpha_p=(3+i)/4$. For the learning indices of $Y_i$ under $\text{Conv}(p^2)^{\otimes 2}$ or $\text{MLP}^{\otimes 4}$, see the legends in Fig.\ref{fig:learning-eigen-modes}. 
The purpose of doing so is to create a ``separation of learning" under $\text{Conv}(p)^{\otimes 4}$, since in the large $p$ limit, learning $Y_i$ requires $d^{(3+i)/4 +\epsilon}$ examples/SGD steps. The relation between architectures and learning indices can be ``seen" in Fig.~\ref{fig:nn-demonstration} for { \color{plot_2}${\bf Y_2}$}, which is short-range-low-frequency ( $\deg({\color{plot_2}{\bf Y_2}})=2$). Fig.~\ref{fig:nn-demonstration} (d) plots the test residual of { \color{plot_2}${\bf Y_2}$} vs training set size for NTK regression, as one example to showcase the relation among architectures, learning indices and generalization. See Fig.~\ref{fig:nn-demonstration-2} in the appendix for other eigenfunctions.  

In the experiments, the width/number of channels is set to 2048/512 for MLPs/CNNs. We sample $m_t=32\times 10240$ ($m_v=10240$) data points randomly from $\bmx$ as training (test) set with $p=4$ and $d=256$. 
The SGD training configurations (batch size (=10240), learning rate (=1.), momentum (=0.9) etc.) are identical across architectures. To compute the kernels, we rely crucially on {\it NeuralTangents} \citep{neuraltangents2020} which is based on {\it JAX} \citep{jax2018github}. 

\subsection{Experimental Results I: Architectures vs Learnability.}
In Fig.\ref{fig:learning-eigen-modes}, for each eigenfunction $Y_i$, we plot $\frac 1 2 \E|\hat Y_i(x, t) - Y_i(x)|_2^2$ against $t$, where $\hat Y_i(x, t)$ is the projection of the prediction onto $Y_i$ and $t$ is either the training steps (SGD) or training set size (kernels). The expectation is taken over the test set. 
The budget index $r=\log(m_t)/\log(d)\approx 2.28$. As $d=256$ ($p=4$) is far from the asymptotic limit, we expect $r=2.28$ being a {\it soft} cut-off between learnable and non-learnable indices. Although the theorems assume $d, p\to\infty$, they do provide good predictions even when $d$ and $p$ are far from $\infty$.
We summarize several key observations below. 
\begin{enumerate}
    \item {\bf $\text{MLP}^{\otimes 4}$} (1st Row.) This architecture can capture all low-frequency interactions ($\deg=1,2$, ${\bf\color{plot_1}Y_1}$, ${\bf\color{plot_2}Y_2}$, ${\bf\color{plot_3}Y_3}$, ${\bf\color{plot_8}Y_5}$) but fail to learn $\deg \geq 3$ interactions, as expected. For MLPs, making the network deeper won't improve its learnability much; see Fig.~\ref{fig:mlp-2} in the appendix. 
    
    \item {\bf $\text{Conv}(p^2)^{\otimes 2}$} (2nd Row.)
    Learning curves of ${\bf\color{plot_2}Y_2}$/${\bf\color{plot_3}Y_3}$ are separated from ${\bf\color{plot_8}Y_5}$ because the spatial indices of them are different. Higher-frequency ($\deg=3, 4$) shorter-range interactions (${\bf\color{plot_4} Y_4}$, {$\bf\color{plot_6} Y_6$}) become (partially) learnable. Note that $\gL({\bf\color{plot_4} Y_4})=\frac{8}{4} < r\approx 2.28$, $\gL({\bf\color{plot_6} Y_6})=\frac{10}{4}<r\approx 2.28$. 
    
    \item {\bf$\text{Conv}(p)^{\otimes 4}$} (3rd Row). We see $\gL(Y_i)$ capture the order of learning very/reasonably well in the kernel/SGD setting. To test the ability of $\text{Conv}(p)^{\otimes 4}$ in modeling ultra-short-range-ultra-high-frequency interactions, we trace the learning progress of ${\bf\color{plot_5} Y_5^*}$ ($\deg({\bf\color{plot_5} Y_5^*})=5, \gL({\bf\color{plot_5}Y_5^*})=\frac 84$.)
    As expected, while other architectures completely fail to make progress, the NTK/NNGP of $\text{Conv}(p)^{\otimes 4}$ makes good progress and the SGD even completes the learning process. Interestingly, ${\bf\color{plot_8}Y_5}$\footnote{Ultra-Long-Range-Low-Frequency under $\text{Conv}(p)^{\otimes 4}$, with $\deg({\bf\color{plot_8}Y_5})=2$ and $\gL({\bf\color{plot_8}Y_5})=8/4=\gL({\bf\color{plot_5}Y_5^*})$ } is learned faster than ${\bf\color{plot_5} Y_5^*}$ in the kernel setting but slower in the SGD setting (even slower than $\gL({\bf\color{plot_6}Y_6})=\frac {10}4$), which is unexpected.
    We suspect it might be due to certain ``implicit" effects of SGD. Further investigation is needed to understand it.
    
\end{enumerate}

\begin{figure}[t]
    \centering
    \includegraphics[width=0.9\textwidth]{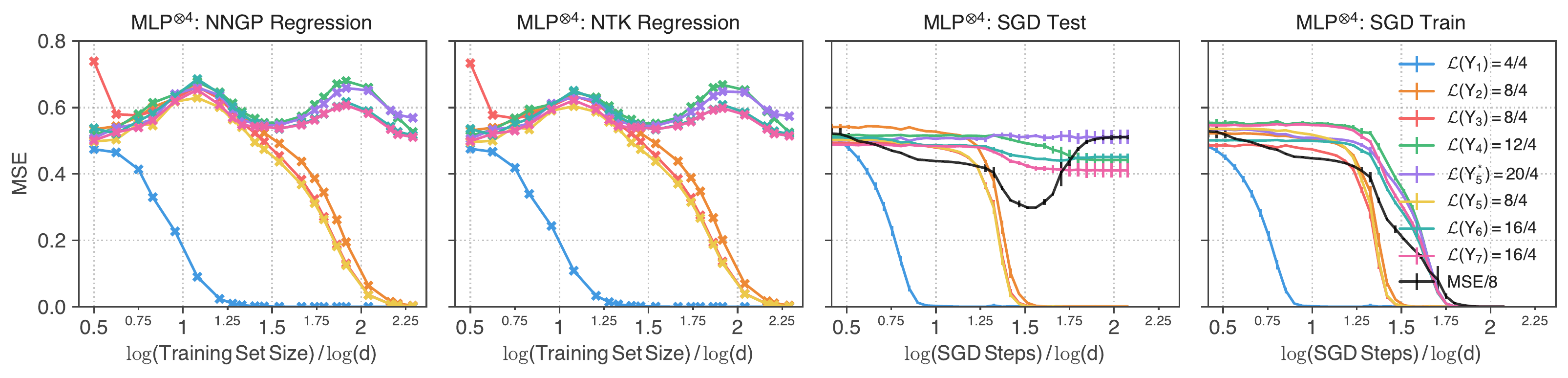}
    \\
    \includegraphics[width=0.9\textwidth]{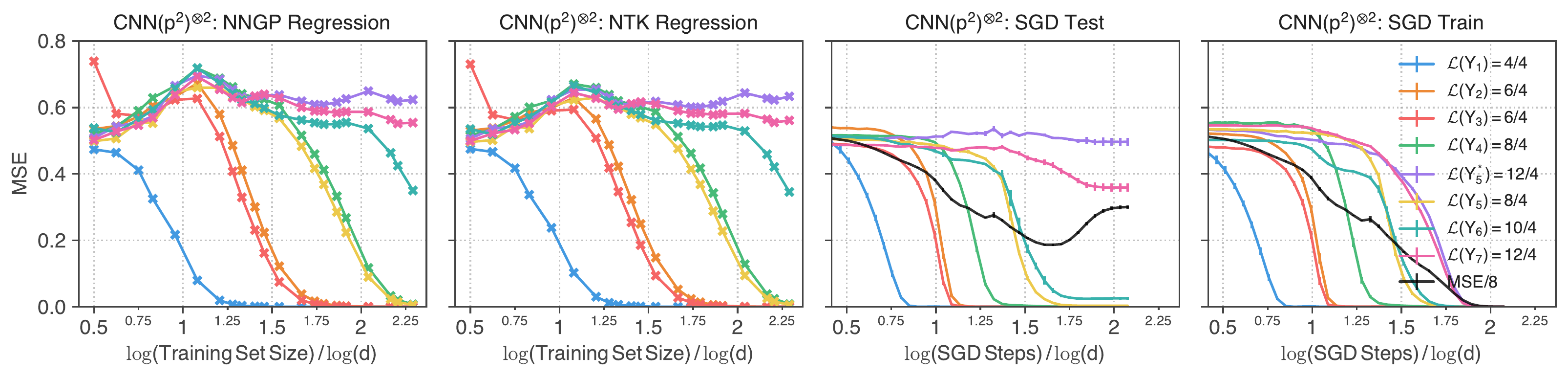}\\
    \includegraphics[width=0.9\textwidth]{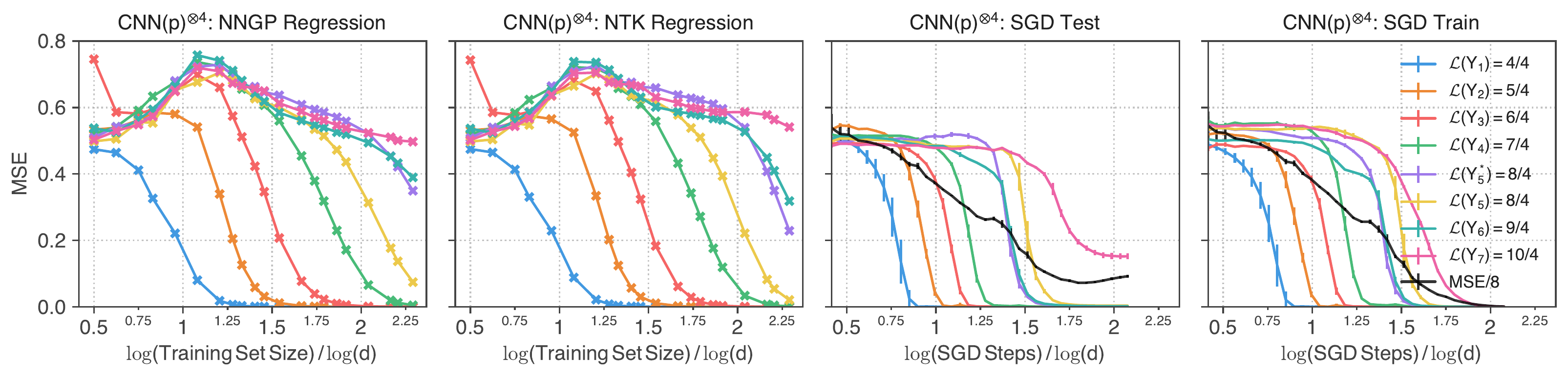}
    \caption{{\bf Learning Dynamics vs Architectures vs Learning Indices.} We plot the learning/training dynamics of each eigenfunction $Y_i$. From top to bottom: a 4-layer MLP $\text{MLP}^{\otimes 4}$, a 2-layer CNN $\text{Conv}(p^2)^{\otimes 2}$ and a 4-layer CNN $\text{Conv}(p)^{\otimes 4}$. From left to right: residual MSE (per eigenfunction) of NNGP/NTK regression, test/training MSE of SGD. The learning indices of $Y_i$ in each architecture are shown in the legends.} 
    \label{fig:learning-eigen-modes}
\end{figure}

\subsection{Experimental Results II: GAP vs Flatten.}\label{sec:exp2}
We compare the SGD learning dynamics of two convolutional architectures: $\text{Conv}(p)^{\otimes 3}\text{-Flatten}$ and $\text{Conv}(p)^{\otimes 3}\text{-GAP}$. The experimental setup is almost the same as that of Sec.~\ref{sec:setup} except the eigenfunctions $\{Y_i\}$ are chosen to be in the RKHS of the NNGP kernel/NTK of $\text{Conv}(p)^{\otimes 3}\text{-GAP}$ and thus of $\text{Conv}(p)^{\otimes 3}\text{-Flatten}$, i.e., they are shifting-invariant (the invariant group is of order $p$). 
Moreover, we still have $\learning(Y_i)=(i+3)/4$. For each $Y_i$, we plot the validation MSE of the residual vs SGD steps in Fig.~\ref{fig:gap_vs_flatten}. Overall, the predictions from Theorem \ref{thm:cnn+gap} gives excellent agreement with the empirical result. With training set size $m_t=32\times 10240$ Fig.~\ref{fig:gap_vs_flatten} (a), the residuals of $Y_i$ for GAP and Flatten are almost indistinguishable from each other for $i\leq 6$ (recall that $\learning({\bf\color{plot_6}Y_6})=9/4=2.25 < r\approx2.28)$. However, when $i=7$ the dataset size ($r\approx 2.28$) is relatively small compared to the learning index ($\learning({\bf\color{plot_7}Y_7})=2.5$), 
GAP outperforms Flatten in learning ${\bf\color{plot_7}Y_7}$.  In Fig.~\ref{fig:gap_vs_flatten} (b), we increase the training set size by a factor of 4 to $m_t= 4\times 32\times 10240\approx d^{2.53}$. We see that the dynamics of ${\bf\color{plot_7}Y_7}$ between GAP and Flatten become indistinguishable.  
\begin{figure}[h]
    \centering
    \begin{subfigure}[b]{0.45\textwidth}
    \centering    
    \includegraphics[width=\textwidth]{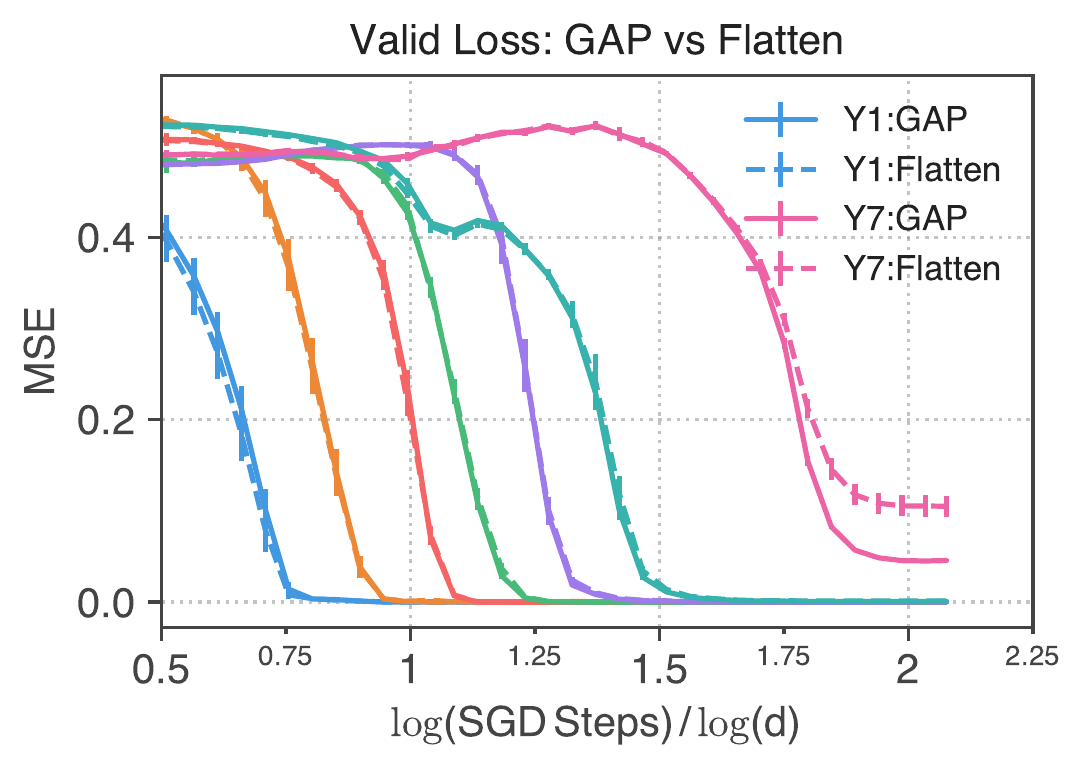}
    \caption{Training Set Size: $32 \times 10240$}
    \end{subfigure} 
        \begin{subfigure}[b]{0.45\textwidth}
    \centering    
    \includegraphics[width=\textwidth]{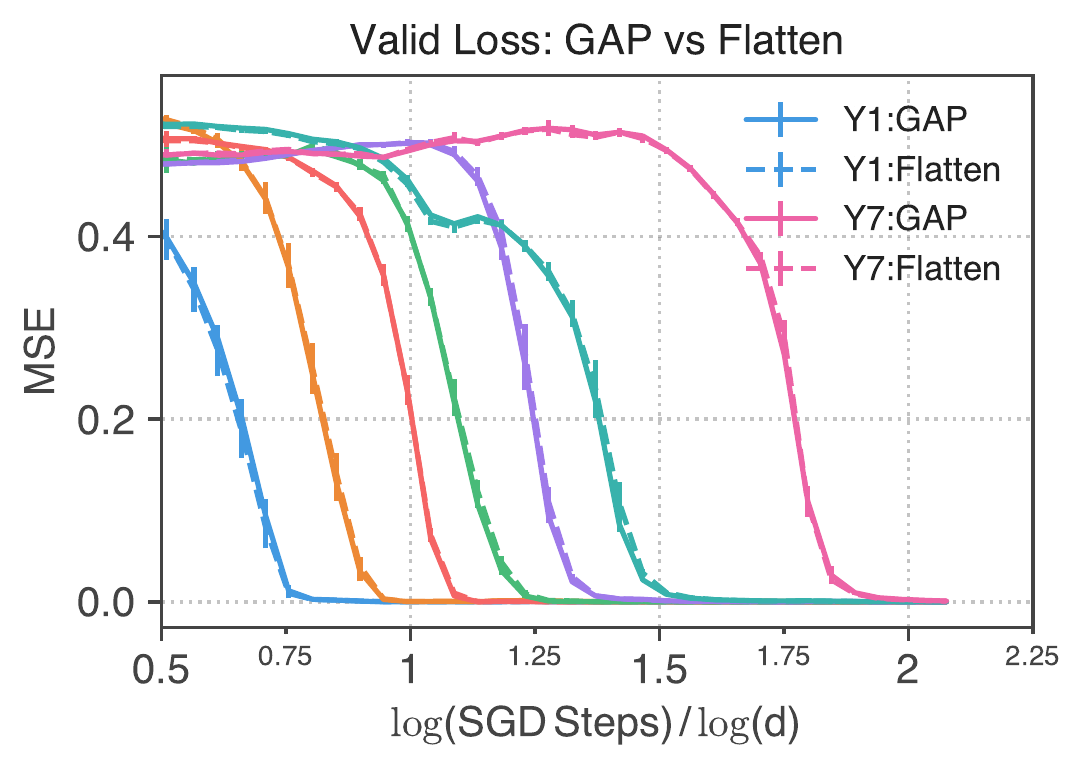}
    \caption{Training Set Size: $128 \times 10240$}
    \end{subfigure} 
    \caption{{\bf Learning Dynamics: GAP vs Flatten.} We plot the validation MSE of the residual of each $Y_i$ ({\bf\color{plot_1}{left}} $\to$ {\bf\color{plot_7}right}: $i=1\to 7$) for GAP (Solid lines) and Flatten (Dashed lines). The mean/std in each curve is obtained by 5 random initializations. Left: when training set size is $32\times 10240$, the residual dynamics of GAP and Flatten are almost indistinguishable for $Y_i$ with $i\leq 6$. But GAP outperforms Flatten when learning ${\bf\color{plot_7}Y_7}$.
    Right: the training set size is increased by a factor of $4$ and the learning dynamics of ${\bf\color{plot_7}Y_7}$ become identical for GAP and Fatten.
    } 
    \label{fig:gap_vs_flatten}
\end{figure}

To test the robustness of the prediction from Theorem \ref{thm:cnn+gap} (b) on more practical datasets and models, we perform an additional experiment on ImageNet~\citep{5206848} using ResNet~\citep{he2016deep}. We compare the performance of the original ResNet50, denoted by ResNet50-GAP, and a modified version ResNet50-Flatten, in which the GAP readout layer is replaced by Flatten. We use the ImageNet codebase from FLAX\footnote{\hyperlink{https://github.com/google/flax/blob/main/examples/imagenet/README.md}{https://github.com/google/flax/blob/main/examples/imagenet/README.md}}\citep{flax2020github}. 
In order to see how the performance difference between ResNet50-GAP and ResNet50-Flatten evolves as the training set size increases, we make a scaling plot, namely, we vary the training set sizes\footnote{Standard data-augmentation is applied for each $m_i$; see \hyperlink{https://github.com/google/flax/blob/main/examples/imagenet/input_pipeline.py}{input\_pipeline.py}.} $m_i = [m \times 2^{-i/2}]$ for $i=0, \dots, 11$, where $m=1281167$ is the total number of images in the training set of ImageNet.
The networks are trained for 150 epochs with batch size 128. 
We plot the validation accuracy and loss (averaged over 3 runs) as a function of training set size $m_i$ in Fig.~\ref{fig:imagenet}. Overall, we see that the performance gap between ResNet50-GAP and ResNet50-Flatten shrink substantially as the training set size increases. E.g, 
using 1/8 of the training set (i.e. $i=6$), the top 1 accuracy between the two is 19.3\% (57.7\% GAP vs 38.4\% Flatten). However,  
with the whole training set (i.e., $m_0$), this gap is reduced to 2\% (76.5\% GAP vs 74.5\% Flatten). To demonstrate the robustness of this trend,  
we additionally generate the same plots for ResNet34 and ResNet101; see Fig.~\ref{fig:imagenet-34-101} in the Appendix. 
\begin{figure}[ht]
    \centering
    \includegraphics[width=0.9\textwidth]{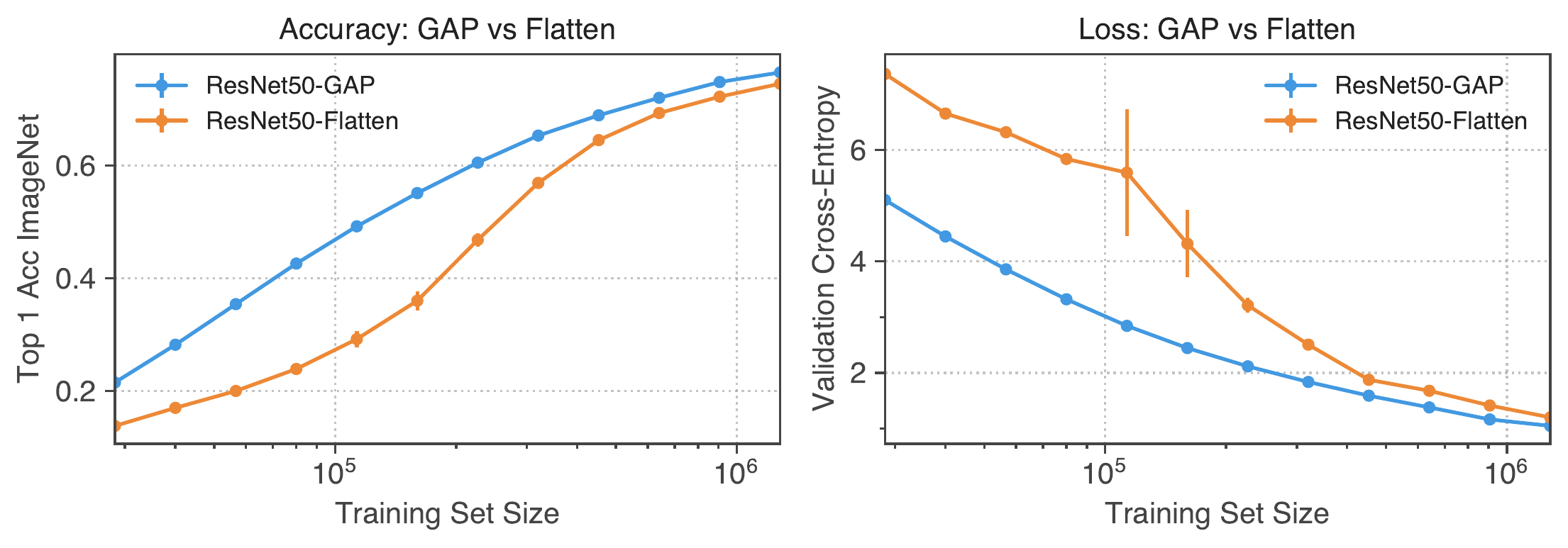}
    \caption{{\bf ResNet50-GAP vs ResNet50-Flatten.} As the training set size increases the performance (accuracy and loss) gap between the two shrinks.}
    \label{fig:imagenet}
\end{figure}

\section{Additional Related work}\label{sec:related} 
\citet{bietti2021approximation} studies the approximation and learning properties of CNN kernels via the lens of RKHS. Impressively, they demonstrate that a 2-layer CNN kernel can reach 88.3\% validation accuracy on CIFAR-10, matching the performance of a 10-layer Myrtle kernel \cite{Shankar2020NeuralKW}. 
\citet{DBLP:journals/corr/abs-2010-01369} and \citet{DBLP:journals/corr/abs-2010-08515} study the algorithmic benefits of shallow CNNs and show that they outperform MLPs in certain tasks. \citet{xiao2021what} and \citet{favero2021locality} study the benefits of locality in S-CNNs and argue that localities are the key ingredient to defeat the curse of dimensionality. However, the models they studied are essentially a linear combination of one-layer MLPs, each of which is defined on a batch. The models have very limited expressivity and can not capture any nonlinear interactions between different patches. \citet{mei2021learning} and several papers mentioned above also study the benefits of pooling in (S-)CNNs in terms of data efficiency. \citet{mei2021learning} is the first to rigorously prove the benefits and limitations of pooling when the patch size is equal to the spatial dimensions. Their conclusion is similar to that of Theorem \ref{thm:cnn+gap} (c): pooling improves data efficiency by a factor of the pooling size. In addition, we show that (Theorem \ref{thm:cnn+gap} (b)) pooling does not improve training efficiency for D-CNNs, extending a result from \citet{xiao2021what} which concerns S-CNNs. Finally, \citet{scetbon2020harmonic} also studies the eigenstructures of certain CNN kernels without pooling. Their kernels can be considered as a particular case of the NNGP kernels, where the associated networks have only one convolutional layer and multiple dense layers. The key contribution that sets the current work apart from existing work is that our work offers a precise mathematical characterization of the fundamental role of architectures in (infinite-width) networks through a space-frequency analysis. In particular, we rigorously prove that D-CNNs (more precisely, hierarchical localities) break the curse of dimensionality without losing their expressivity and scaling improves the performance in both finite and infinite data regimes.

\section{discussion and conclusion}\label{sec:discussion}

We establish a precise relation among networks' architectures, eigenstructures of the inducing kernels, and generalization of the corresponding kernel machines in the high-dimensional setting. We show that deep convolutional networks restructure the eigenspaces of the inducing kernels, which empowers them to learn a dramatically broader class of functions, covering a wide range of space-frequency combinations without extra data. We rigorously prove that infinite-width convolutional networks break the curse of dimensionality without losing any expressivity. On the practical side, our results suggest that using images with higher resolutions, networks with more layers and smaller filter sizes, and increasing the training set size\footnote{This may seem obvious, but our results suggest that HR-CNN type of networks could {\it smoothly} utilize the increment of training data since the set of learning indices has higher resolutions.} can improve the performance of the models, which have already been observed by practitioners \citep{tan2019efficientnet, kaplan2020scaling}.      

We believe our framework can be extended to study architectural inductive biases for other families of topologies, such as RNNs, GNNs, and even self-attention. It is expected that the exact mathematical formulation of the {\it learning index} will depend on the structure of the model of interest. We have not covered the learning dynamics of SGD in the feature learning regime, which is a very challenging topic. In addition, it is of great interest and importance to study the combined effect of SGD and architectural inductive biases in the future. Our results suggest that better architectures (such as D-CNNs, HR-CNNs) provide a better search space for SGD in the sense that the initial descent direction, namely, the NTK, biases the model towards learning functions with lower complexity (defined by the learning index). Another limitation of the current paper lies in the strong assumption on the input space, which requires the data to be drawn {\it uniformly} from a product of hyperspheres. There are two orthogonal directions to relax this assumption. One is to restrict the support of each patch in some low dimensional compact manifold, and we expect tools from studying eigenfunctions on Riemannian manifolds can be helpful here \citep{zelditch2017eigenfunctions}. The other one is to impose dependence among patches, e.g., {\it spatially close} patches have strong correlations.

Finally, our results also suggest the need for rethinking approximation power of {\it finite-width} neural networks via the lens of {\it space-frequency} analysis, at least for convolution-based architectures. While the frequency structures, namely, the smoothness of the target functions in terms of the Sobolev norm, have been broadly studied during the last decades \citep{Cucker2001OnTM, von2004distance, bach2017breaking}, spatial structures are mostly overlooked. Consequently, the required widths of the networks to approximate certain functions often grow rapidly (often polynomially) in the input dimensions \citep{eldan2016power, daniely2017depth}, i.e., the growth rates of the widths and the total number of parameters are cursed by the dimensionality. Nevertheless, the widths used in SotA models are much smaller than the input dimensions, e.g., the largest width in ResNets \citep{he2016deep} is $2048 \approx d^{0.64}$, where $d=224^2 \times3$ is the (default) input dimension of ImageNet used by the models. We conjecture that hierarchical localities play a similar fundamental role in breaking the curse of dimensionality for approximation in neural networks, bringing down the growth rate of the widths to a more realistic range.



\section*{Acknowledgments and Disclosure of Funding}
We thank Atish Agarwala, Ben Adlam,  Jaehoon Lee, Jascha Sohl-dickstein, Jeffrey Pennington, Roman Novak and Sam Schoenholz for discussions and feedbacks on the project. We are also grateful to Yasaman Bahri for providing valuable feedbacks on a draft.

We acknowledge the Python community~\citep{van1995python} for developing the core set of tools that enabled this work, including NumPy~\citep{numpy}, SciPy~\citep{scipy}, Matplotlib~\citep{matplotlib}, Pandas~\citep{pandas}, Jupyter~\citep{jupyter}, JAX~\citep{jaxrepro}, Neural Tangents~\citep{neuraltangents2020}, FLAX~\citep{flax2020github}, Tensorflow datasets~\citep{TFDS} and Google Colaboratory~\citep{colab}.

This work was performed at and funded by Google. No third party funding was used.

\bibliography{iclr2022_conference}
\bibliographystyle{iclr2022_conference}

\newpage 
\appendix
\section{DAGs, Eigenfunctions, Spatial Index, and Frequency Index.}\label{Sec:learning, dags}
In this section, we provide more details regarding the DAGs and the eigenfunctions used in the experiments, and how the spatial, frequency and learning indices are computed. 

Let $ (\spbar)^{p^3} \subseteq \mathbb R^{p^4}$ be the input space, where $d=p^4$ is the input dimension. We use $\vx \equiv (\vx_\vk)_{\vk\in [p]^4} \in  (\spbar)^{p^3}$ to denote one input (an image), where  $\vk = [\vk_1, \vk_2, \vk_3, \vk_4] \in [p]^4$. In addition, we treat $[p]^4$ as a group (i.e. with circular boundaries) and let $\ve_1 = [1, 0, 0, 0]$,  $\ve_2 = [0, 1, 0, 0]$ , 
$\ve_3 = [0, 0, 1, 0]$ and $\ve_4 = [0, 0, 0, 1]$ be a set of generator/basis of the group. Note that each  input $\vx_\vk$ is partitioned into $p^3$ many patches: $\{\vx_{\vk_1, \vk_2, \vk_3, :}: \vk_1, \vk_2, \vk_3\in [p] \}$. 

The eigenfunctions used in the experiments and the associated space/frequency indices (will be explained momentarily) are given as follows
\begin{align*}
&\text{\bf Eigenfunction} &&{\text{\bf degree}}   &&& &&&& \text{\bf Space/Freq Index}
\\
&    &&  &&&\text{MLP}
&&&&\text{CNN}(p^2)^{\otimes{2}} &&&&&\text{CNN}(p)^{\otimes{4}}
\\
{\bf \color{plot_1}Y_1}(\vx) &= \sum_{\vk\in[p-1]^4} c^{(1)}_{\vk}\vx_{\vk} && {\bf \color{plot_1}1} &&&  {\bf\color{plot_1}0/1}  &&&&{\bf\color{plot_1}\frac 12 / \frac 12} &&&&&{\bf\color{plot_1}\frac 34 /\frac 14}
\\ 
{\bf \color{plot_2}Y_2}(\vx) &= \sum_{\vk\in[p-1]^4} c^{(2)}_{\vk} \vx_{\vk}  \vx_{\vk+\ve_4}
&& {\bf \color{plot_2}2 }   &&& {\bf\color{plot_2}0/2 } 
&&&& {\bf\color{plot_2}\frac 12/ \frac 22} &&&&&{\bf\color{plot_2}\frac 34 /\frac 24 }
\\ 
 {\bf \color{plot_3}Y_3}(\vx) &= \sum_{\vk\in[p-1]^4} c^{(3)}_{\vk} \vx_{\vk+\ve_3}  \vx_{\vk+\ve_4}
 && {\bf \color{plot_3} 2} 
  &&& {\bf\color{plot_3}0/2}  &&&& {\bf\color{plot_3} \frac 12 / \frac 22} &&&&&{\bf\color{plot_3}\frac 44 /\frac 24}
\\ 
  {\bf \color{plot_4}Y_4}(\vx) &= \sum_{\vk\in[p-1]^4} c^{(4)}_{\vk} 
  \vx_{\vk+\ve_3+\ve_4}
  \vx_{\vk+\ve_4}  \vx_{\vk}
  && {\bf \color{plot_4}3} 
   &&& {\bf\color{plot_4} 0/3}  &&&& {\bf\color{plot_4} \frac 1 2  / \frac 32}  &&&&&
   {\bf\color{plot_4} \frac 44 /\frac 34} 
\\ 
  {\bf \color{plot_5}Y_5^*}(\vx) &= \sum_{\vk\in[p-1]^4} c^{(5^*)}_{\vk} \vx_{\vk}  \vx_{\vk+\ve_4}  \vx_{\vk+2\ve_4}
  (\vx_{\vk}^2 - \vx_{\vk+\ve_4}^2)
   && {\bf \color{plot_5}5}    &&& {\bf\color{plot_5} 0/5}  &&&&{\bf\color{plot_5} \frac 12  / \frac 52 } &&&&&{\bf\color{plot_5} \frac 34 /\frac 54} 
\\ 
  {\bf \color{plot_8}Y_5}(\vx) &= \sum_{\vk\in[p-1]^4} c^{(5)}_{\vk} \vx_{\vk}  \vx_{\vk+\ve_1}
   && {\bf \color{plot_8}2}
   &&& {\bf\color{plot_8} 0/2}  &&&&{\bf\color{plot_8} \frac 2 2   / \frac 22}  &&&&&{\bf\color{plot_8}\frac 64 /\frac 24 } 
\\ 
  {\bf \color{plot_6}Y_6}(\vx) &= \sum_{\vk\in[p-1]^4} c^{(6)}_{\vk} \vx_{\vk}  \vx_{\vk+\ve_3}  ( 3\vx_{\vk - \ve_3}^2 - \vx_{\vk}^2)
   && {\bf \color{plot_6}4}
   &&& {\bf\color{plot_6}0/4}   &&&& {\bf\color{plot_6}\frac 1 2  / \frac 42 } &&&&&
   {\bf\color{plot_6}\frac 54 /\frac 44 } 
\\
  {\bf \color{plot_7}Y_7}(\vx) &= \sum_{\vk\in[p-1]^4} c^{(7)}_{\vk}   \vx_{\vk-\ve_3 + \ve_4} \vx_{\vk + \ve_2}(3\vx_{\vk}^2 - \vx_{\vk-\ve_3 + \ve_4}^2)
   && {\bf \color{plot_7}4}
    &&& {\bf\color{plot_7}0/4  }&&&& {\bf\color{plot_7}\frac 12  / \frac 42}  &&&&&{\bf\color{plot_7}\frac 64   / \frac 44}
\end{align*}
Each (eigen)function $Y_i$ is a linear combination of basis eigenfunctions of the same type, in the sense they have the same eigenvalue and the same spatial/frequency/learning indices.  
The coefficients $c_{\vk}^{(i)}$ are first sampled from standard Gaussians iid and then multiplied by an $i$-dependent constant so that $Y_i$ has unit norm\footnote{In our experiments, they are normalized over the test set.}. In the experiments, $p$ is chosen to be $4$ and the target is defined to be sum of them 
\begin{align}
    { Y} = \sum  Y_i
\end{align}
Since they are orthogonal to each other, $\|Y\|_2^2/8 = \|Y_i\|^2_2=1$, where the $L^2$-norm is taken over the uniform distribution on $(\spbar)^{p^3}$. 
In the experiment when we compare Flatten and GAP ( Fig.~\ref{fig:gap_vs_flatten}), we set all $c_{\vk}^{(i)}= c^{(i)}$ for some $c^{(i)}$, so that the functions are learnable by convolutional networks with a GAP readout layer. Note that we also remove ${\bf\color{plot_8}Y_5}$ from the target function $Y$ since ${\bf\color{plot_8}Y_5}$ is not in the function space defined by $\text{Conv}(p)^{\otimes 3}\text{-Flatten}$ or $\text{Conv}(p)^{\otimes 3}\text{-GAP}$. 

We compare three architectures.  Fig.~\ref{fig:nn-demonstration-2} Column (a) $\text{MLP}^{\otimes 4}$, a (four-layer) MLP, the most coarse architecture used in the paper.   
Fig.~\ref{fig:nn-demonstration-2} Column (b), $\text{CNN}(p^2)^{\otimes{2}}$, a``D"-CNN that contains two convolutional layers with filter size/stride equal to $p^2$.
Fig.~\ref{fig:nn-demonstration-2} Column (c), $\text{CNN}(p)^{\otimes{4}}$, a ``HR"-CNN, the finest architecture used in the experiments, that contains four convolutional layers with filter size/stride equal to $p$. 
In all experiments except the one in Fig.~\ref{fig:gap_vs_flatten}, we use Flatten as the readout layer for the convolutional networks and add a {\it Act-Dense} layer after Flatten to improve the expressivity of the function class. However, in the Flatten vs GAP experiments, Fig.~\ref{fig:gap_vs_flatten}, we have only one dense layer after GAP/Flatten and, in particular, no non-linear activation. 

We show how to compute the frequency index, the spatial index and the learning index through three examples: ${\bf \color{plot_2} Y_2}$ / ${\bf \color{plot_3} Y_3}$ / ${\bf \color{plot_5} Y_5^*}$, which have degree 2 / 2 / 5, resp. The indices of other (basis) eigenfunctions can be computed using the same approach. 
We use {\bf Dashed Lines} to represent either an edge connecting an input node to a node in the first-hidden layer or an edge associated to a dense layer. In either case, the corresponding output node of the edge has degree $O(1)$, and thus, the weights (of the DAGs) of such edges are always 0. Only {\bf Solid Lines} are relevant in computing the {\it spatial index}. Since each $Y_i$ is a linear combination of basis eigenfunctions of the same type, we only need to compute the indices of one component. For convenience, we compute the $\vk=0$ component, of which the associated MST is highlighted with colored lines in each DAG. Recall that $p=4$, $d=p^4=256$ and $m_t=32\times 10240 \sim d^{2.28}$ (training set size), i.e. the budget index is roughly $r=2.28$. 

\begin{enumerate}
    \item[(a.)]  {\bf  MLP}{ Column (a) Fig.~\ref{fig:nn-demonstration-2}. }
The NTK and NNGP kernels are inner product kernels and the associated DAGs are linked lists. 
The corresponding DAG has only one input node whose dimension is equal to $d=p^4$. The spatial index is always 0 since the degree of each hidden node is 1 (since $1=d^0$) and the frequency index is equal to the degree of the eigenfunctions. Thus $\learning({\bf \color{plot_2} Y_2}) =\learning({\bf \color{plot_3} Y_3})=2$ and $\learning({\bf \color{plot_5} Y_5^*})=5$. Changing the number of layers won't change the learning indices. In sum, learning ${\bf \color{plot_2} Y_2}$ / ${\bf \color{plot_3} Y_3}$ / ${\bf \color{plot_5} Y_5^*}$ using infinite-width MLP requires $~d^{\bf \color{plot_2}2^+}/d^{\bf \color{plot_3}2^+}/d^{\bf \color{plot_5}5^+}$ many samples /SGD steps.  Clearly, ${\bf \color{plot_5} Y_5^*}$ is completely unlearnable as $r=2.28<<5$.
In the MSE plot ${\bf \color{plot_5} Y_5^*}$ (5-th row in Fig.~\ref{fig:nn-demonstration-2}), the {\bf \color{plot_3} Red Lines} does not make any progress. 

\item[(b.)] ${ \text{\bf CNN}(p^2)^{\otimes 2}}$  Column (b) Fig.~\ref{fig:nn-demonstration-2}. The input image is partitioned into $p^2$ patches and each patch has dimension $p^2$. 
The second layer of the DAG has $p^2$ many nodes, each node represents one pixel (with many channels) in the first hidden layer of a finite-width ConvNet. After one more convolutional layer with filter size/stride $p^2$, the number of node (pixel) is reduced to one. The remaining part of the DAG is essentially a linked list ({\bf Dashed Line}) with length equal to 1, which corresponds to the {\it Act-Dense} layer. 
The frequency index $\frequency({\bf \color{plot_2} Y_2})= 2  \frac 1 2 =1$. This is because the degree of ${\bf \color{plot_2} Y_2}$ is 2 and the input dimension of a node is $p^2 = d^{1/2}$. The spatial index is equal to $1/2$, since the minimum tree containing $\vx_{\vk}  \vx_{\vk+\ve_4}$ has only one non-zero edge ({\bf \color{plot_2}Solid Lines}) whose weight is equal to 1/2 (since the degree of the output node is $p^2=d^{1/2}$); see the {\bf \color{plot_2}colored paths} in Fig.~\ref{fig:nn-demonstration-2} Column (b). Therefore the learning index of $\learning({\bf \color{plot_2} Y_2}) = 1 + 1/2 = 3/2$. Similarly $\learning({\bf \color{plot_3} Y_3}) =1 + 1/2= 3/2$, as the term $\vx_{\vk+\ve_3}  \vx_{\vk+\ve_4}$ are lying in the same patch of size $p^2$ for all $\vk$, and $\learning({\bf \color{plot_5} Y_5^*})= 5/2 + 1/2=3$.
In sum, learning ${\bf \color{plot_2} Y_2}$ / ${\bf \color{plot_3} Y_3}$ / ${\bf \color{plot_5} Y_5^*}$ using infinite-width $\text{CNN}(p^2)^{\otimes 2}$ requires $~d^{\color{plot_2}1.5^+}/d^{\color{plot_3}1.5^+}/d^{\color{plot_5} 3^+}$ many samples / SGD steps. 
While neither infinite-width $\text{CNN}(p^2)^{\otimes 2}$ nor $\text{MLP}^{\otimes 4}$ distinguishes ${\bf \color{plot_2} Y_2}$ from ${\bf \color{plot_3} Y_3}$, $\text{CNN}(p^2)^{\otimes 2}$ does improve the learning efficiency for both of them: $d^{2^+}\to d^{1.5^+}$. Note that ${\bf \color{plot_5} Y_5^*}$ is still unlearnable as $r=2.28<3=\learning({\bf \color{plot_5} Y_5^*})$. In the MSE plot ${\bf \color{plot_5} Y_5^*}$ (5-th row in Fig.~\ref{fig:nn-demonstration-2}), the {\bf \color{plot_2} Orange Line} does not make any progress. 
This is also the case for finite-width network trained by SGD; see second row in Fig.~\ref{fig:learning-eigen-modes}. 
\item[(c.)]
$\text{\bf CNN}(p)^{\otimes 4}$  Column (c) Fig.~\ref{fig:nn-demonstration-2}.
The input image is partitioned into $p^3$ patches and each patch has dimension $p$. 
The second/third/fourth/output layer of the DAG has $p^3/p^2/p/1$ many nodes. 
The frequency indices are: $\frequency({\bf \color{plot_2} Y_2})= \frequency({\bf \color{plot_3} Y_3}) = 2  \frac 1 4 =1/2$ and $\frequency({\bf \color{plot_5} Y_5^*}) = 5 \frac 14 =5/4$. This is because the size of input nodes is reduced to $p = d^{1/4}$. Unlike the above cases, the spatial indices become different. The two interacting terms in $\vx_{\vk}  \vx_{\vk+\ve_4}$ and the three interacting terms in $\vx_{\vk+\ve_4}  \vx_{\vk+2\ve_4}(\vx_{\vk}^2 - \vx_{\vk+\ve_4}^2)$ are in the same input node
  while the two interacting terms in  $\vx_{\vk+\ve_3} \vx_{\vk+\ve_4}$ and 
are in two different input nodes. As a consequence, the minimum spanning tree (MST) that contains $\vx_{\vk} $ and $\vx_{\vk+\ve_4}$ and the one contains $\vx_{\vk}$,  $\vx_{\vk+\ve_4}$ and $ \vx_{\vk+2\ve_4}$ are the same. They have {\bf \color{plot_2} 3 solid lines}. However, the MST containing $\vx_{\vk+\ve_3} $ and $ \vx_{\vk+\ve_4}$ has {\bf \color{plot_3} 4 solid lines}. Therefore $\spatial({\bf \color{plot_2} Y_2})=\spatial({\bf \color{plot_5} Y_5^*}) = 3 \times \frac 1 4$ and 
$\spatial({\bf \color{plot_3} Y_3}) = 4 \times \frac 1 4$. As such,  
$\learning({\bf \color{plot_2} Y_2}) =  \frac 5 4$, 
$\learning({\bf \color{plot_3} Y_3}) =  \frac 6 4$ and 
$\learning({\bf \color{plot_5} Y_5^*}) =  \frac 8 4$.
In sum, learning ${\bf \color{plot_2} Y_2}$ / ${\bf \color{plot_3} Y_3}$ / ${\bf \color{plot_5} Y_5^*}$ using infinite-width $\text{CNN}(p)^{\otimes 4}$ requires $~d^{ {\bf \color{plot_2} 1.25^+}}$ / $~d^{ {\bf \color{plot_3} 1.5^+}}$ / $~d^{ {\bf \color{plot_5} 2^+}}$ many samples / SGD steps, resp. 
Now $\learning({\bf \color{plot_5} Y_5^*})=2.<2.28$
and in the MSE plot ${\bf \color{plot_5} Y_5^*}$ (5-th row in Fig.~\ref{fig:nn-demonstration-2}), the {\bf \color{plot_1} Blue Line} does make significant progress (the MSE is reduced from $\sim 0.5$ to $\sim 0.2$.) Finite-width network trained by SGD does even better: the test MSE is almost zero; see third row in Fig.~\ref{fig:learning-eigen-modes}. 

\end{enumerate}

\begin{figure}[h]
\centering
    \begin{subfigure}[b]{0.1\textwidth}
     \includegraphics[width=\textwidth]{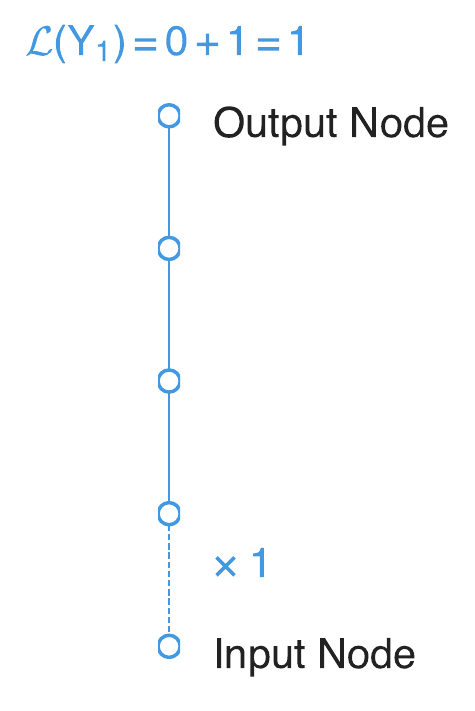}
    \end{subfigure}
    \begin{subfigure}[b]{0.3\textwidth}
     \includegraphics[width=\textwidth]{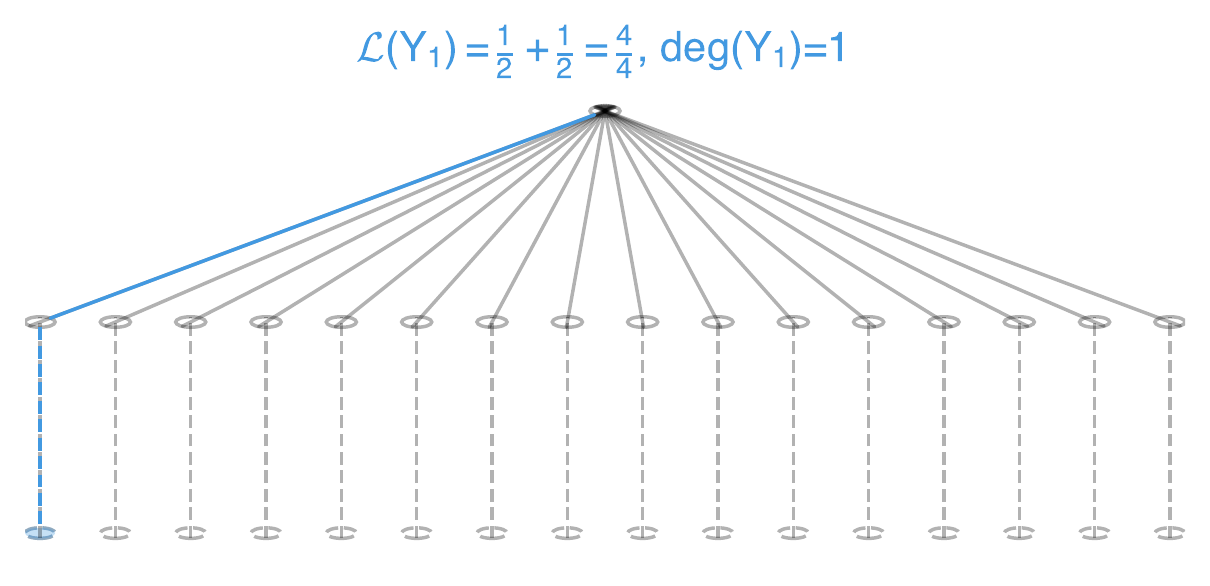}
         \end{subfigure}
    \begin{subfigure}[b]{0.3\textwidth}
    \includegraphics[width=\textwidth]{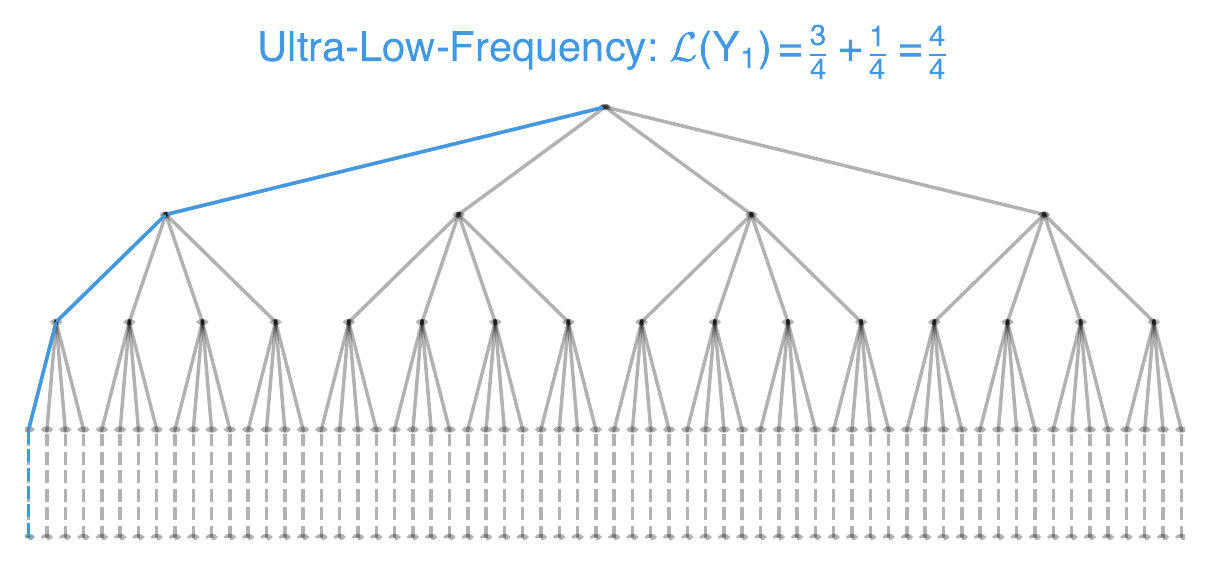}
    \end{subfigure}
        \begin{subfigure}[b]{0.25\textwidth}
    \includegraphics[width=\textwidth]{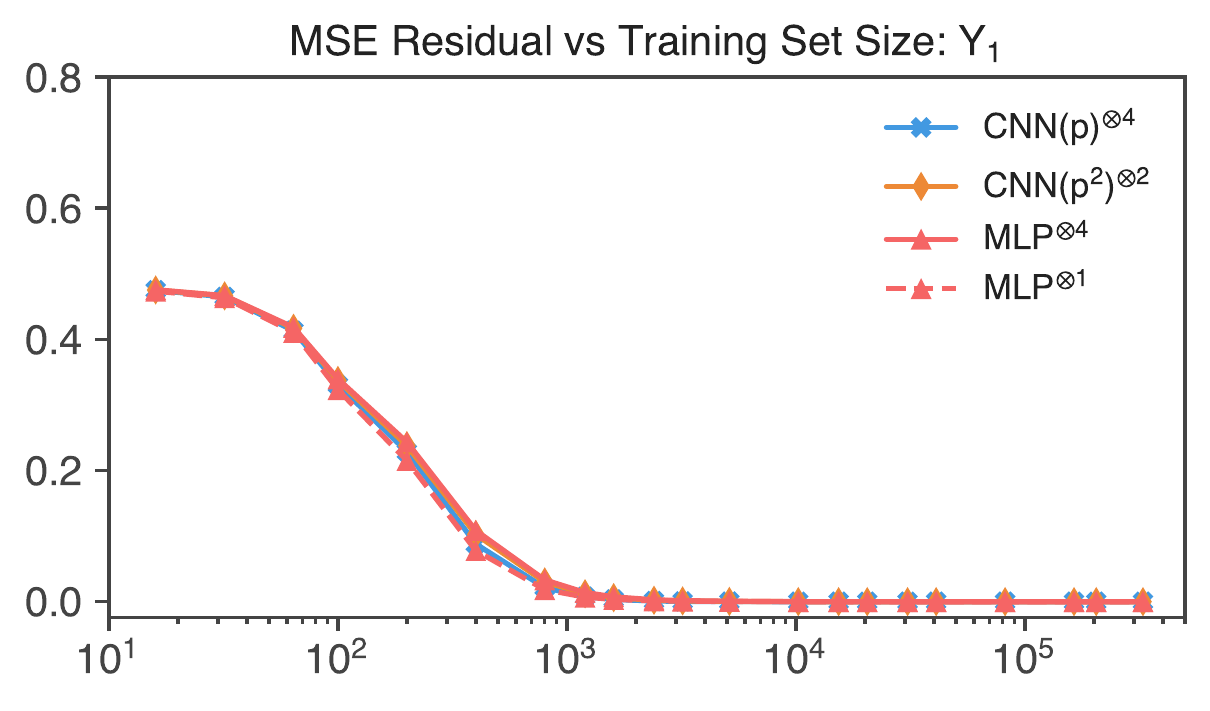}
    \end{subfigure}
    \\
     \begin{subfigure}[b]{0.1\textwidth}
     \includegraphics[width=\textwidth]{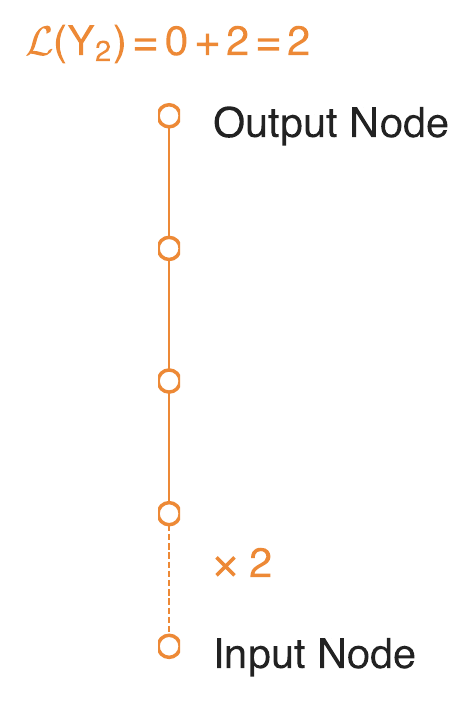}
    \end{subfigure}
    \begin{subfigure}[b]{0.3\textwidth}
     \includegraphics[width=\textwidth]{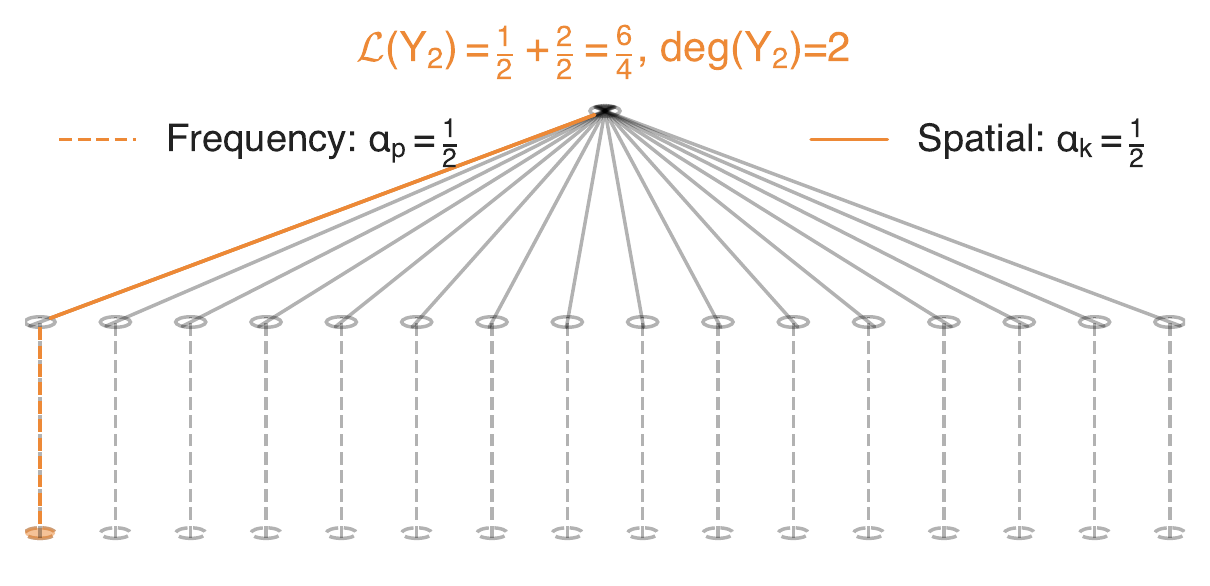}
         \end{subfigure}
    \begin{subfigure}[b]{0.3\textwidth}
    \includegraphics[width=\textwidth]{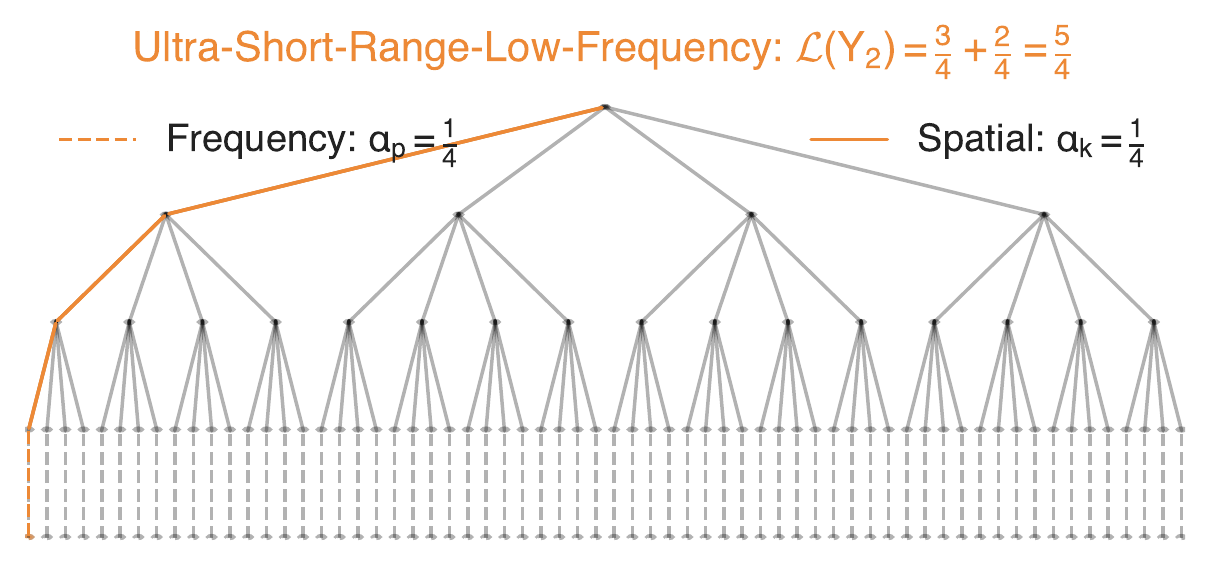}
    \end{subfigure}
        \begin{subfigure}[b]{0.25\textwidth}
    \includegraphics[width=\textwidth]{figures/Oct1-nn-graphs/NTK-Residual-1.pdf}
    \end{subfigure}
    \\
      \begin{subfigure}[b]{0.1\textwidth}
     \includegraphics[width=\textwidth]{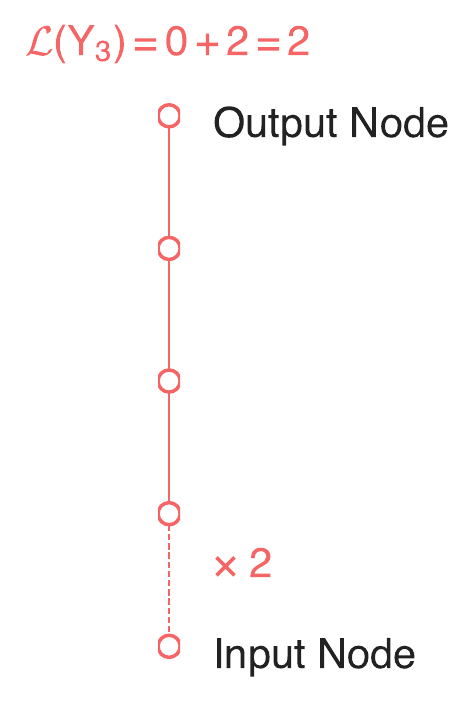}
    \end{subfigure}
    \begin{subfigure}[b]{0.3\textwidth}
     \includegraphics[width=\textwidth]{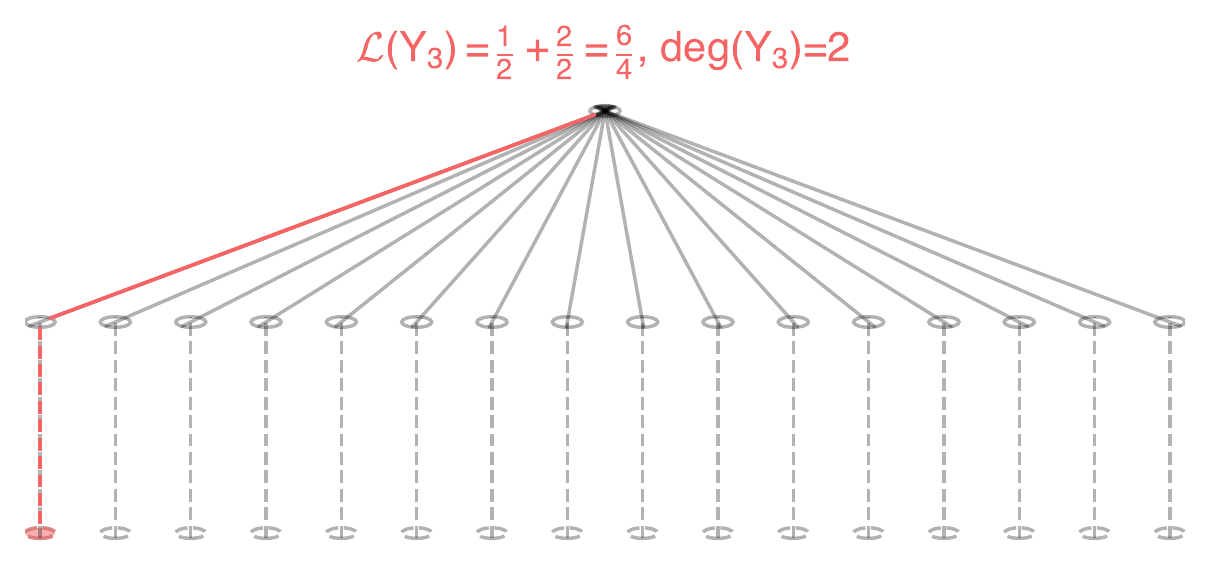}
         \end{subfigure}
    \begin{subfigure}[b]{0.3\textwidth}
    \includegraphics[width=\textwidth]{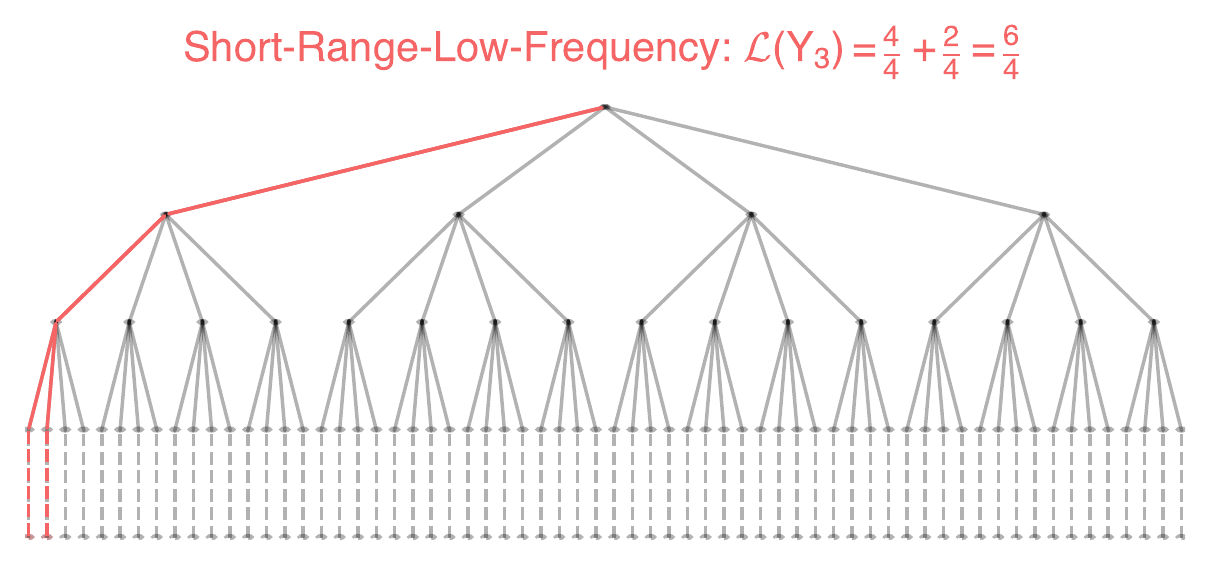}
    \end{subfigure}
        \begin{subfigure}[b]{0.25\textwidth}
    \includegraphics[width=\textwidth]{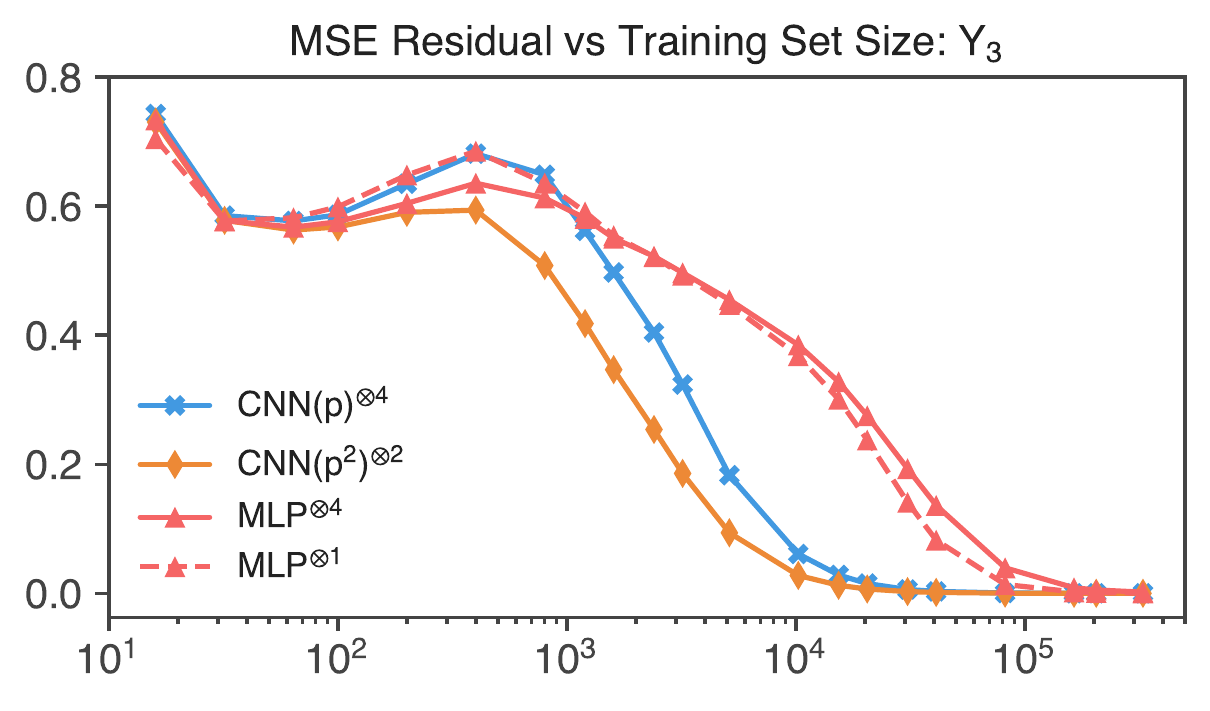}
    \end{subfigure}
    \\
      \begin{subfigure}[b]{0.1\textwidth}
     \includegraphics[width=\textwidth]{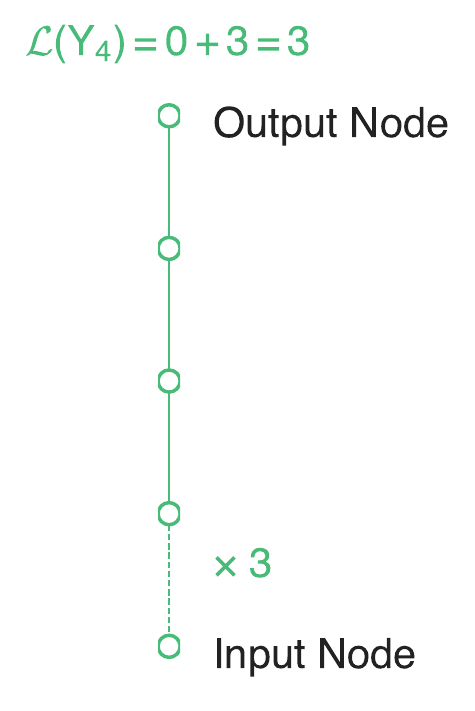}
    \end{subfigure}
    \begin{subfigure}[b]{0.3\textwidth}
     \includegraphics[width=\textwidth]{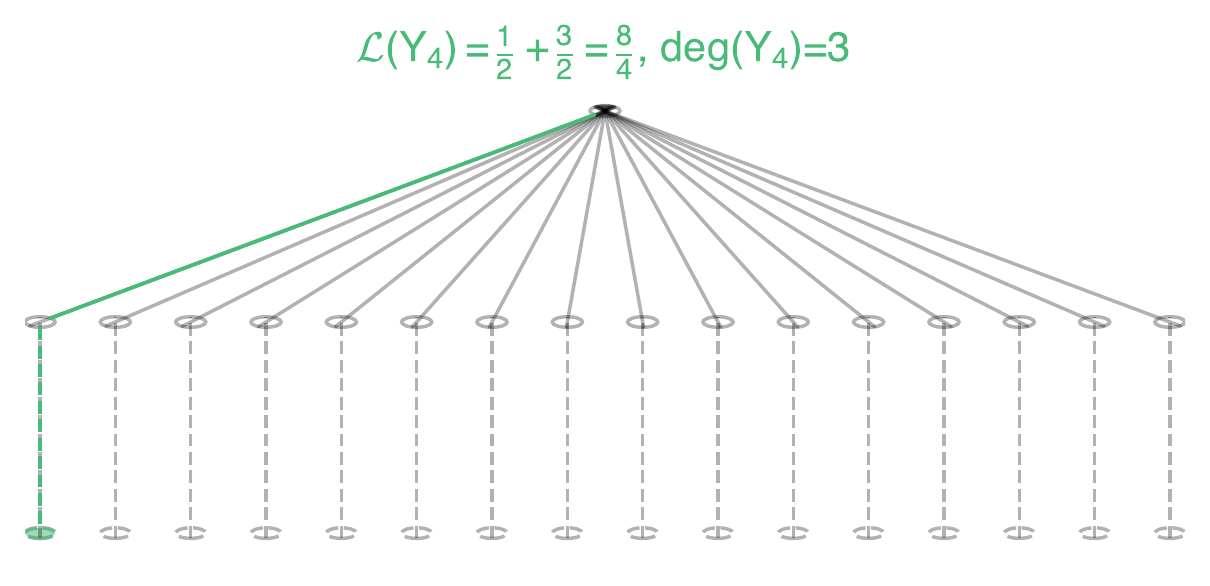}
         \end{subfigure}
    \begin{subfigure}[b]{0.3\textwidth}
    \includegraphics[width=\textwidth]{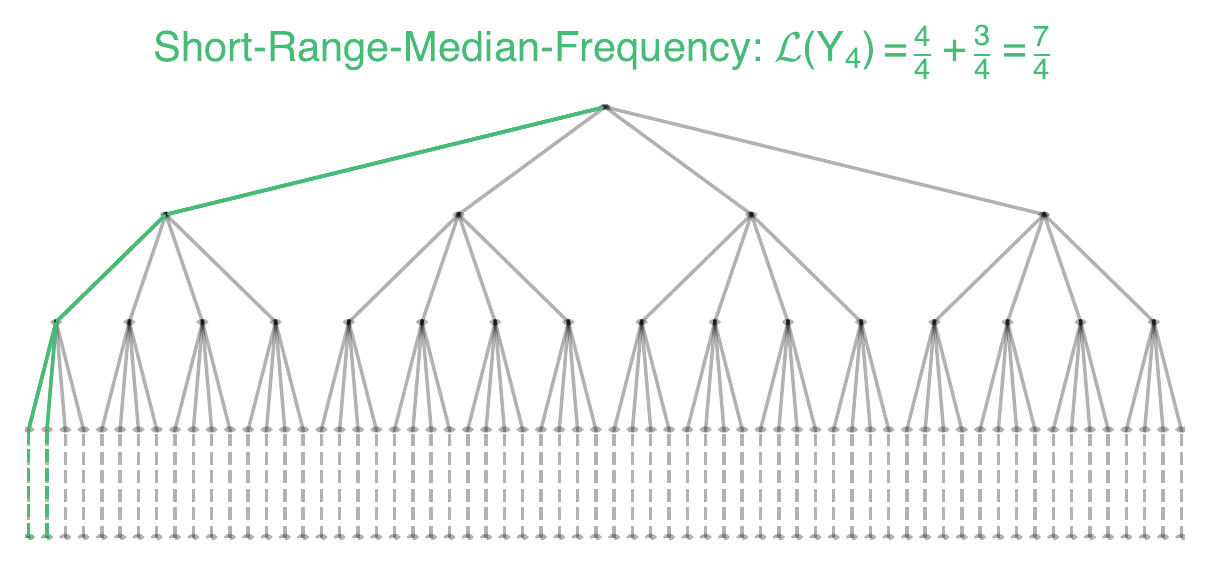}
    \end{subfigure}
        \begin{subfigure}[b]{0.25\textwidth}
    \includegraphics[width=\textwidth]{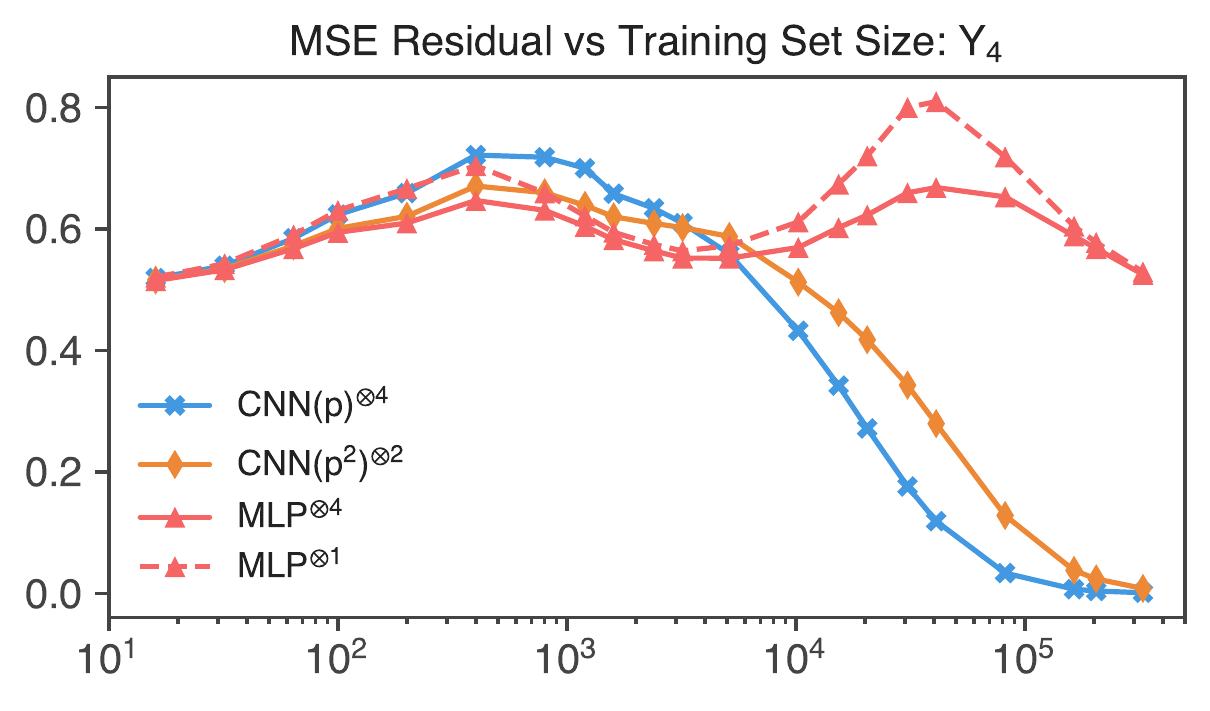}
    \end{subfigure}
    \\
      \begin{subfigure}[b]{0.1\textwidth}
     \includegraphics[width=\textwidth]{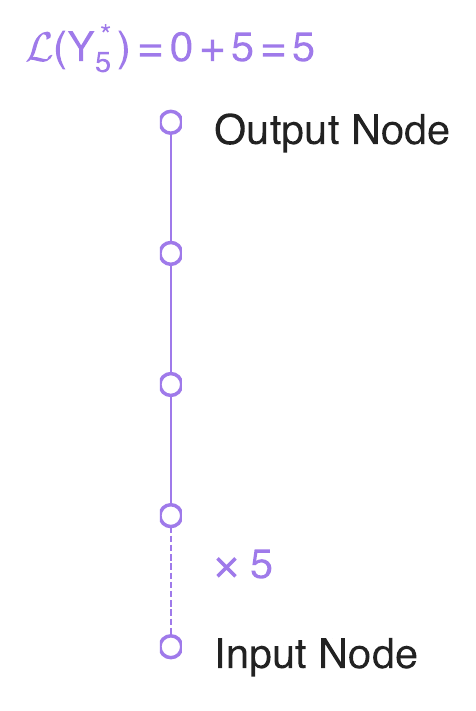}
    \end{subfigure}
    \begin{subfigure}[b]{0.3\textwidth}
     \includegraphics[width=\textwidth]{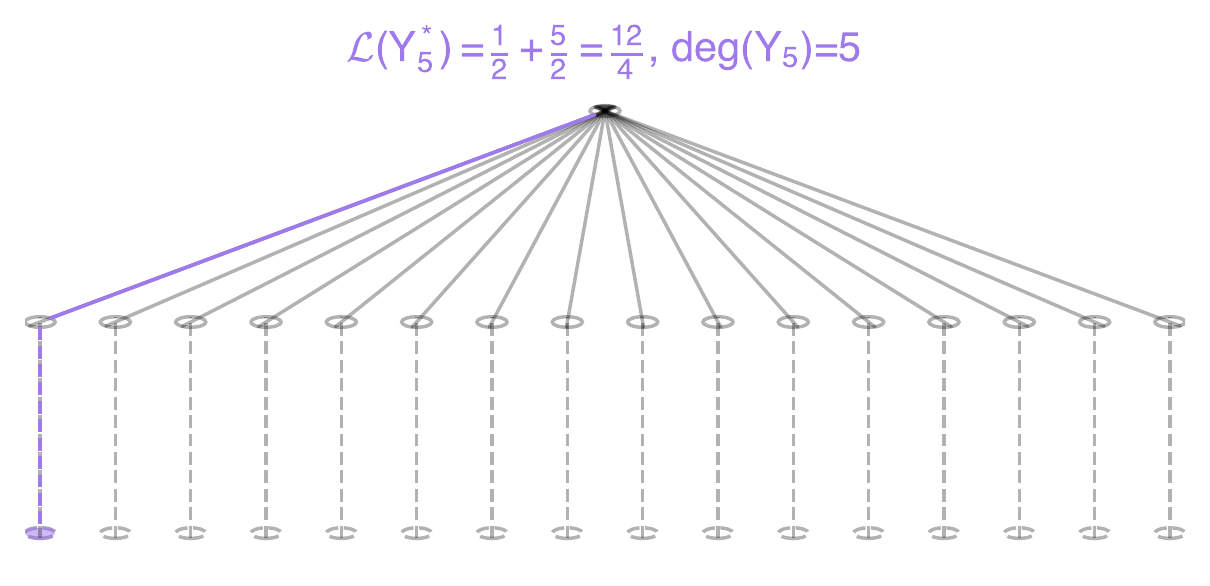}
         \end{subfigure}
    \begin{subfigure}[b]{0.3\textwidth}
    \includegraphics[width=\textwidth]{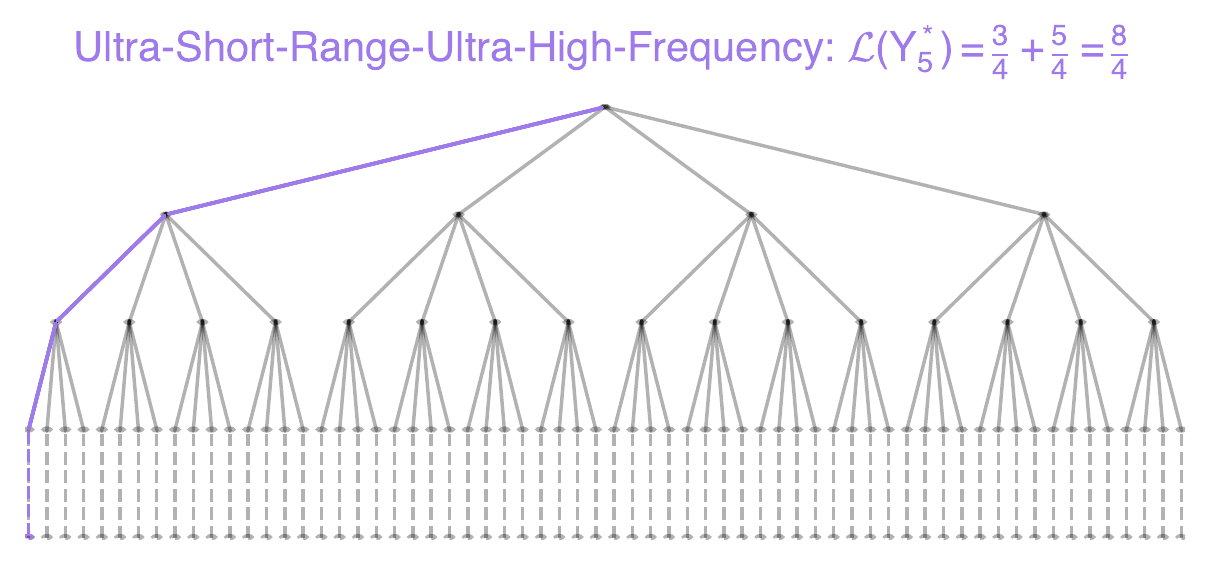}
    \end{subfigure}
        \begin{subfigure}[b]{0.25\textwidth}
    \includegraphics[width=\textwidth]{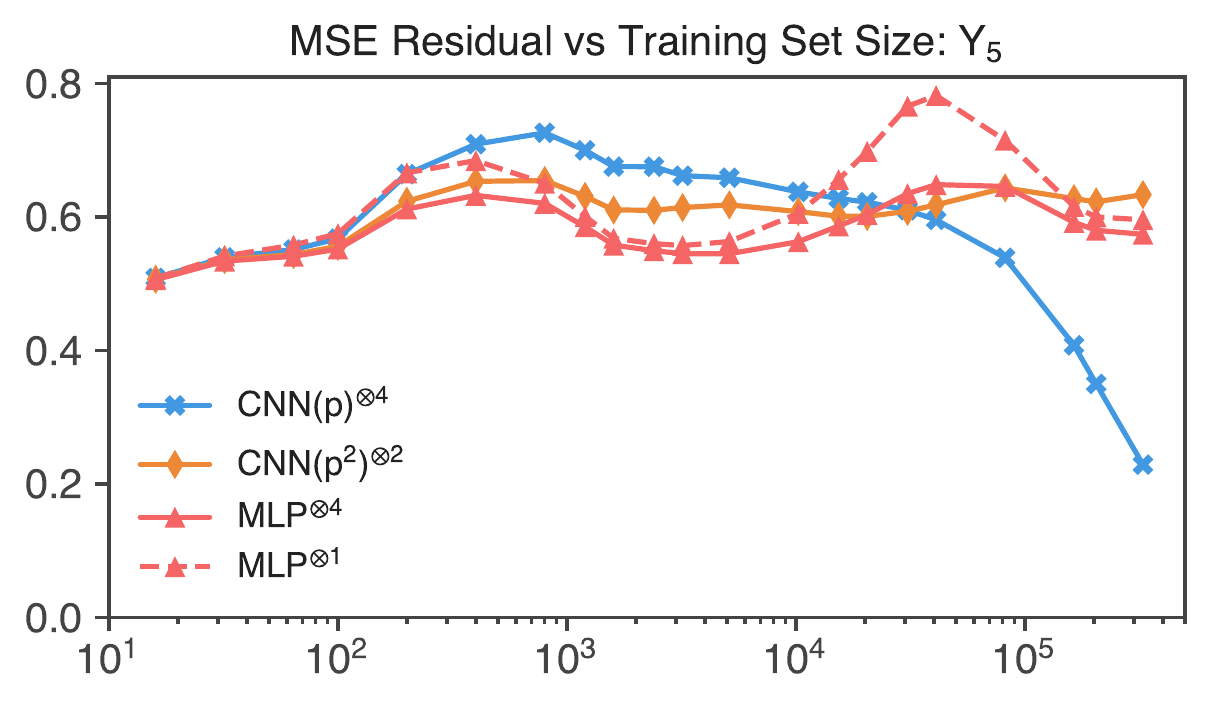}
    \end{subfigure}
    \\
      \begin{subfigure}[b]{0.1\textwidth}
     \includegraphics[width=\textwidth]{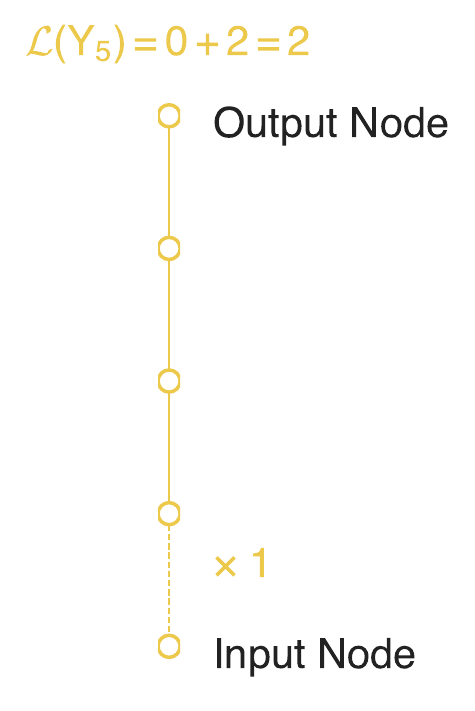}
    \end{subfigure}
    \begin{subfigure}[b]{0.3\textwidth}
     \includegraphics[width=\textwidth]{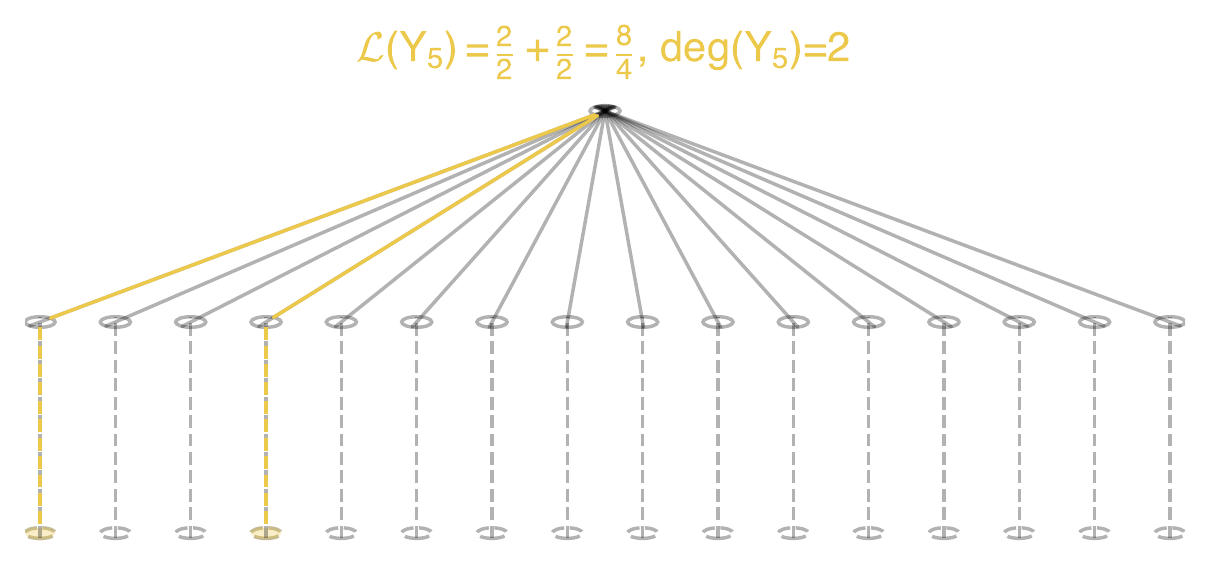}
         \end{subfigure}
    \begin{subfigure}[b]{0.3\textwidth}
    \includegraphics[width=\textwidth]{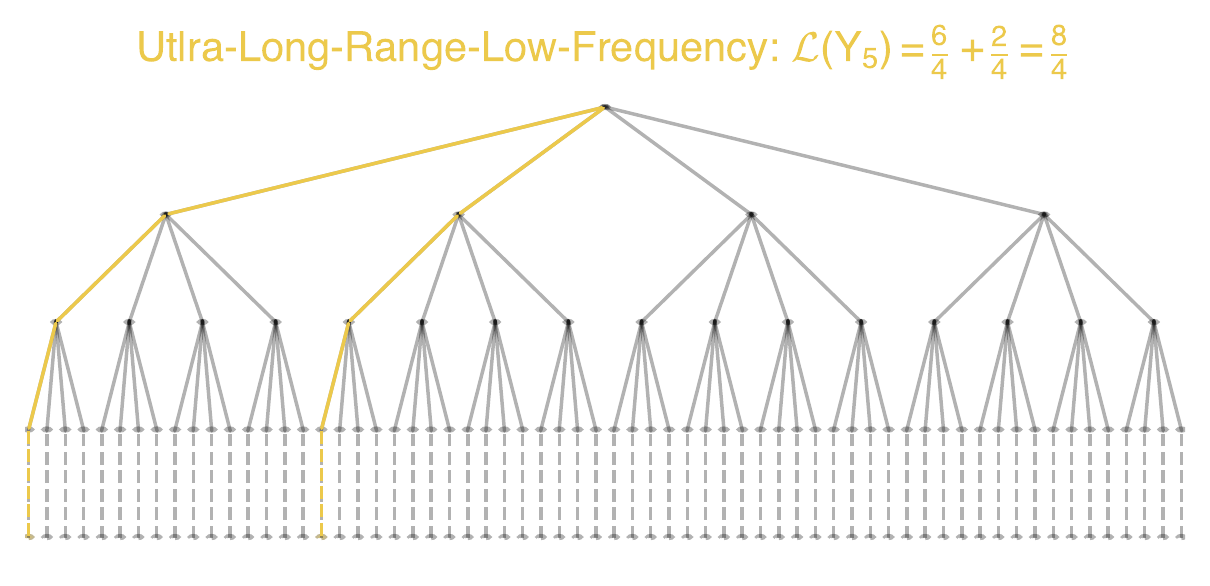}
    \end{subfigure}
        \begin{subfigure}[b]{0.25\textwidth}
    \includegraphics[width=\textwidth]{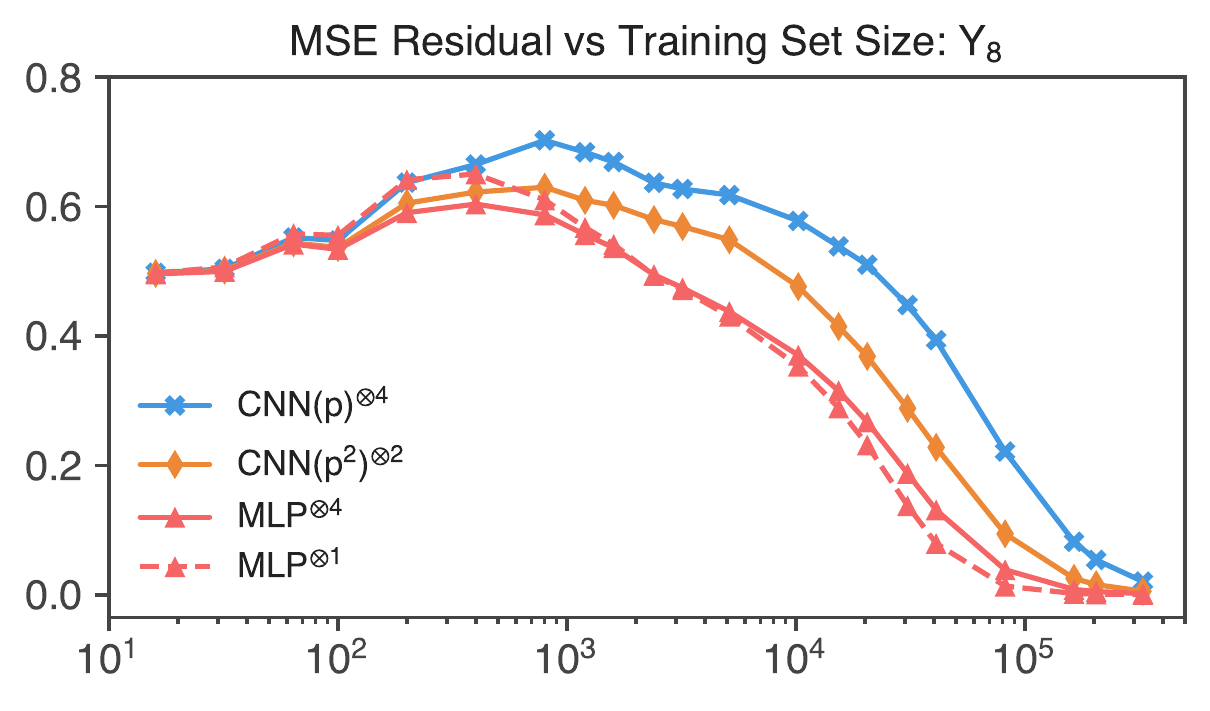}
    \end{subfigure}
    \\
      \begin{subfigure}[b]{0.1\textwidth}
     \includegraphics[width=\textwidth]{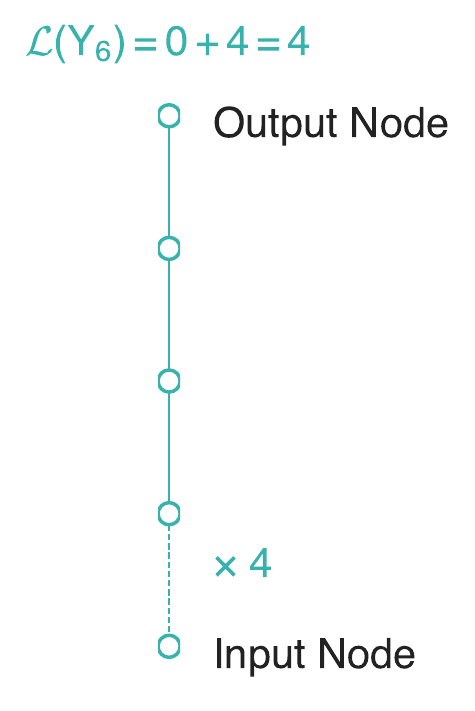}
    \end{subfigure}
    \begin{subfigure}[b]{0.3\textwidth}
     \includegraphics[width=\textwidth]{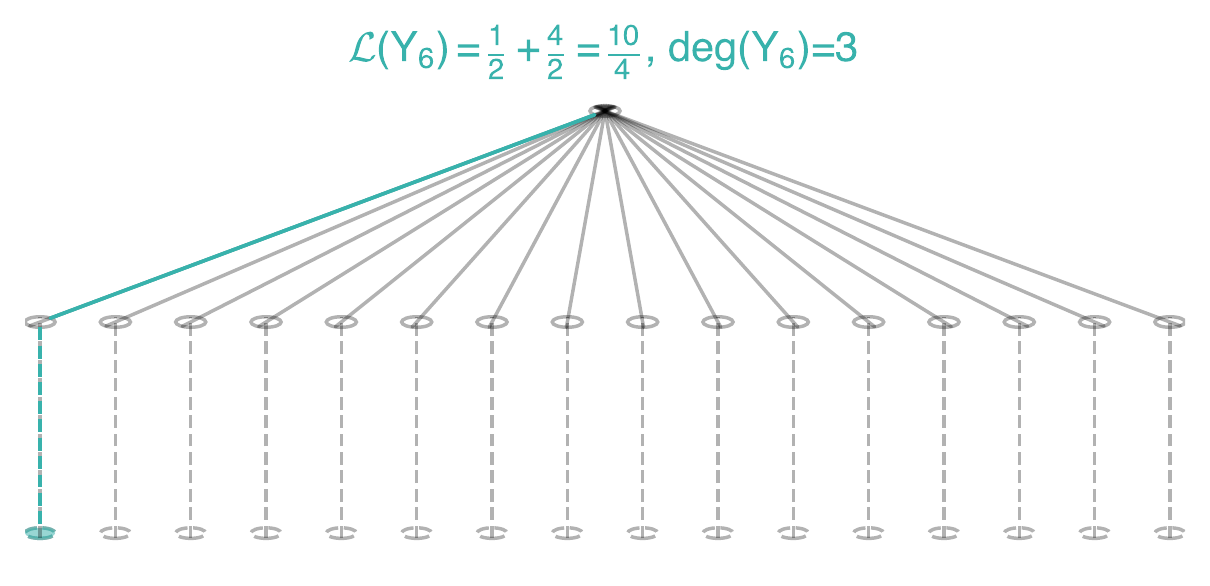}
         \end{subfigure}
    \begin{subfigure}[b]{0.3\textwidth}
    \includegraphics[width=\textwidth]{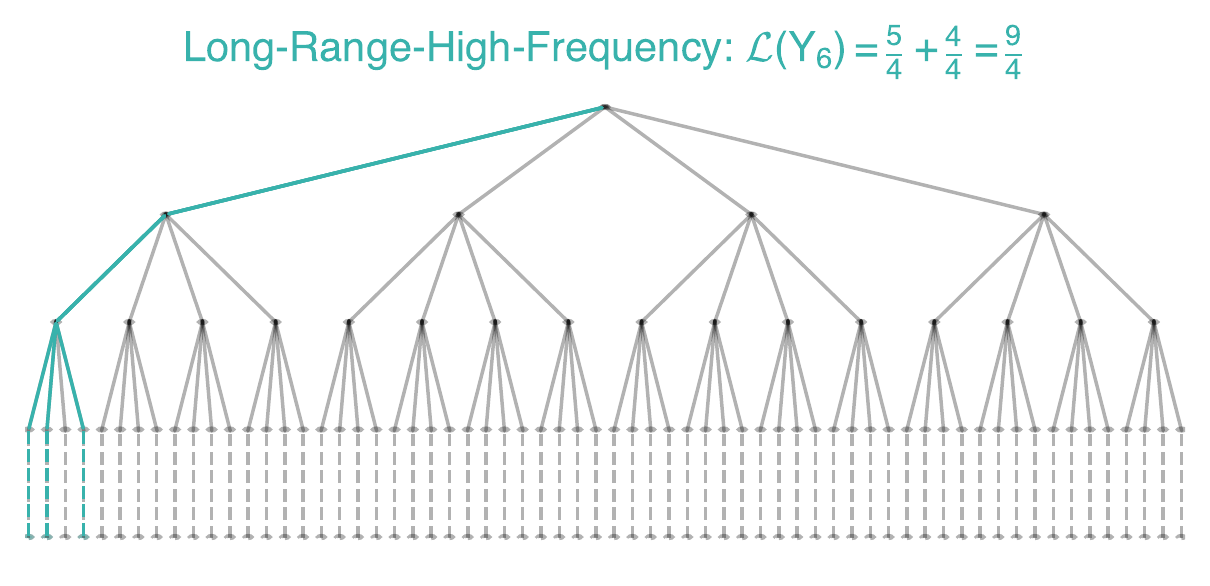}
    \end{subfigure}
        \begin{subfigure}[b]{0.25\textwidth}
    \includegraphics[width=\textwidth]{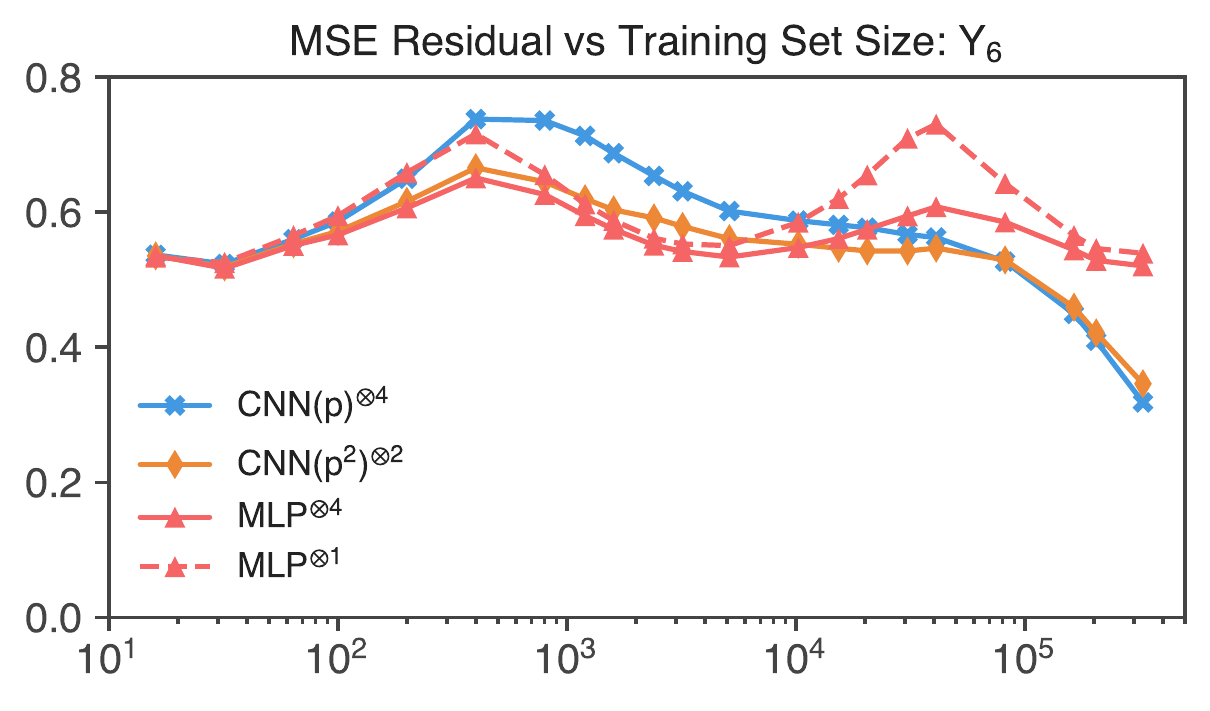}
    \end{subfigure}
    \\
      \begin{subfigure}[b]{0.1\textwidth}
     \includegraphics[width=\textwidth]{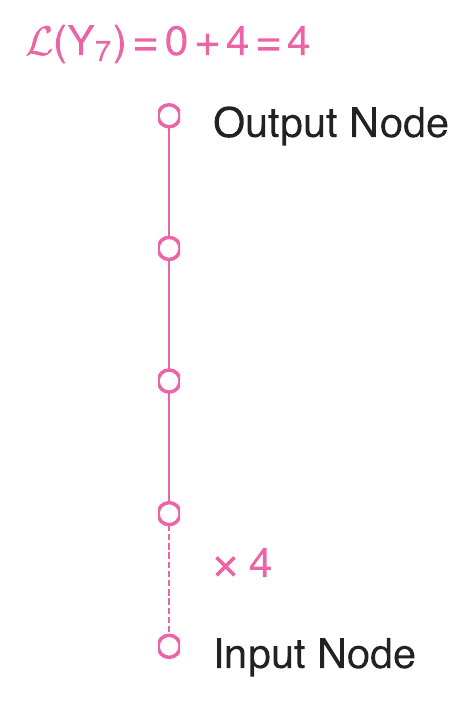}
           \caption{$\text{MLP}$}
    \end{subfigure}
    \begin{subfigure}[b]{0.3\textwidth}
     \includegraphics[width=\textwidth]{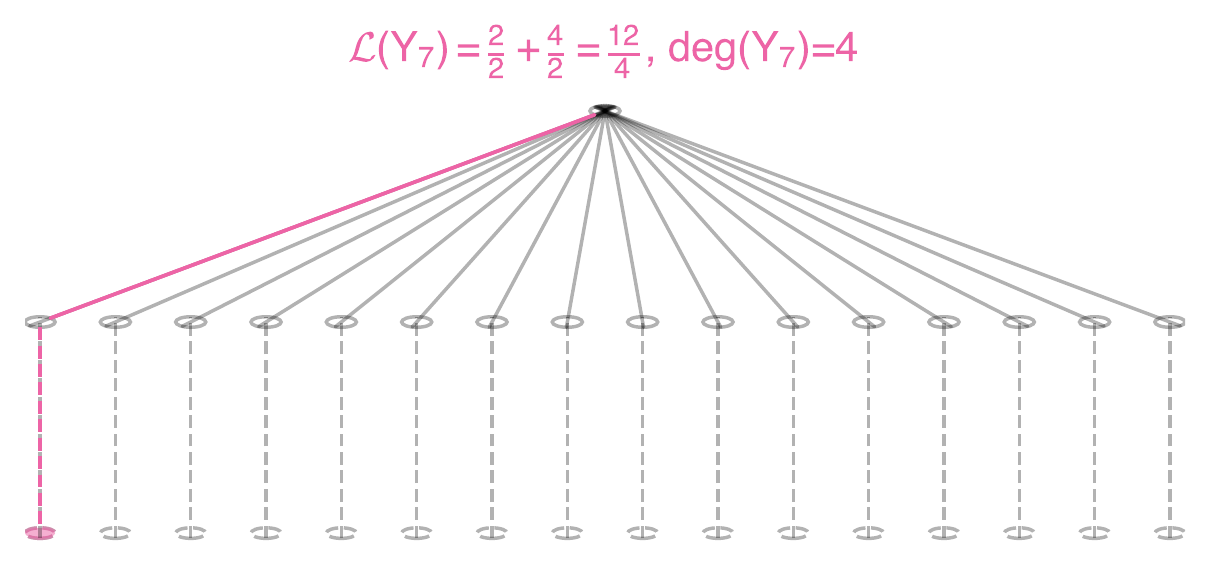}
      \subcaption{$\text{CNN}(p^2)^{\otimes{2}}$}
         \end{subfigure}
    \begin{subfigure}[b]{0.3\textwidth}
    \includegraphics[width=\textwidth]{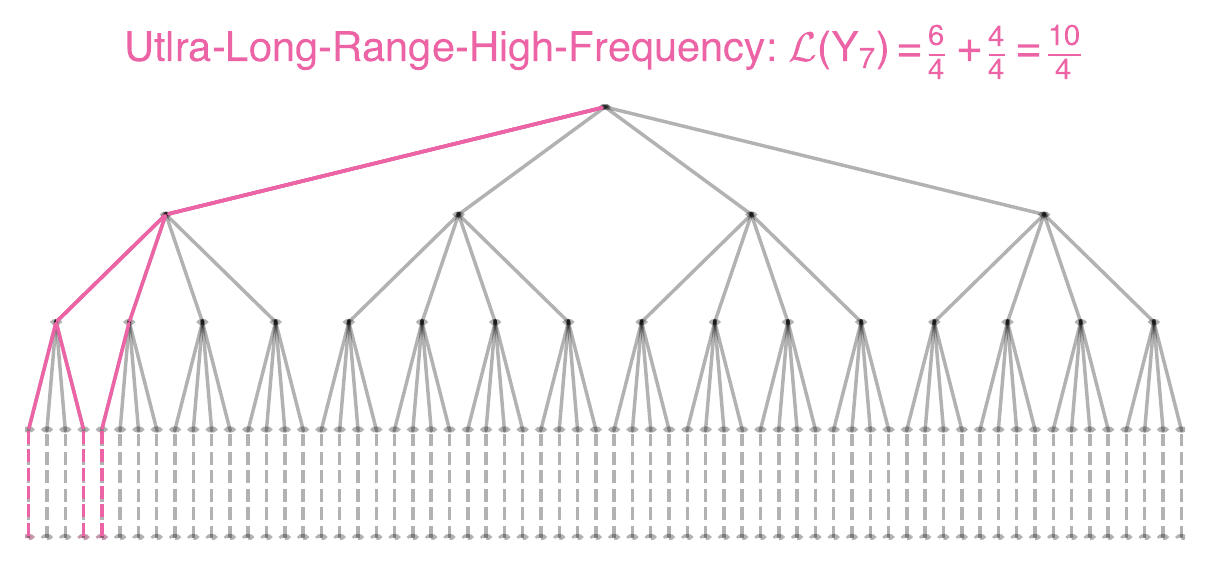}
     \subcaption{$\text{CNN}(p)^{\otimes{4}}$}
    \end{subfigure}
        \begin{subfigure}[b]{0.25\textwidth}
    \includegraphics[width=\textwidth]{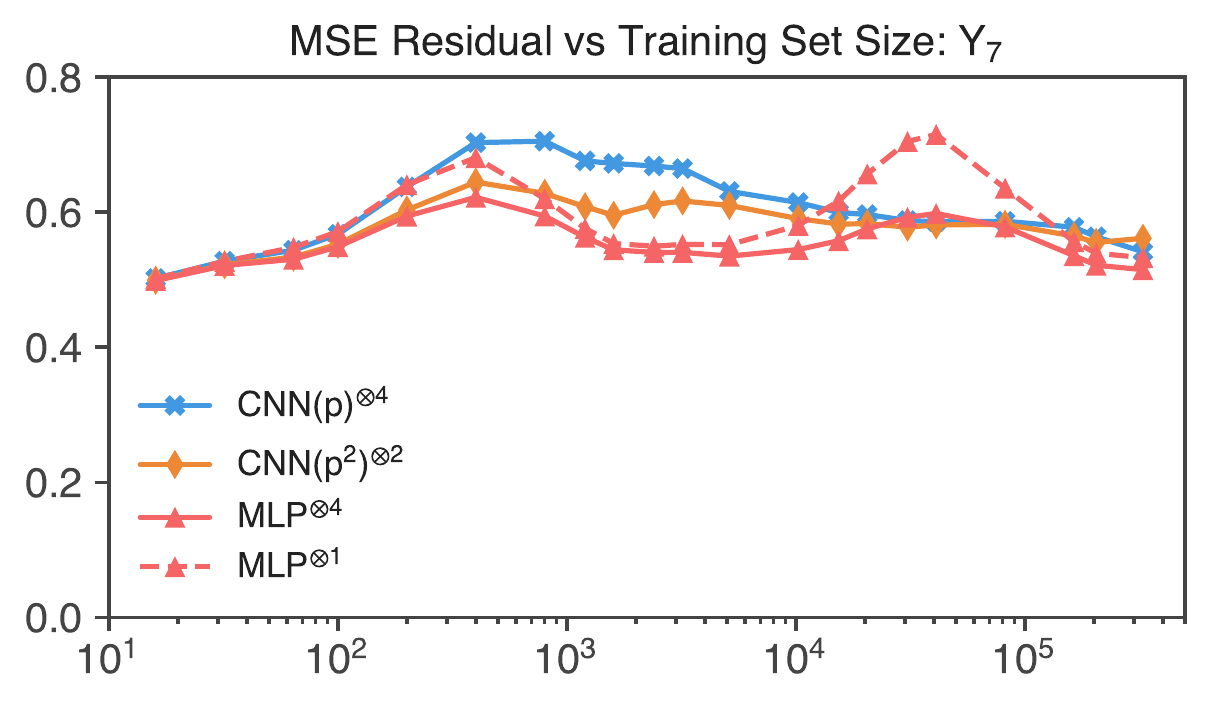}
     \caption{MSE Residual}
    \end{subfigure}
    \caption{{\bf Eigenfunction vs Learning Index vs Architecture/DAG.}
    Rows: eigenfunctions $Y_i$ with various space-frequency combinations. Columns: 
     DAGs associated to (a) $\text{CNN}(p)^{\otimes{4}}$, a ``HR"-CNN. (b) $\text{CNN}(p^2)^{\otimes{2}}$, a``D"-CNN. (c) a four-layer MLP. 
    Column (d) is the MSE of the residual of the corresponding eigenfunction obtained by NTK-regression. 
    The dashed lines in each DAG correspond to the mapping from the input layer to the first hidden layer and the associated weights to the DAG is ZERO. 
    Each colored path in each DAG corresponds to the minimum spanning tree that contains all interaction terms of the corresponding eigenfunctions.    
    }
    \label{fig:nn-demonstration-2}
\end{figure} 

\newpage

\section{Figure Zoo}

\subsection{MLPs: Depth is not equal to Hierarchy}
We compare a one hidden layer MLP and a four hidden layer MLP in Fig.~\ref{fig:mlp-2}. Unlike CNNs, increasing the number of layers does not improve the performance of MLPs much for both NTK regression and SGD. This is consistent with a theoretical result from \citet{bietti2020deep}, which says the NTKs of Relu MLPs are essentially the same for any depth. 


\begin{figure}[h]
    \centering
    \includegraphics[width=\textwidth]{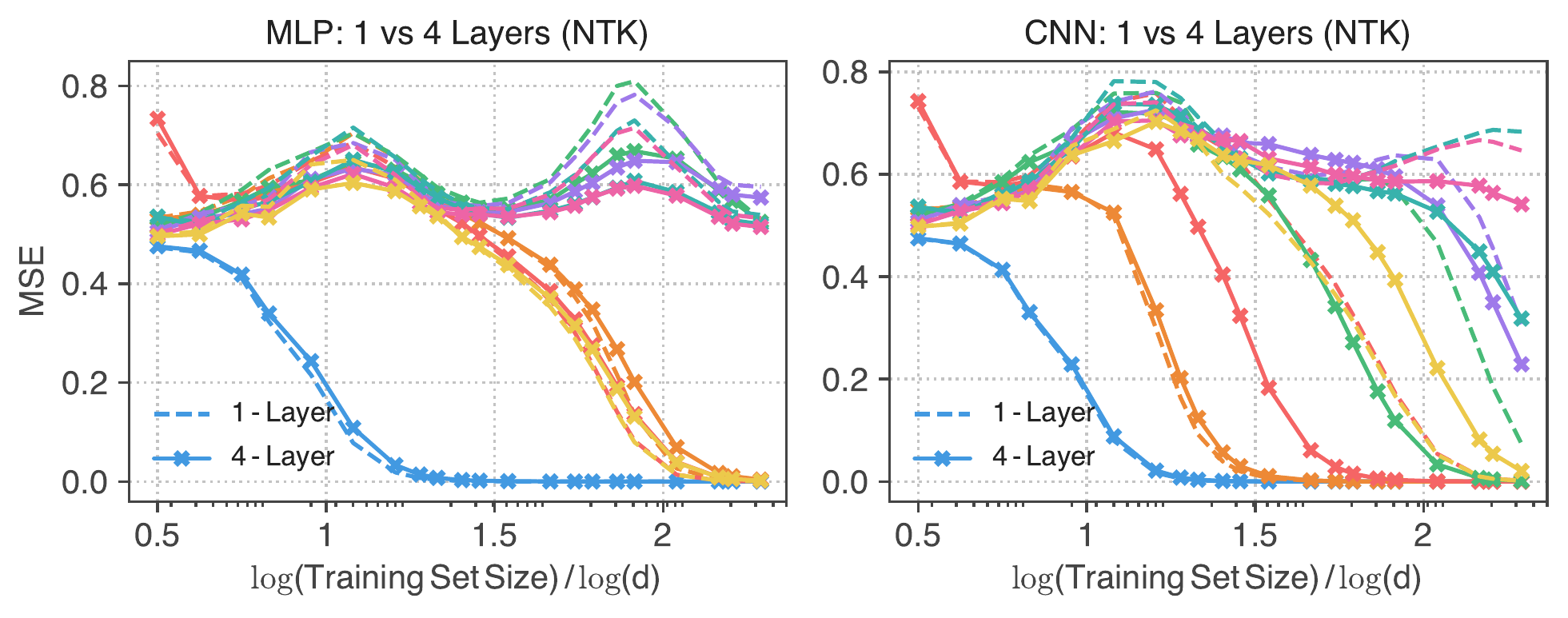}
    \\
    \includegraphics[width=\textwidth]{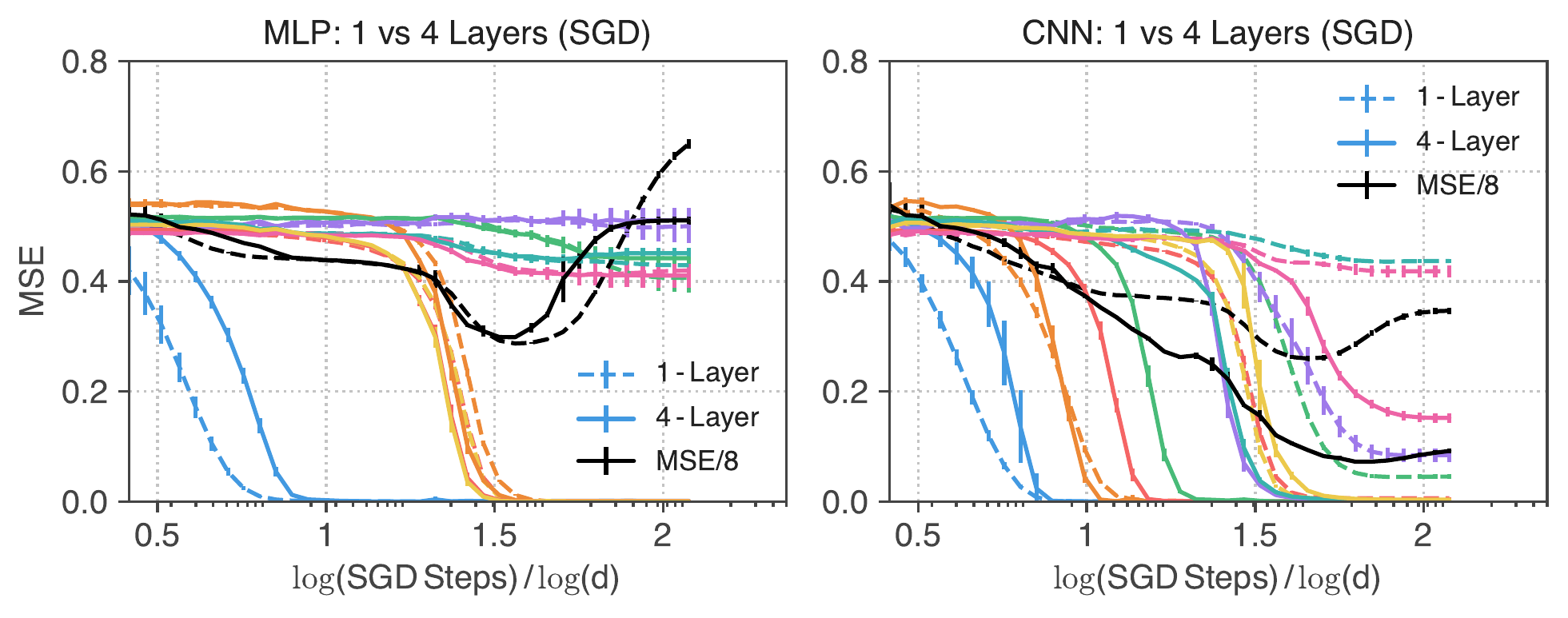}
    \caption{{\bf MLPs do not benefit from having more layers.} We plot the learning dynamics vs training set size / SGD steps for each eigenfunction $Y_i$. Top: NTK regression and bottom: SGD + Momentum. Left: MLP; right: CNN. {\bf Dashed lines / Solid lines} correspond to one-hidden / four-hidden layer networks.
    For both finite-width SGD training and infinite-width kernel regression, having more layers does not essentially improve performance of a MLP. This is in stark contrast to CNNs (right).
    By having more layers, the eigenstuctures of the kernels are refined. 
    } 
    \label{fig:mlp-2}
\end{figure}

\subsection{ImageNet Plots
}
\begin{figure}
    \centering
        \includegraphics[width=1.\textwidth]{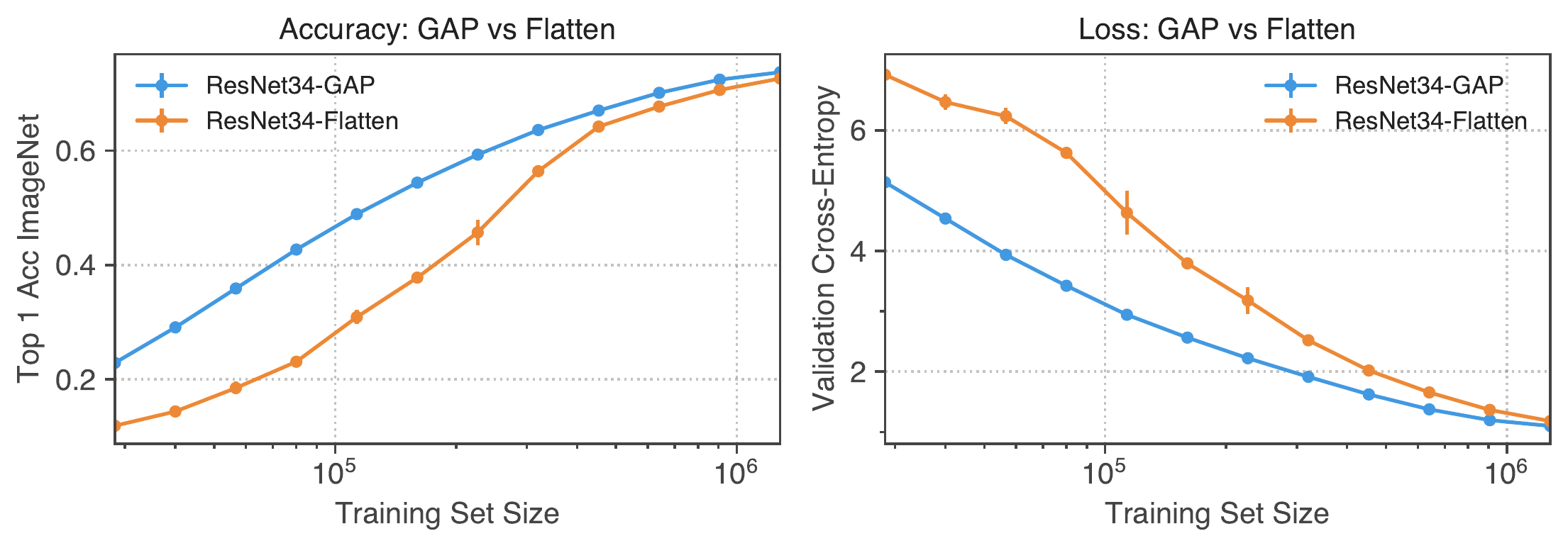}
        \\
    \includegraphics[width=1.\textwidth]{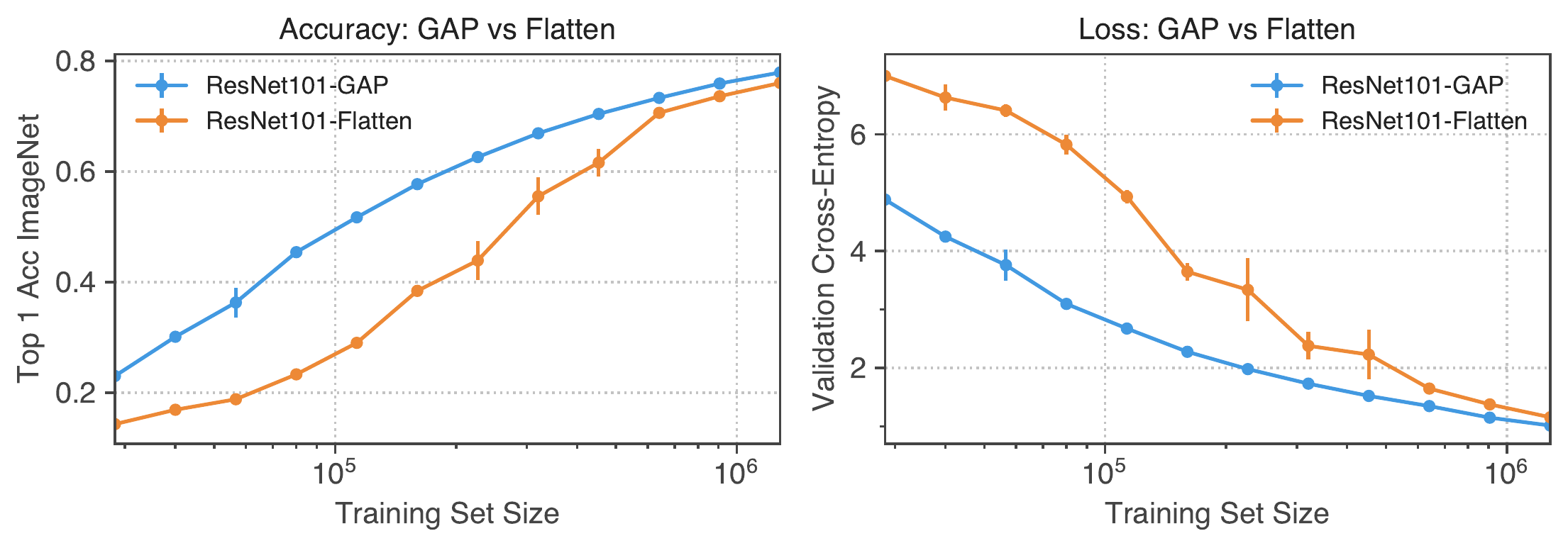}
    \caption{{\bf ResNet-GAP vs ResNet-Flatten.} As the training set size increases, the performance (accuracy and loss) gap between the two shrinks substantially. Top/bottom ResNet34/ResNet101}
    \label{fig:imagenet-34-101}
\end{figure}

 \newpage 



\section{Proof of the Eigenspace Restructuring Theorem.}
The goal of this section is to prove the eigenspace restructuring theorem. 
In Sec.~\ref{sec:key-lemma}, we present a key lemma that relates the mixed derivatives of $\kk$ and $\Theta$ (as a function of $\vt$) to the architectures of the networks. We briefly recap some tools from spherical harmonics and then prove the theorem in Sec.\ref{sec:spherical harmonics}. For the rest of the paper, we will use the following notations. For $A, B: \mathbb N\to \mathbb R^{+} $, 
\begin{align}
    B(d) \gtrsim A(d) \iff A(d) \lesssim B(d) \iff	
   \exists c, \,d_0 > 0 \quad s.t. \quad B(d) \geq c A(d) >0 \quad \text{for all}\quad  d > d_0\,
\end{align}
and 
\begin{align}
    B(d) \sim A(d) \iff	
    B(d) \gtrsim A(d) \quad \text{and} \quad A(d) \gtrsim B(d)
\end{align}
\subsection{A Crucial Lemma}\label{sec:key-lemma}
\begin{lemma}\label{lemma:derivative-key} 
Same assumptions as Theorem \ref{thm:main}. 
Let $\vr\in\mathbb N^{|\Ndzero|}$. Then 
for $\vr\neq 0$, for $d$ large enough,
\begin{align}\label{eq: lemma-derivative-key}
\kk_{\gG}^{(\vr)}(0),  \quad  \Theta_{\gG}^{(\vr)}(0) \sim d^{-\spatial(\vr)}
 \quad \text{ and}\quad
 \|\kk_{\gG}^{(\vr)}\|_{\infty}, \,\, 
  \|\Theta_{\gG}^{(\vr)}\|_{\infty} \lesssim d^{-\spatial(\vr)}
\end{align}
\end{lemma}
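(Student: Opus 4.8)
`\textbf{Proof plan.}`

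\textbf{Proof plan.}

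The plan is to pass to the Taylor expansions of $\kk_u$ and $\Theta_u$ in the correlation vector $\vt$ around the origin and run an induction over the DAG in topological order, with the spatial index emerging from a minimum spanning tree recursion. By Assumption-$\phi$ each $\phi^*_u$ is (real-)analytic near $0$ with Taylor coefficients $a^{(u)}_m=(\phi^*_u)^{(m)}(0)/m!>0$ for all $m\ge1$, and $\kk_v(0)=0$ for every non-root $v$ (hidden nodes because $\phi^*_v$ is centered, input nodes because $\kk_v=t_v$). Writing $\widehat{\kk_u}(\vs)$ for the coefficient of $\vt^{\vs}$ in $\kk_u$ and expanding the recursion $\kk_u=\phi^*_u\!\big(\tfrac1{\deg u}\sum_{v:uv\in\Ed}\kk_v\big)$ --- noting that a factor $\widehat{\kk_{v}}(\vs')$ vanishes unless $\vs'\ne0$ and $\nnode(\vs')$ lies below $v$, and equals $\mathbf 1[\vs'=\mathbf e_v]$ when $v$ is an input node --- one obtains
\[
\widehat{\kk_u}(\vs)=\sum_{m=1}^{|\vs|}\frac{a^{(u)}_m}{(\deg u)^m}\sum_{(v_1,\dots,v_m)}\ \sum_{\substack{\vs_1+\cdots+\vs_m=\vs\\ \vs_i\ne0}}\ \prod_{i=1}^m\widehat{\kk_{v_i}}(\vs_i),
\]
with the $v_i$ ranging over children of $u$. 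Every surviving summand is strictly positive, so there is no cancellation and the order in $d$ of $\widehat{\kk_u}(\vs)$ is that of its largest summand. The NTK obeys an analogous recursion from \myeqref{eq:ntk-recursion}: $\dot\phi^*_u$ is semi-admissible, $\Theta_v(0)=0$ for all $v$, and each summand carries the same $\tfrac1{\deg u}\sum_v$ combinator with exactly one branch of $\mathscr K$-or-$\Theta$ type.

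The inductive claim is: for every non-input $u$ and every $\vs\ne0$ supported below $u$ with $|\vs|\le R$ (any fixed $R$), $\widehat{\kk_u}(\vs)>0$ and $\widehat{\kk_u}(\vs)\sim d^{-\spatial(\nnode(\vs)\cup\{u\})}$, with constants depending only on $R$, on the constant $C$ of Assumption-$\bm\gG$, and on the $\phi^*$'s. Since $\deg u\sim d^{\alpha_u}$ and, inductively, the $i$-th factor has order $d^{-\spatial(\nnode(\vs_i)\cup\{v_i\})}$, a summand has order $d^{-E}$ with $E=m\,\alpha_u+\sum_i\spatial(\nnode(\vs_i)\cup\{v_i\})$, so everything reduces to the spanning-tree identity
\[
\spatial\big(\nnode(\vs)\cup\{u\}\big)=\min\Big\{\, m\,\alpha_u+\sum_{i=1}^m\spatial\big(\nnode(\vs_i)\cup\{v_i\}\big)\,\Big\},
\]
the minimum over children $v_i$ of $u$ and decompositions $\vs=\sum_i\vs_i$ with $\nnode(\vs_i)$ below $v_i$. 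The inequality ``$\le$'' is immediate: glue the edges $uv_i$ onto optimal Steiner trees for the $\nnode(\vs_i)\cup\{v_i\}$. For ``$\ge$'', delete $u$ from an optimal Steiner tree for $\nnode(\vs)\cup\{u\}$; its pieces hang off the children of $u$ adjacent in the tree and are (by optimality, monotonicity and subadditivity of $\spatial(\cdot\cup\{v\})$) themselves optimal for the induced sub-indices, which may be taken over distinct children with all copies of a given $\partial_{t_w}$ kept in one branch. Taking $u=\rootg$ and recalling $\spatial(\nnode(\vr)\cup\{\rootg\})=\spatial(\vr)$ gives $\kk_{\gG}^{(\vr)}(0),\Theta_{\gG}^{(\vr)}(0)\sim d^{-\spatial(\vr)}$ (when $\vr\notin\Ad(\Gd)$ the coefficient is identically $0$, consistent with $\spatial(\vr)=+\infty$).

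For the sup-norm bounds, run the same induction on $\sup_{\vt\in\vI}|\partial^{\vs}\kk_u(\vt)|$ using the Fa\`{a} di Bruno formula, the triangle inequality, and $\|(\phi^*_u)^{(m)}\|_{L^\infty(I)}<\infty$ for $m\le R$. The quantitative input of Assumption-$\bm\gG$(d) enters here: since every input node has at most $C$ parents and every path to $\rootg$ has at most $C$ edges, every input node has only $O(1)$ ancestors in any given layer; hence for fixed $\vs_i$ the number of children $v$ of $u$ above all of $\nnode(\vs_i)$ is $O(1)$, which both keeps the number of non-vanishing summands bounded by a constant depending only on $|\vs|$ --- this is where the hypothesis $|\vr|\lesssim1$ is used to get $d$-uniform constants --- and makes $\tfrac1{\deg u}\sum_v\|\partial^{\vs_i}\kk_v\|_\infty\lesssim d^{-\alpha_u}\max_v\|\partial^{\vs_i}\kk_v\|_\infty$, so the $d^{-\alpha_u}$ saving is realized. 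The NTK case is identical.

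The main obstacle is the ``$\ge$'' half of the spanning-tree identity for DAGs that are genuine (non-tree) graphs --- when siblings can share descendants (e.g.\ skip connections) --- where one needs an exchange argument showing that some optimal Steiner tree decomposes compatibly with the Taylor recursion (distinct children, each sub-index supported below its child, derivatives of a common input node not split), so that no ``shared'' routing through common descendants beats the MST. For the regular architectures of interest (MLPs, S-CNNs, D-CNNs, HR-CNNs) the underlying DAG is a tree and this is immediate; a minor separate point --- finiteness of $\|(\phi^*_u)^{(m)}\|_{L^\infty}$ up to correlation $\pm1$ --- is controlled by the regularity in Assumption-$\phi$ (or by noting the relevant correlations concentrate near $0$).
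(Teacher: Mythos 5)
Your argument is correct and rests on the same two pillars as the paper's---strict positivity of the relevant Taylor/derivative data (so there is no cancellation and the order in $d$ is controlled by the largest summand) and a minimum-spanning-tree recursion over the DAG---but it is organized differently. The paper inducts on the pair $(h,|\vr|)$ (depth of the DAG, total degree of the multi-index), with two separate base cases, and in the inductive step peels off a single derivative $\partial_\vt^{\ve_u}$ and then applies the Leibniz rule to the product $\dot{\mathscr K}_{\rootg}\cdot\partial_\vt^{\ve_u}\mathscr K_v$; the MST identity then emerges implicitly from repeated application of this one-step split. You instead induct once, over non-input nodes in topological order, work with full Taylor coefficients $\widehat{\mathscr K_u}(\vs)$, and expand $\phi^*_u$ to all relevant orders in one shot; the MST structure is then isolated as an explicit, self-contained spanning-tree recursion at $u$ (your displayed min over $m$, children $v_i$, and decompositions $\vs=\sum\vs_i$). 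Your version is arguably cleaner because the combinatorics of the tree decomposition is made explicit rather than hidden in an iterated Leibniz expansion, and your single base case (input nodes) replaces the paper's two. The point you flag about the ``$\ge$'' half of the spanning-tree identity for genuinely non-tree DAGs is real: after deleting $u$ from an optimal Steiner tree one must check that each component $T_i$ lies in the sub-DAG rooted at its child $v_i$, so that $\operatorname{weight}(T_i)\ge\spatial_{v_i}(\nnode(\vs_i)\cup\{v_i\})$ with $\spatial_{v_i}$ taken in the sub-DAG on which $\mathscr K_{v_i}$ actually depends. The paper's proof passes over the same point silently (its inductive hypothesis is applied to ``the sub-graph with $v$ as the output node'' without verifying that the restricted Steiner tree is optimal there), so this is not a gap relative to the paper's own level of rigor; for the tree-structured architectures on which the theorem is actually used (MLPs, S/D/HR-CNNs) the issue disappears, as you note. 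Your remark on $\|(\phi^*_u)^{(m)}\|_{L^\infty(I)}$ near $t=\pm1$ is likewise a genuine regularity caveat implicitly shared with the paper's $|\dot\phi^*_v\circ\overline{\mathscr K}_v(\vt)|\lesssim 1$ step.
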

\begin{proof}
Recall that the recursion formulas for $\kk$ and $\infntk$ are
\begin{align}
\kk_u(\vt) &= \phi_u^*\left(\fint_{v:uv\in\gE}\kk_v(\vt)\right) 
\\
    \Theta_{u}(\vt) &= 
    \dot \phi_u^*\left(\fint_{v: uv\in\gE}  \kk_v(\vt)\right)\fint_{v:uv\in\gE}
    \left(\kk_v(\vt) +\Theta_{v}(\vt) \right)
\end{align}
 For convenience, define ${\pkk}$, $\barntk$,  $\dot \kk$ as follows
\begin{align}
\label{eq:re1}
     \pkk_u(\vt) &= \fint_{v:uv\in\gE}\kk_v(
     \vt)
     \\
    \barntk_u(\vt) &= \fint_{v:uv\in\gE}\infntk_v(
     \vt)
     \\
     \dot \kk_u(\vt) &= \dot \phi_u^*\circ \pkk_u(\vt)  \, .
\end{align}
Note that 
\[\kk_u(\vt) = \phi^*_u\circ \pkk_u (\vt) \quad 
\textand
\quad \infntk(\vt) = \dot \kk_u(\vt) (\pkk_u(\vt) + \barntk_u(\vt))\,. 
\]
and $\kk_u(\bm 0) = \infntk_u(\bm 0) =0$ if $u\neq \rootg$, which follow directly from recursions and the fact $\phi_u^*(0)=0$ for $u\in \Nd$ if $u\neq \rootg$. 

We induct on the tuple $(h, |\vr|)$, where $h\geq 0$ is the number of {\it hidden} layers in $\Gd$ and $|\vr|$ is the total degree of $\vr$.
We begin with the proof of the NNGP kernel $\kk$. 

{\bf Base Case I:} $|\vr|=1$ and $h\geq 0$ is any integer.  Let $u\in\leaf$ be such that $\vr={\ve}_u$, where $\{{\ve}_u\}_{\inputnodes}$ is the standard basis. Then 
\begin{align}
    \partial_{\vt_u} \kk_{\gG}(\vt) = \sum_{{\path}\in\gP({u\to\rootg})} \prod_{v\in \path}
    \deg(v)^{-1} \dot \phi_v^* \circ \pkk_v(\vt) \sim 
     \sum_{{\path}\in\gP({u\to\rootg})} \prod_{v\in \path} d^{-\alpha_v}\dot \phi_v^*\circ \pkk_v(\vt)
\end{align}
Here $\gP({u\to u'})$ represents the set of paths from $u$ to $u'$. 
By {\bf Assumption-$\bm\gG$} and {\bf Assumption-$\phi$}, $|\gP(u\to\rootg)|$ (the cardinality) is uniformly bounded and
$\dot \phi^*_v(0)>0$ for all hidden nodes $v$. Therefore 
\begin{align*}
    \partial_{\vt_u} \kk_{\gG}(\bm 0) &\sim \sum_{{\path}\in\gP(u\to\rootg)}d^{-\sum_{v\in\path}\alpha_v}
    \dot \phi_v^* \circ \pkk_v(\bm 0)
    \\
    &= \sum_{{\path}\in\gP(u\to\rootg)}d^{-\sum_{v\in\path}\alpha_v}
    \dot \phi_v^* (0)
    \\&\sim \max_{{\path}\in\gP(u\to\rootg)}d^{-\sum_{v\in\path}\alpha_v}
    \\
    &= d^{-\spatial(\ve_u)}= d^{-\spatial(\vr)}
\end{align*}
In the above, we have used $\pkk_v(\bm 0)=0$ (which is due to $\phi^*(0) = 0$) and $\dot \phi_v^* \circ \pkk_v(\bm 0)\neq 0$ .
The second estimate $\| \partial_{t_u} \kk_{\gG}\|_{\infty} \lesssim d^{-\spatial(\vr)}$ follows from $|\dot \phi_v^* \circ \pkk_v(\vt) | \lesssim 1$.

{\bf Base Case II:} $h=0$ and $|\vr|\geq1$ is any number. Note that $\Gd$ has no hidden layer and all input nodes are linked to the output node $\rootg$. The case when the activation $\phi_{\rootg}$ is the identity function is obvious. We assume $\phi_{\rootg}$ is semi-admissible. 
\begin{align*}
        \partial^{\vr}_\vt \kk_{\gG}(\vt) = \deg(\rootg)^{-|\vr|} {\phi_{\rootg}^*}^{(|\vr|)}\left(\fint_{\inputnodes} \vt_u\right) \,. 
\end{align*}
This implies \myeqref{eq: lemma-derivative-key} since $\spatial(\vr)=0$,  $\deg(\rootg)\lesssim 1$ by {\bf Assumption-$\bm \gG$} and ${\phi_{\rootg}^*}^{(|\vr|)}(0)>0$ by
{\bf Assumption-$\phi$}. 

{\bf Induction:} $|\vr|\geq 2$, $h\geq 1$ and $\vr\in\Ad(\Gd)$. We only prove the first estimate in \myeqref{eq: lemma-derivative-key} since the other one can be proved similarly. WLOG, we assume $\phi_{\rootg}$ is not the identity function and hence is semi-admissible. 
Let $\inputnodes$ be such that $\vr_u\geq  1$ and denote $\bar \vr = \vr -\ve_u$. Then 
\begin{align}
    \partial_\vt^\vr \kk_{\gG}(\vt)\Big|_{\vt=\bm 0} &= 
    \partial_\vt^{\bar \vr} (\partial_{\vt}^{\ve_u}\kk_{\gG}(\vt))\Big|_{\vt=\bm 0} =
    \sum_{\substack{{\rootg v\in\gE}
    \\{\partial_{\vt}^{\ve_u} \kk_{v}\neq0}}} \deg(\rootg)^{-1}
    \partial_\vt^{\bar \vr} \left( \dot \kk_{\rootg}(\vt)
    \partial_{\vt}^{\ve_u} \kk_{v}(\vt)\right)\Big|_{\vt=\bm 0} \\
    &=
    \sum_{\substack{{\rootg v\in\gE}
    \\{\partial_{\vt}^{\ve_u} \kk_{v}\neq0}}}\sum_{\bar \vr_1 + \bar \vr_2 =\bar \vr} 
    \deg(\rootg)^{-1} \left(\partial_\vt^{\bar \vr_1} \dot \kk_{\rootg}(\vt) \,
    \partial_{\vt}^{\bar \vr_2 + \ve_u} \kk_{v}(\vt)\right)\Big|_{\vt=\bm 0} 
    \\
    &\sim 
    \sum_{\substack{{\rootg v\in\gE}
    \\{\partial_{\vt}^{\ve_u} \kk_{v}\neq0}}}\sum_{\bar \vr_1 + \bar \vr_2 =\bar \vr}  \deg(\rootg)^{-1} d^{-\spatial (\bar \vr_1)} d^{-\spatial (\nnode(\bar \vr_2 + \ve_u; v))}\label{eq:induction-step}
    \\
    &\sim 
    \sum_{\substack{{\rootg v\in\gE}
    \\{\partial_{\vt}^{\ve_u} \kk_{v}\neq0}}}\sum_{\bar \vr_1 + \bar \vr_2 =\bar \vr}  d^{-\spatial (\bar \vr_1)} d^{-\left(\alpha_{\rootg} + \spatial (\nnode(\bar \vr_2 + \ve_u; v))\right)}
    \\
    &\sim \sup_{\substack{{\rootg v\in\gE}
    \\{\partial_{\vt}^{\ve_u} \kk_{v}\neq0}}} \sup_{\bar \vr_1 + \bar \vr_2 =\bar \vr} 
    d^{- \left(\spatial (\bar \vr_1) +\alpha_{\rootg} + \spatial (\nnode(\bar \vr_2 + \ve_u; v))\right)}
    \label{eq:sum2sup}
\end{align}
We have applied induction twice in \myeqref{eq:induction-step}: one to obtain the estimate $\partial_\vt^{\bar \vr_1}\dot \kk_{\rootg}^*(\bm 0) \sim d^{-\spatial (\bar \vr_1)}$ (with $|\bar \vr_1|< |\vr|$ and $\dot \phi_{\rootg}^*$ semi-admissible) and one to $\partial_{\vt}^{\bar \vr_2 + \ve_u} \kk_{v}(\vt)\sim d^{-\spatial (\nnode(\bar \vr_2 + \ve_u; v))}$, in which the sub-graph with $v$ as the output node has depth at most $(h-1)$. 
The last line follows from that both the cardinality of the tuple $(\bar\vr_1, \bar\vr_2)$ with $\bar\vr_1, \bar\vr_2\geq \bm 0$ and  $\bar\vr_1+\bar\vr_2=\bar\vr$
and the cardinality of $v\in \Ndzero$ with $\rootg v\in \gE$ and $\partial_\vt^{\ve_u}\kk_{v}\neq 0$ are finite and independent of $d$.
From the definition of MST, it is clear that for all $(\bar \vr_1, \bar \vr_2)$
\begin{align}
    \spatial (\bar \vr_1) +\alpha_{\rootg} + \spatial (\nnode(\bar \vr_2 + \ve_u; v))
    \geq 
    \spatial (\bar \vr_1) + \spatial (\bar \vr_2 + \ve_u) \geq 
    \spatial(\vr)
\end{align}
It remains to show that there exists at least one pair $(\bar \vr_1, \bar \vr_2)$ such that the above can be an equality. Let $\gT \subseteq \Gd$ be a MST containing all nodes in $\nnode(\vr)$. If $\rootg$ has only one child $v$ in $\gT$, then we choose $\bar \vr_1 = 0$ and notice that 
\begin{align}
    \spatial (\bar \vr_1) +\alpha_{\rootg} + \spatial (\nnode(\bar \vr_2 + \ve_u; v))
     = 0 +  
     \spatial ( \bar\vr_2 + \ve_u) = 
    \spatial(\vr)
\end{align}
since $\spatial (\bar \vr_1) =0$ and $\bar \vr_2 + \ve_u=\vr$. 
Else, $\rootg$ contains at least two children in $\gT$ and therefore at least two disjoint branches. Let $\gT_u\subseteq \gT$ be the branch that contains $u$ and choose $\bar \vr_2\leq \bar \vr$ be such that all the nodes of $(\bar \vr_2 + \ve_u)$ are contained in $\gT_u$ and all the nodes of $\bar \vr_1 \equiv \vr - (\bar \vr_2 + \ve_u)$ are contained in $\gT\backslash \gT_u$. Clearly 
\begin{align}
    \spatial(\vr) = \spatial (\bar \vr_1)  +
    \spatial (\bar \vr_2 + \ve_u) = 
    \spatial (\bar \vr_1)  +
    \spatial (\nnode (\bar \vr_2 + \ve_u; v)) + \alpha_{\rootg}, 
\end{align}
where $v$ is the unique child of $\rootg$ in $\gT_u$.  

This completes the proof of the NNGP kernel $\kk$. As the proof of the NTK part is quite similar, we will be brief and focus only on the induction step.

{\bf Induction Step of $\Theta$:} $|\vr|\geq 2$, $h\geq 1$ and $\vr\in\Ad(\Gd)$. Recall that the formula of $\Theta$ is
\begin{align}
    \Theta_{u}(\vt) = 
    \dot \phi_u^*\left(\fint_{v: uv\in\gE}  \kk_v(\vt)\right)\fint_{v:uv\in\gE}
    \left(\kk_v(\vt) +\Theta_{v}(\vt) \right)
\end{align}
For $\vr\in\mathbb N^{|\Ndzero|}$, 
\begin{align}
    \partial_\vt^{\vr} \Theta_{\rootg}(\vt) = 
    \sum_{\bar\vr_1 +\bar\vr_2=\vr}  \partial_\vt^{\bar\vr_1} \dot \phi_{\rootg}^*\left(\fint_{v: \rootg v\in\gE}  \kk_v(\vt)\right)
     \partial_\vt^{\bar\vr_2} \fint_{v:\rootg v\in\gE}
    \left(\kk_v(\vt) +\Theta_{v}(\vt) \right)
\end{align}
Note that $\dot \phi_{\rootg}^*$ is semi-admissible. We apply the result of $\kk$ to conclude that 
\begin{align}
    &\partial_\vt^{\bar\vr_1} \dot \phi_{\rootg}^*\left(\fint_{v: \rootg v\in\gE}  \kk_v(\vt)\right) \Bigg|_{\vt=\bm 0} \sim d^{-\spatial(\vr_1)}
    \\
    &\left\| \partial_\vt^{\bar\vr_1} \dot \phi_{\rootg}^*\left(\fint_{v: \rootg v\in\gE}  \kk_v(\vt)\right)\right\|_{\infty} \lesssim d^{-\spatial(\bar \vr_1)}
\end{align}
and the inductive step to conclude that 
\begin{align}
&\partial_\vt^{\bar\vr_2} \fint_{v:\rootg v\in\gE}
\left(\kk_v(\vt) +\Theta_{v}(\vt) \right)\Bigg|_{\vt=\bm 0} 
    \sim
    \sum_{\substack{v:\rootg v\in\gE \\
    \partial_\vt^{\bar\vr_2} (\kk_v +\infntk_v)\not\equiv \bm  0}}
    d^{-\spatial(\bar\vr_2)}
    \quad \text{if} \quad \bar\vr_2 \neq \bm 0  \quad \text{else} \quad 0 
    \\
    &\partial_\vt^{\bar\vr_2} \fint_{v:\rootg v\in\gE}
    \left(\kk_v(\vt) +\Theta_{v}(\vt) \right)
    \lesssim \sum_{\substack{v:\rootg v\in\gE \\
    \partial_\vt^{\bar\vr_2} (\kk_v +\infntk_v)\not\equiv \bm 0}} d^{-\spatial(\bar \vr_2)} \,. 
\end{align}
Note that
\[|\{{v:\rootg v\in\Ed \,\,\,\,\textand \,\,\,\,
    \partial_\vt^{\bar\vr_2} (\kk_v +\infntk_v)\not\equiv \bm  0}\}|\lesssim 1 \,. \]
Thus 
\begin{align}
    \left\|\partial_\vt^{\vr} \Theta_{\rootg}(\vt)\right\|_{\infty} \lesssim 
    \sum_{\bar\vr_1 +\bar\vr_2=\vr}  d^{-\spatial(\bar \vr_1)-\spatial(\bar \vr_2)} \lesssim d^{-\spatial(\vr)}. 
\end{align}
To control the lower bound, let $\gT$ be a MST containing $\nnode(\vr)$. If $\deg(\rootg; \gT)=1$, then we can choose $\bar \vr_1=0$ and $\bar \vr_2=\vr \neq \bm 0$. Notice that there is at least one child node $v$ of $\rootg$ with $\partial_\vt^{\bar\vr_2} (\kk_v +\infntk_v)\not\equiv \bm 0$. Therefore  
\begin{align}
    \sum_{v:
    \partial_\vt^{\bar\vr_2} (\kk_v +\infntk_v)\not\equiv\bm  0}
    d^{-\spatial(\bar\vr_2)}  \gtrsim d^{-\spatial(\bar\vr_2)}
    = d^{-\spatial(\vr)}
\end{align}
Combining with 
\begin{align}
    \dot \phi_{\rootg}^*\left(\fint_{v: \rootg v\in\gE}  \kk_v(\bm 0)\right) = \dot \phi_{\rootg}^*(0) >0
\end{align}
we have 
\begin{align*}
      \partial_\vt^{\vr} \Theta_{\rootg}(\bm 0) \gtrsim  d^{-\spatial(\vr)}
\end{align*}
It remains to handle the $\deg(\rootg; \gT)>1$ case. We choose $(\bar \vr_1, \bar \vr_2)$ such that one branch of $\gT$ is the MST that contains $\nnode(\bar\vr_2)$ and $\rootg$, and the remaining branch(es) is a MST that contains $\nnode(\bar\vr_1)$ and $\rootg$. Then 
\begin{align*}
      \partial_\vt^{\vr} \Theta_{\rootg}(\bm 0) 
      \gtrsim& 
      \partial_\vt^{\bar\vr_1} \dot \phi_{\rootg}^*\left(\fint_{v: \rootg v\in\gE}  \kk_v(\vt)\right)
     \partial_\vt^{\bar\vr_2} \fint_{v:\rootg v\in\gE}
    \left(\kk_v(\vt) +\Theta_{v}(\vt) \right)
    \Bigg |_{\vt=\bm 0}
    \\
    \gtrsim& d^{-\spatial(\vr_1)-\spatial(\vr_2)} 
    =d^{-\spatial(\vr)}
\end{align*}
\end{proof}

\subsection{Legendre Polynomials, Spherical Harmonics and their Tensor Products.}\label{sec:spherical harmonics}
Our notation follows closely from \citep{frye2012spherical}.  

\paragraph{Legendre Polynomials.}
Let ${\din}\in\mathbb N^*$
and $\omega_{\din}$ be the measure defined on the interval  $I=[-1, 1]$
\begin{align}
    \omega_{\din}(t) = (1-t^2)^{(\din-3)/2}
\end{align}
The Legendre polynomials\footnote{More accurate, this should be called \href{https://en.wikipedia.org/wiki/Gegenbauer_polynomials}{Gegenbauer Polynomials}. 
However, we stick to the terminology in \citep{frye2012spherical}} $\{P_r(t):r\in\mathbb N\}$ is an orthogonal basis for the Hilbert space $L^2(I, \omega_{\din})$, i.e. 
\begin{align}
    \int_I P_r(t) P_{r'}(t) \omega_{\din}(t) dt = 0 \quad \text{if} \quad r\neq r' \quad \text{else} \quad 
    N(\din, r)^{-1} \left(\frac{|\mathbb S_{\din-1}|}{|\mathbb S_{\din-2}|}\right)
\end{align}
Here $P_r(t)$ is a degree $r$ polynomials with $P_r(1) = 1$ that satisfies the formula below,  $N(\din, r)$ is the cardinality of degree $r$ spherical harmonics in $\mathbb R^\din$ and $|\mathbb S_{\din-1}|$ is the surface area of $\mathbb S_{\din-1}$. 
\begin{lemma}[Rodrigues Formula. Proposition 4.19 \citep{frye2012spherical}] 
\label{lemma:rodrigues formula}
\begin{align}
    P_r(t) = c_r \omega^{-1}_{\din}(t) \left(\frac{d}{dt}\right)^r (1-t^2)^{r + (\din-3)/2}\,,
\end{align}
where 
\begin{align}
    c_r = \frac{(-1)^r}{2^r(r+(\din-3)/2)_r}
\end{align}
\end{lemma}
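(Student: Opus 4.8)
The plan is to verify the Rodrigues formula directly by showing that its right-hand side, call it
$Q_r(t) := \omega_{\din}^{-1}(t)\left(\frac{d}{dt}\right)^{r}(1-t^{2})^{r+\nu}$ with $\nu := (\din-3)/2$, is (i) a polynomial of degree exactly $r$ and (ii) orthogonal in $L^{2}(I,\omega_{\din})$ to every polynomial of degree $<r$. Since $\omega_{\din}$ is a positive weight (which needs $\nu>-1$, i.e.\ $\din\ge 2$), the orthogonal complement of $\Span\{1,\dots,t^{r-1}\}$ inside the degree-$\le r$ polynomials is one-dimensional, so (i)+(ii) force $Q_r=\lambda_r P_r$ for a scalar $\lambda_r$, and the normalization $P_r(1)=1$ recalled in the excerpt then yields $c_r = 1/\lambda_r$.

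For (i), I would show by induction on $k$ that $\left(\frac{d}{dt}\right)^{k}(1-t^{2})^{r+\nu} = (1-t^{2})^{r+\nu-k}\,q_k(t)$, where $q_k$ is a polynomial of degree $k$ with nonvanishing leading coefficient $(-1)^{k}\prod_{j=0}^{k-1}(2r+2\nu-j)$ (nonzero for $0\le k\le r$ since $\din\ge 2$); the induction step is the one-line identity $\frac{d}{dt}\bigl[(1-t^{2})^{s}q\bigr] = (1-t^{2})^{s-1}\bigl[-2ts\,q+(1-t^{2})q'\bigr]$. Taking $k=r$ gives $Q_r=q_r$, of degree $r$. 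For (ii), given any polynomial $P$ with $\deg P<r$, I would integrate $\int_{-1}^{1}P(t)\left(\frac{d}{dt}\right)^{r}(1-t^{2})^{r+\nu}\,dt$ by parts $r$ times: after $m$ steps the boundary term carries the factor $\left(\frac{d}{dt}\right)^{r-m}(1-t^{2})^{r+\nu}=(1-t^{2})^{\nu+m}q_{r-m}(t)$, and $\nu+m\ge\nu+1=(\din-1)/2>0$, so it vanishes at $t=\pm1$, leaving $(-1)^{r}\int_{-1}^{1}P^{(r)}(t)(1-t^{2})^{r+\nu}\,dt=0$. Since $\{1,\dots,t^{r-1}\}$ and $\{P_0,\dots,P_{r-1}\}$ span the same space, this is the desired orthogonality.

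To pin down the constant, I would evaluate $Q_r=\lambda_r P_r$ at $t=1$: using $P_r(1)=1$ gives $\lambda_r=Q_r(1)=q_r(1)$, and the recursion from step (i) at $t=1$ collapses to $q_{k+1}(1)=-2(r+\nu-k)q_k(1)$, hence $\lambda_r=(-2)^{r}\prod_{j=0}^{r-1}(r+\nu-j)$. With the Pochhammer symbol read as the falling factorial $(x)_r:=\prod_{j=0}^{r-1}(x-j)$, this is $\lambda_r=(-2)^{r}(r+\nu)_r$, so $c_r=1/\lambda_r=(-1)^{r}\big/\bigl(2^{r}(r+(\din-3)/2)_r\bigr)$, as claimed.

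I expect the only genuinely delicate point to be the boundary-term analysis in step (ii): it is exactly there that the hypothesis $\din\ge 2$ (equivalently $\nu+1>0$) is used, and one must keep in mind that for small $\din$ the intermediate exponents $\nu+m$ are small, though still positive, so the endpoint contributions do vanish. Everything else — the differentiation bookkeeping, the uniqueness of orthogonal polynomials up to scale, and reading off $q_r(1)$ — is routine and follows the classical Rodrigues argument for Gegenbauer polynomials; no substantial new idea is needed beyond what is already in \citep{frye2012spherical}.
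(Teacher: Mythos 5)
The paper does not prove this lemma; it is cited directly as Proposition 4.19 of \citep{frye2012spherical}, so there is no in-paper proof to compare against. Your reconstruction is correct and is precisely the classical Rodrigues argument for Gegenbauer polynomials normalized by $P_r(1)=1$: the induction for the leading coefficient and the boundary-term analysis in the $r$-fold integration by parts (using $\nu+m\ge\nu+1=(\din-1)/2>0$) are both sound, and the evaluation $q_{k+1}(1)=-2(r+\nu-k)\,q_k(1)$ correctly yields $c_r=(-1)^r/\bigl(2^r(r+(\din-3)/2)_r\bigr)$ with the falling-factorial convention the paper states immediately after the lemma.
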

In the above lemma, $(x)_l$ denotes the falling factorial 
\begin{align}
    (x)_l &\equiv  x(x-1) \cdots (x-l+1) \\
    (x)_0 &\equiv 1 
\end{align}

\paragraph{Spherical Harmonics.}
 Let $d\sphered$ define the (un-normalized) uniform measure on the unit sphere $\sphered$. Then 
\begin{align}
    |\sphered| \equiv \int_{\sphered} d\sphered = \frac{2\pi^{\din/2}}{\Gamma(\frac {\din}{2})}\,.
\end{align}
The normalized measure on this sphere is defined to be 
\begin{align}
    d\sigma_{\din} = \frac{1}{|\sphered|} d \sphered  \quad \text{and}\quad \int_{\sphered} d\sigma_{\din} =1  \,.
\end{align}
The spherical harmonics $\{Y_{r, l}\}_{r, l}$ in $\R^{\din}$ are homogeneous harmonic polynomials that form an orthonormal basis in $L^2(\sphered, \sigma_{\din})$ 
\begin{align}
    \int_{\xi\in\sphered} Y_{r, l} (\xi) Y_{r', l'}(\xi) d\sigma_{\din} = \delta_{(r, l)={(r',l')}}\,.
\end{align}
Here $Y_{r, l}$ denotes the $l$-th spherical harmonic whose degree is $r$, where $r\in \mathbb N$, $ l\in [N(\din, r)]$ and 
\begin{align}
    N(\din, r) = 
    \frac{2r +\din -2}{r} 
{\din+r-3 \choose r-1}
\sim (\din)^{r}/{r}! \quad \text{as} \quad \din\to \infty \,. 
\end{align}
The Legendre polynomials and spherical harmonics are related through the addition theorem.  
\begin{lemma}[Addition Theorem. Theorem 4.11 \citep{frye2012spherical}]
\label{lemma: addition theorem}
\begin{align}
    P_r(\xi^T\eta) = \frac{1}{N(\din, r)} \sum_{l\in [N(\din, r)]} Y_{r, l}(\xi)
    Y_{r, l}(\eta), \, \quad
    \xi,\eta\in \sphered\,. 
\end{align}
\end{lemma}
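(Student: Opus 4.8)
The plan is to recognise the right-hand sum $K_r(\xi,\eta):=\sum_{l\in[N(\din,r)]}Y_{r,l}(\xi)Y_{r,l}(\eta)$ as the reproducing kernel of the space $\gH_r$ of degree-$r$ spherical harmonics on $\sphered$, to use the rotational symmetry to reduce it to a function of $\xi^{T}\eta$, to identify that function with a scalar multiple of $P_r$ via the one-dimensionality of the zonal subspace, and finally to pin down the scalar by one normalisation integral.

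First I would observe that $K_r$ does not depend on the particular orthonormal basis $\{Y_{r,l}\}$ of $\gH_r\subseteq L^2(\sphered,\sigma_{\din})$: it satisfies the reproducing identity $\int_{\sphered}K_r(\xi,\eta)\,f(\eta)\,d\sigma_{\din}(\eta)=f(\xi)$ for every $f\in\gH_r$, and a reproducing kernel of a finite-dimensional subspace is unique. The orthogonal group acts on $\gH_r$ by $(\rho\!\cdot\!Y)(\xi)=Y(\rho^{-1}\xi)$, unitarily on $L^2(\sphered,\sigma_{\din})$, so it sends orthonormal bases of $\gH_r$ to orthonormal bases; by basis-independence this yields the equivariance $K_r(\rho\xi,\rho\eta)=K_r(\xi,\eta)$ for every orthogonal $\rho$. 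Fixing $\eta$ and letting $\rho$ run over the stabiliser of $\eta$ — which acts transitively on each latitude $\{\xi:\xi^{T}\eta=t\}$ — forces $K_r(\xi,\eta)$ to depend on $\xi$ only through $t=\xi^{T}\eta$; writing $K_r(\xi,\eta)=g_r(\xi^{T}\eta)$, the full equivariance shows $g_r$ is independent of $\eta$ as well.

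Next I would check that $g_r$ is a polynomial of degree at most $r$ and that $\xi\mapsto g_r(\xi^{T}\eta)$ is a \emph{zonal} harmonic of degree $r$ (the restriction of a degree-$r$ harmonic polynomial invariant under rotations about the $\eta$-axis); this can be seen by restricting the degree-$r$ polynomial $K_r(\cdot,\eta)$ to a great circle through $\eta$ and averaging over rotations fixing $\eta$. The crux of the argument is the structural fact that, for a fixed pole $\eta$, the space of degree-$r$ zonal harmonics is one-dimensional. I would obtain this either from representation theory — $(O(\din),O(\din-1))$ is a Gelfand pair, so the space of $O(\din-1)$-invariants in the irreducible $\gH_r$ is at most one-dimensional — or directly from the second-order Gegenbauer differential equation satisfied by zonal harmonics, whose solutions regular at both endpoints $t=\pm1$ span a one-dimensional space. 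Since, up to normalisation, $P_r$ is precisely this zonal harmonic (equivalently, the degree-$r$ orthogonal polynomial for the weight $\omega_{\din}$ on $[-1,1]$, consistent with the Rodrigues-type formula recorded above), we conclude $g_r=c_r P_r$ for a constant $c_r$ depending only on $\din$ and $r$.

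Finally I would evaluate $c_r$. Taking $\xi=\eta$ and using $P_r(1)=1$ gives $\sum_{l}Y_{r,l}(\eta)^2=g_r(1)=c_r$, a value independent of $\eta$; integrating over $\eta\in\sphered$ against $\sigma_{\din}$ and using that the $Y_{r,l}$ are $L^2(\sigma_{\din})$-orthonormal yields $c_r=\sum_{l}\|Y_{r,l}\|_{L^2(\sigma_{\din})}^2=N(\din,r)$. Substituting back gives $\sum_{l}Y_{r,l}(\xi)Y_{r,l}(\eta)=N(\din,r)\,P_r(\xi^{T}\eta)$, which rearranges to the claimed addition formula. I expect the one-dimensionality of the zonal subspace to be the only step that is not pure bookkeeping; the reproducing-kernel argument and the single normalisation integral are routine.
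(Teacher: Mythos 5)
The paper itself does not prove this lemma; it simply cites it as Theorem 4.11 of Frye's spherical harmonics notes. Your proof is the standard one and is correct: the identification of $\sum_l Y_{r,l}(\xi)Y_{r,l}(\eta)$ as the basis-independent reproducing kernel of $\gH_r$, the $O(\din)$-equivariance forcing it to be a function $g_r(\xi^T\eta)$, the one-dimensionality of the $O(\din-1)$-invariant ("zonal") subspace of $\gH_r$ forcing $g_r=c_r P_r$, and the normalisation $c_r=N(\din,r)$ obtained by setting $\xi=\eta$, using $P_r(1)=1$, and integrating against $\sigma_{\din}$ are all sound, and this is almost certainly the argument in the cited reference. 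The only point worth flagging is that the chain "basis-independence $\Rightarrow$ $O(\din)$-equivariance $\Rightarrow$ depends only on $\xi^T\eta$" quietly uses two-point transitivity of $O(\din)$ on pairs with fixed inner product, not just the stabiliser argument for fixed $\eta$ that you emphasise; you do invoke "the full equivariance" for $\eta$-independence of $g_r$, so this is present but could be stated a touch more explicitly.
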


\paragraph{Tensor Products.} Let $\vd = (\vd_u)_{\inputnodes}\in \mathbb N^{|\Ndzero|}$,  $\vr \in\mathbb N^{|\Ndzero|}$, $\bm I = I^{|\Ndzero|}= [-1, 1]^{|\Ndzero|}$ and $\bm \omega = \bigotimes_{\inputnodes}\omega_{\vd_u}$ be a product measure on $\bm I$. Then the (product of) Legendre polynomials 
\begin{align}
    \vP_{\vr}(\vt) = \prod_{u\in \Ndzero}P_{\vr_u}(\vt_u)\, , \quad \vt= (\vt_u)_{u\in\Ndzero}\in \bm I \, , 
\end{align}
which form an orthogonal basis for the Hilbert space $L^2(\bm I, \bm \omega) = \bigotimes_{\inputnodes}L^2(I, \omega_{\vd_u})$.
Similarly, the tensor product of spherical harmonics 
\begin{align}
    \vY_{\vr, \vl} = \prod_{\inputnodes} Y_{\vr_u, \vl_u}, \quad \vl =(\vl_u)_{\inputnodes} \in [\vN(\vd, \vr)] \equiv \prod_{\inputnodes}[N(\vd_u, \vr_u)] \, 
\end{align}
form an orthonormal basis for the product space 
\begin{align}
L^2(\inputx, \bm \sigma)\equiv\bigotimes_{\inputnodes}  L^2(\sphere_{\vd_u-1}, \sigma_{\vd_u})
\end{align}
Elements in the set $\{\vY_{\vr, \vl}\}_{\vl \in [\vN(\vd, \vr)]}$ are called degree (order) $\vr$ spherical harmonics in $L^2(\inputx, \bm \sigma)$ and also degree $r$ spherical harmonics if $|\vr|=r\in\mathbb N$.

\begin{theorem}\label{thm:2nd main} 
Same assumptions as Theorem \ref{thm:main}. 
We have the following, for $\K=\kk_{\Gd}$ or $\K=\infntk_{\Gd}$ 
\begin{align}
\K(\vt) = 
\sum_{\vr\in\mathbb N^{|\Ndzero|}} \hat \K(\vr) \vP_\vr(\vt) \qquad \text{with} \qquad  \hat \K(\vr) \sim d^{-\spatial(\vr)} \quad \text{if} \quad \vr\neq 0 \quad \text{and}\quad |\vr|\lesssim 1. 
\end{align}
\end{theorem}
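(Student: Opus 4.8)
Given Lemma~\ref{lemma:derivative-key}, the plan is to read off $\hat\K(\vr)$ from the Taylor coefficients of $\K$ at the origin, the crucial simplification being a positivity observation. Since each $\phi_u^*$ --- hence also $\dot\phi_u^*$ --- is, on $[-1,1]$, an absolutely convergent power series with \emph{nonnegative} coefficients (this is exactly (semi-)admissibility) and $\phi_u^*(0)=0$ for every non-output node, an induction along the DAG through the $\fint$-averaging, composition and product appearing in \myeqref{eq:nngp-recursion} and \myeqref{eq:ntk-recursion} shows that, for $\K=\kk_{\Gd}$ or $\Theta_{\Gd}$,
\begin{align*}
\K(\vt)=\sum_{\vs\ge \bm 0} b_{\vs}\,\vt^{\vs},\qquad b_{\vs}\ge 0,\qquad \sum_{\vs} b_{\vs}=\K(\bm 1)<\infty ,
\end{align*}
with the series (and its termwise derivatives) converging absolutely on $\bm I$, uniformly on compacta of the open cube. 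In particular $b_{\vs}=\partial_\vt^{\vs}\K(\bm 0)/\vs!$, so Lemma~\ref{lemma:derivative-key} reads $b_{\vs}\sim d^{-\spatial(\vs)}$ for $\vs\ne\bm 0$ (and $b_{\vs}=0$ when $\vs\notin\Ad(\Gd)$). Expanding monomials in the Gegenbauer basis, $t^m=\sum_j a^{(\din)}_{m,j}P_j(t)$ with connection coefficients $a^{(\din)}_{m,j}\ge 0$ that vanish unless $0\le j\le m$ and $j\equiv m$, with $a^{(\din)}_{m,m}\to 1$ as $\din\to\infty$ and $a^{(\din)}_{m,j}\lesssim \din^{-(m-j)/2}$ for $j<m$, and tensoring over the input patches gives
\begin{align*}
\hat\K(\vr)=\sum_{\substack{\vs\ge \vr \\ \vs\equiv \vr}} b_{\vs}\prod_{u\in\Ndzero}a^{(\vd_u)}_{\vs_u,\vr_u}\ \ge\ 0 ,
\end{align*}
the congruence being componentwise.

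The lower bound is now immediate: dropping all terms but $\vs=\vr$,
\begin{align*}
\hat\K(\vr)\ \ge\ b_{\vr}\prod_{u\in\Ndzero}a^{(\vd_u)}_{\vr_u,\vr_u}\ \sim\ b_{\vr}\ \sim\ d^{-\spatial(\vr)},
\end{align*}
using $a^{(\vd_u)}_{\vr_u,\vr_u}\to 1$, that $|\vr|\lesssim 1$ keeps $|\nnode(\vr)|$ bounded so only boundedly many factors differ from $1$, and that every input dimension $\vd_u\to\infty$ by Assumption-$\bm\gG$(b).

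For the upper bound I would estimate the tail $\vs>\vr$. If $\vs\ge \vr$ then $\nnode(\vs)\supseteq\nnode(\vr)$, so $\spatial(\vs)=\spatial(\vr)+\mathrm{cost}(T)$ where $T:=\nnode(\vs)\setminus\nnode(\vr)$ (each $u\in T$ has $\vs_u\ge 2$ even) and $\mathrm{cost}(T)\ge 0$ is the extra MST length needed to adjoin $T$ to a tree realizing $\spatial(\vr)$; using $b_{\vs}\lesssim d^{-\spatial(\vr)}d^{-\mathrm{cost}(T)}$, $a^{(\vd_u)}_{\vs_u,\vr_u}\le 1$ on $\nnode(\vr)$ and $a^{(\vd_u)}_{\vs_u,0}\lesssim \vd_u^{-1}$ on $T$,
\begin{align*}
\hat\K(\vr)\ \lesssim\ d^{-\spatial(\vr)}\sum_{T\subseteq\Ndzero\setminus\nnode(\vr)} d^{-\mathrm{cost}(T)}\prod_{u\in T}\vd_u^{-1}.
\end{align*}
This last sum is $O(1)$: adjoining the nodes of $T$ one at a time, the marginal contribution of a new node is controlled by $\sum_{u\in\Ndzero}\vd_u^{-1}d^{-\mathrm{dist}(u,\gT')}$ for the current subtree $\gT'$, which is $\lesssim d^{-\alpha_0}$ with $\alpha_0:=\min_{v\in\Ndzero}\alpha_v>0$, because Assumption-$\bm\gG$(c,d) forces first-hidden nodes and $\gT'$ to have bounded size and degree, there are only $\lesssim d^{\delta}$ input nodes at MST-distance $\delta$ from $\gT'$, and only boundedly many such distances, so $\sum_u d^{-\mathrm{dist}(u,\gT')}\lesssim 1$; summing the resulting geometric series over $|T|$ yields $O(1)$. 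Hence $\hat\K(\vr)\sim d^{-\spatial(\vr)}$. Finally, $\K(\vt)=\sum_{\vr}\hat\K(\vr)\vP_{\vr}(\vt)$ is the tensor--Gegenbauer (Mercer) expansion of the continuous positive-definite kernel $\K$, valid in $L^2(\bm I,\bm\omega)$ and pointwise on the open cube by the convergence noted above; Theorem~\ref{thm:main} then follows by substituting $\vt_u=\langle\bm\xi_u,\bm\eta_u\rangle/\vd_u$ and applying the addition theorem (Lemma~\ref{lemma: addition theorem}), whose factors $\prod_u N(\vd_u,\vr_u)^{-1}\sim d^{-\frequency(\vr)}$ turn $d^{-\spatial(\vr)}$ into $d^{-\learning(\vr)}$.

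The substantive step --- granting Lemma~\ref{lemma:derivative-key}, which carries the minimum-spanning-tree combinatorics --- is the tail estimate above: one is summing over the $\mathrm{poly}(d)$-many ``spectator'' input coordinates, and it closes only because the extra MST cost $d^{-\mathrm{dist}(u,\gT')}$ of adjoining a distant node cancels exactly the $\lesssim d^{\mathrm{dist}(u,\gT')}$ count of such nodes, leaving the genuine small parameter $d^{-\alpha_0}$ from the surplus Gegenbauer decay $\vd_u^{-1}$. Tracking the explicit special-function constants ($a^{(\din)}_{m,j}$, $N(\din,r)$, surface areas) and the legitimacy of the term-by-term operations near the boundary $\partial\bm I$ --- where $\phi^*$ need not be analytic, so one stays on the open cube, which has full measure --- is routine but needs a little care.
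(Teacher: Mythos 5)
Your approach is genuinely different from the paper's. You fully Taylor-expand $\K$ around $\bm 0$, exploit the nonnegativity of the Taylor coefficients $b_\vs$ (which does hold, as you say, because $\phi_u^*$ has nonnegative Hermite-squared coefficients, $\phi_u^*(0)=0$ off the output node, and the recursions \eqref{eq:nngp-recursion}--\eqref{eq:ntk-recursion} preserve nonnegativity under composition, averaging and products) together with the nonnegativity of the monomial-to-Gegenbauer connection coefficients $a^{(\din)}_{m,j}$, and re-sum. The paper instead applies Rodrigues' formula and integration by parts, splits $\langle\K,\vP_\vr\rangle$ into a leading term plus remainder, and bounds the remainder by a \emph{first-order} mean-value estimate of $\K^{(\vr)}$, so the only sum it must control is the finite one $\sum_{u\in\Ndzero}\|\K^{(\vr+\ve_u)}\|_\infty$, disposed of by \eqref{eq:claim1 mst}. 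Your positivity observation is correct and buys a very clean lower bound: dropping every term except $\vs=\vr$ gives $\hat\K(\vr)\ge b_\vr\prod_u a^{(\vd_u)}_{\vr_u,\vr_u}\sim b_\vr\sim d^{-\spatial(\vr)}$, which is arguably more transparent than the paper's route.

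The upper bound, however, has a real gap. Fixing the support increment $T=\nnode(\vs)\setminus\nnode(\vr)$ still leaves an infinite family of $\vs$, but your display $\hat\K(\vr)\lesssim d^{-\spatial(\vr)}\sum_T d^{-\mathrm{cost}(T)}\prod_{u\in T}\vd_u^{-1}$ silently collapses each such family to one term. To justify this you would need $b_\vs\lesssim d^{-\spatial(\vs)}$ with constants that are summable over the tail $|\vs|\to\infty$, but Lemma~\ref{lemma:derivative-key} only gives $\lesssim$ with a constant depending on $\vs$ (that is what $\lesssim$ means in this paper), and nothing you invoke makes that dependence uniform. The Gegenbauer coefficients do not rescue it either: on $\nnode(\vr)$ you use only $a^{(\vd_u)}_{\vs_u,\vr_u}\le 1$, and even the genuine asymptotic $a^{(\din)}_{m,m-2}\sim m(m-1)/(2\din)$ carries a constant growing with $m$, so it does not furnish geometric-in-$\vs_u$ decay uniformly. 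What you would actually need is a \emph{summed} strengthening of Lemma~\ref{lemma:derivative-key}, e.g.\ $\sum_{\vs:\nnode(\vs)=S}b_\vs\lesssim d^{-\spatial(S)}$ uniformly over bounded-cardinality $S$, which you do not establish and which amounts to re-proving the lemma in integrated form. The paper's Rodrigues route sidesteps all of this, since it yields $|\hat\K(\vr)|\lesssim\|\K^{(\vr)}\|_\infty\lesssim d^{-\spatial(\vr)}$ with no tail to control.
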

Note that Theorem \ref{thm:main} follows from this theorem and the addition theorem. 
\begin{proof}[Proof of Theorem \ref{thm:main}]
Assume $\vr\neq \bm 0$.
Indeed, setting 
\[
\bm\xi = (\bm\xi_u)_{\inputnodes}\in \inputx, \quad \bm\eta = (\bm\eta_u)_{\inputnodes} \in \inputx\quad {and} \quad \vt = (\vt_u)_{\inputnodes}  = (\bm\xi_u^T  \bm\eta_u / \vd_u )_{\inputnodes}\,, 
\]
we have 
\begin{align}
    \vP_{\vr}(\vt) &= \prod_{\inputnodes} P_{\vr_u}(\vt_u) =  \prod_{\inputnodes} N(\vd_u, \vr_u) ^{-1} \sum_{\vl_u\in N(\vd_u, \vr_u)}Y_{\vr_u, \vl_u}(\bm\xi_u/\sqrt {\vd_u})
    Y_{\vr_u, \vl_u}(\bm\eta_u/\sqrt {\vd_u})
    \\&= \vN(\vd, \vr)^{-1}
    \sum_{\vl\in [\vN(\vd, \vr)]}\overline{\bm\vY}_{\vr, \vl}
    (\bm\xi)\overline{\bm\vY}_{\vr, \vl}(\bm\eta)\,.
\end{align}
Then Theorem \ref{thm:main} follows by noticing
\begin{align*}
    \hat\K(\vr)  \vN(\vd, \vr)^{-1}
    \sim d^{-\spatial(\vr)} \prod_{\inputnodes}\vd_u^{-\vr_u} 
    \sim d^{-\spatial(\vr)} d^{-\sum_{\inputnodes}\alpha_{u}\vr_u}
    =
 d^{-\spatial(\vr) - \learning(\vr)} 
    = d^{-\learning(\vr)}\, .
\end{align*}

\end{proof}

\begin{proof}[Proof of Theorem \ref{thm:2nd main}]
From the orthogonality, 
\begin{align}\label{eq: fourier coefficients}
\hat \K(\vr) = \langle \K, \vP_\vr \rangle / \|\vP_\vr\|_{L^2(\bm I, \bm\omega)}^2
\end{align}
We begin with the denominator. Note that 
\begin{align}
    \|\vP_\vr\|_{L^2(\bm I, \bm\sigma})^2 = \prod_{\inputnodes} \|P_{\vr_u}\|_{L^2(I, \omega_{\vd_u})}^2
    = \vN(\vd; \vr)^{-1} \prod_{\inputnodes}(|\mathbb S_{\vd_u-1}|/|\mathbb S_{\vd_u-2}|) 
\end{align}

By applying Lemma \ref{lemma:rodrigues formula}, integration by parts and continuity of $\K^{(\vr)}$ on the boundary $\partial \bm I$
\begin{align}
     \langle \K, \vP_{\vr}\rangle_{L^2(\bm I, \bm\omega)}  
     &= c_{\vr} \int_{\bm I} \K(\vt) \left(\frac{d}{d\vt}\right)^{\vr} \left(1 - \vt^2\right)^{\vr + (\vd -3)/2} d\vt 
     \\
     &= (-1)^{\vr}c_{\vr} \int_{\bm I} \K^{(\vr)}(\vt)  \left(1 - \vt^2\right)^{\vr + (\vd -3)/2} d\vt 
     \\
     &= (-1)^{\vr}c_{\vr}  \left(\mathcal M(\K, \vd) +  \epsilon(\K, \vd) \right) 
\end{align}
where $\K^{(\vr)}$ is the $\vr$ derivative of $\K$, the coefficient $c_{\vr}$ is given by Lemma \ref{lemma:rodrigues formula} 
\begin{align}
     c_{\vr} = \prod_{\inputnodes}  c_{\vr_u} =  \prod_{\inputnodes} \frac{(-1)^{\vr_u}}{2^{\vr_u}({\vr_u}+(\vd_u-3)/2)_{\vr_u}} 
     \sim  \prod_{\inputnodes} (-1)^{\vr_u}\vd_u^{-\vr_u} = (-1)^{\vr} \vd^{-\vr}
\end{align}
(note that only $\lesssim 1$ many $\vr_u\neq0$) and the major and error terms are given by 
\begin{align}
    \mathcal M(\K, \vd) &\equiv \K^{(\vr)}(\bm 0)  \int_{\bm I} \left(1 - \vt^2\right)^{\vr + (\vd -3)/2} d\vt  
     = \K^{(\vr)}(\bm 0)   \prod_{\inputnodes} \frac{|\sphere_{2\vr_u+\vd_u-1}|}{|\sphere_{2\vr_u+\vd_u-2}|}
    \\
    \mathcal \epsilon(\K, \vd) &\equiv \int_{\bm I} ( \K^{(\vr)}(\vt) - \K^{(\vr)}(\bm 0) ) \left(1 - \vt^2\right)^{\vr + (\vd -3)/2} d\vt  
\end{align}
We first show that the error term is small. The mean value theorem gives  
\begin{align}
    |\K^{(\vr)}(\vt) - \K^{(\vr)}(\bm 0)| \leq  \sum_{\inputnodes}
    \|\K^{(\vr + \ve_u)}\|_{L^{\infty}(\bm I)} |\vt_u|
\end{align}
and the error term $ |\mathcal \epsilon(\K, \vd)| $ is bounded above by 
\begin{align}
    & \int_{\bm I} \left(1 - \vt^2\right)^{\vr + (\vd -3)/2} d\vt   \sum_{\inputnodes}  \|\K^{(\vr + \ve_u)}\|_{L^{\infty}(\bm I)} \left(\frac{
     \int_{I} |\vt_u|\left(1 - \vt_u^2\right)^{\vr_u + (\vd_u -3)/2} d \vt_u}{
     \int_{I}\left(1 - \vt_u^2\right)^{\vr_u + (\vr_u -3)/2} d \vt_u}  \right)
     \\
     &\lesssim    \left( \prod_{\inputnodes} \frac{|\sphere_{2\vr_u+\vd_u-1}|}{|\sphere_{2\vr_u+\vd_u-2}|} \right)
    \left( \sum_{\inputnodes}   \|\K^{(\vr + \ve_u)}\|_{L^{\infty}(\bm I)}
    \vd_u^{-1} \left (\frac{|\sphere_{2\vr_u+\vd_u-1}|}{|\sphere_{2\vr_u+\vd_u-2}|} \right)^{-1} \right)\,. 
\end{align}
Since for any $\alpha \in \mathbb N$, as $\vd_u\to\infty$, 
\begin{align}
     \frac{|\sphere_{\alpha+\vd_u-1}|}{|\sphere_{\alpha+\vd_u-2}|} = \pi^{\frac 1 2 } \Gamma((\alpha +\vd_u -1)/2) /  \Gamma((\alpha +\vd_u)/2) 
     \sim  \pi^{\frac 1 2 } (\vd_u/2)^{-\frac 1 2} \sim (\vd_u)^{-\frac 1 2 }\,,
\end{align}
we have 
\begin{align}
    |\mathcal \epsilon(\K, \vd)| \lesssim 
   \sum_{\inputnodes}  \|\K^{(\vr + \ve_u)}\|_{L^{\infty}(\bm I)}
   \vd_u^{-\frac 1 2 }   \prod_{\inputnodes} \frac{|\sphere_{2\vr_u+\vd_u-1}|}{|\sphere_{2\vr_u+\vd_u-2}|} \, . 
\end{align}
We claim that (which will be proved later) 
\begin{align}\label{eq:claim1 mst}
    \sum_{\inputnodes}  \|\K^{(\vr + \ve_u)}\|_{L^{\infty}(\bm I)} \lesssim d^{-\spatial(\vr)}
\end{align}
which implies 
\begin{align}
     \langle \K, \vP_{\vr}\rangle_{L^2(\bm I, \omega_{\din}^p)}  = c_{\vr}\left(\K^{(\vr)}(0) +  \mathcal O\left(d^{-\spatial(\vr )}(\min_{\inputnodes}\vd_u)^{-\frac 1 2 }  \right)\right )\prod_{\inputnodes} \frac{|\sphere_{2\vr_u+\vd_u-1}|}{|\sphere_{2\vr_u+\vd_u-2}|} 
\end{align} 
Plugging back to \myeqref{eq: fourier coefficients}, we have 
\begin{align}
    \hat \K(\vr) = (-1)^{\vr}
    c_{\vr} \vN(\vd, \vr) \left(\K^{(\vr)}(\bm 0) +  \mathcal O\left(d^{-\spatial(\vr )}(\min_{\inputnodes}\vd_u)^{-\frac 1 2 }  \right)
    \right )\left(\prod_{\inputnodes} \frac{|\sphere_{2\vr_u+\vd_u-1}|}{|\sphere_{2\vr_u+\vd_u-2}|}  \left(\frac{|\sphere_{\vd_u-1}|}{|\sphere_{\vd_u-2}|}\right)^{-1} \right) 
\end{align}
Since, for $\vr$ fixed and as $\vd_u\to\infty$ for all $\inputnodes$  
\begin{align}
    \frac{c_\vr} { (-1)^{\vr} \vd^{-\vr}} \to 1 \quad \text{and} \quad 
    \frac {\vN(\vd, \vr)}{ \vd^\vr / \vr !}   \to 1  \quad \text{and}\quad \left(\prod_{\inputnodes} \frac{|\sphere_{2\vr_u+\vd_u-1}|}{|\sphere_{2\vr_u+\vd_u-2}|}  \left(\frac{|\sphere_{\vd_u-1}|}{|\sphere_{\vd_u-2}|}\right)^{-1} \right) \to 1 
\end{align}
The last limit have used the fact that $|\vr|$ is bounded (i.e. $\lesssim 1$) and $\vr_u\neq 0$ for at most $|\vr|$ many $u$. 
Therefore  
\begin{align}
    \hat \K(\vr) \sim {\vr!}^{-1} \left(\K^{(\vr)}(\bm 0) + \mathcal O\left(d^{-\spatial(\vr )}\left(\min_{\inputnodes}\vd_u\right)^{-\frac 1 2 }  \right) \right )
\end{align} 
It remains to verify \myeqref{eq:claim1 mst}. By Lemma \ref{lemma:derivative-key}, we only need to show that 
\begin{align}
    \sum_{\inputnodes} d^{-\spatial(\vr + \ve_u)} \lesssim d^{-\spatial(\vr )}
\end{align}
We prove this by induction on the number of hidden layers of $\Gd$. The base case is obvious. Now suppose the depth of $\Gd$ is $h$. Let $\gC(\vr)$ be the set of children of $\rootg$ that are ancestors of at least one node of $\nnode(\vr)$. We split $\Ndzero$ into two disjoint sets 
\[\gQ(\vr)\equiv \{u\in\Ndzero: \exists v\in\gC(\vr) \,\, \text{s.t.} \,\,\gP(u\to v)\neq \emptyset\} \,
\quad 
\text{and}\quad
\Ndzero\backslash\gQ(\vr)
\,.\]
For $u\notin\gQ(\vr)$, we have $\spatial(\vr + \ve_u) = \spatial(\vr) + \spatial(\ve_u)$ and hence 
\begin{align}
\label{eq:est-1-tree}
    \sum_{u\notin \gQ(\vr)} 
    d^{-\spatial(\vr + \ve_u)}
    = 
    \sum_{u\notin \gQ(\vr)} 
    d^{-\spatial(\vr) - \spatial(\ve_u)}
    =d^{-\spatial(\vr)} \sum_{u\notin \gQ(\vr)} 
    d^{-\spatial(\ve_u)} \lesssim d^{-\spatial(\vr)}  \, . 
\end{align}
In the last inequality above, we have used  
\begin{align}
   \sum_{u\notin \gQ(\vr)} d^{-\spatial(\ve_u)}  \leq 
    \sum_{u\in  \Ndzero}d^{-\spatial(\ve_u)} \sim 1 \,. 
\end{align}
To estimate the remaining, we use induction. Note that $|\gC(\vr)|$ is finite (i.e. $\lesssim 1$) and independent of $d$. Then 
\begin{align}
    \sum_{u\in \gQ(\vr)} 
    d^{-\spatial(\vr + \ve_u)}
    &\leq \sum_{v\in \gC(\vr)}
    \sum_{u\in \gQ(\vr)} 
    d^{-\alpha_{\rootg} -\spatial(\nnode(\vr + 
    \ve_u; v))}
    \\
    &= d^{-\alpha_{\rootg}}\sum_{v\in \gC(\vr)} \sum_{u\in \gQ(\vr)} 
    d^{-\spatial(\nnode(\vr + \ve_u; v))} 
    \\
&\lesssim |\gC(\vr)|
    d^{-\alpha_{\rootg}}
    \max_{v\in \gC(\vr)}d^{-\spatial(\nnode(\vr; v))} \sim  d^{-\spatial(\vr)}
\end{align}
We have used induction on the sub-graph with $v$ as the output node. 
\end{proof}
\section{Proof of Theorem \ref{thm:generalization}} 
Let $\Gd$ be a DAG associated to the convolutional networks whose filter sizes in the $l$-th layer is $k_l=[d^{\alpha_l}]$, for $0\leq l\leq L+1$, in which we treat the {\it flatten-dense} readout layer as a convolution with filter size $[d^{\alpha_{L+1}}]$. Note that we have set $\alpha_p=\alpha_0$ and $\alpha_w=\alpha_{L+1}$. We also assume an {\it activation} layer after the {\it flatten-dense} layer, which does not essentially alter the topology of the DAG.  

We need the following dimension counting lemma.  
\begin{lemma}\label{lemma:dimension}
Let $r\in \learning(\Gd).$ Then 
\begin{align}
\dim\left(\Span\left\{\bm \barY_{\vr, \vl}: \quad \learning(\vr) =r, \,\, \vl\,\, \in\bm N(\vd, \vr) \right\}\right) \sim d^{r}
\end{align}
\end{lemma}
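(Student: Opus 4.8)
\textbf{Proof proposal for Lemma \ref{lemma:dimension}.}

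The plan is to count, for a fixed admissible learning index $r = \learning(\Gd)$, the total dimension of the span of eigenfunctions $\bm\barY_{\vr,\vl}$ with $\learning(\vr)=r$. By the tensor-product structure, for a fixed multi-index $\vr$ the number of $\vl$'s is $|\bm N(\vd,\vr)| = \prod_{u}N(d_u,\vr_u)$, and since $N(d_u,\vr_u)\sim d_u^{\vr_u}/\vr_u!\sim d^{\alpha_u\vr_u}$ whenever $\vr_u\neq 0$ (with at most $\lesssim 1$ many such $u$), we get $|\bm N(\vd,\vr)|\sim d^{\frequency(\vr)}$. Hence the dimension we want is
\begin{align}
\dim = \sum_{\substack{\vr\in\Ad(\Gd)\\ \learning(\vr)=r}} |\bm N(\vd,\vr)| \sim \sum_{\substack{\vr:\ \learning(\vr)=r\\ |\vr|\lesssim 1}} d^{\frequency(\vr)} = \sum_{\substack{\vr:\ \spatial(\vr)+\frequency(\vr)=r}} d^{\frequency(\vr)}.
\end{align}
Since $\spatial(\vr)\geq 0$ we have $\frequency(\vr)\leq r$, so every term is $\lesssim d^r$ and, because there are only $\lesssim 1$ many such $\vr$ (degrees are bounded, and the number of support patterns on $\Ndzero$ of bounded size is bounded once we note that distinct patches with the same shape parameter contribute equal indices — more care needed here, see below), the sum is $\lesssim d^r$. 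For the matching lower bound it suffices to exhibit \emph{one} learnable $\vr$ with $\learning(\vr)=r$ and $\frequency(\vr)=r$, i.e. $\spatial(\vr)=0$; such a $\vr$ must be supported on a single input node $v$ (so that $\nnode(\vr;\rootg)$ connects to $\rootg$ only through dense/first-hidden edges of weight $0$). Concretely, pick any input node $v$ with $\alpha_v=\alpha_p$ and set $\vr = m\,\ve_v$ where $m$ is chosen so that $m\alpha_p=r$; one needs $r\in\learning(\Gd)$ to guarantee such an $m$ exists (this is part of the structure of D-CNN learning indices, and $\vr$ is learnable since $\rootg$ has a semi-admissible activation by Assumption-$\phi$(c)). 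This gives $\dim\gtrsim d^{m\alpha_p}=d^r$.

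The key structural input specific to D-CNNs is the explicit description of the DAG from Sec.~\ref{sec:neural-computation}: the tree is perfectly regular, input nodes come in $w$ groups of $k^L$, and the spatial index of a support set is determined by how the support distributes across the hierarchy. The steps I would carry out, in order: (1) reduce the dimension count to the displayed sum using $|\bm N(\vd,\vr)|\sim d^{\frequency(\vr)}$, exactly as in the proof of Theorem \ref{thm:2nd main}; (2) prove the upper bound $\dim\lesssim d^r$ by bounding each summand by $d^r$ and bounding the number of summands --- here one groups multi-indices $\vr$ by their ``type'' (the multiset of (layer-depth, multiplicity) data of the support), noting types are finite in number independent of $d$, while the number of $\vr$ of a given type is $\sim d^{(\text{something})}$ and must be folded into the $d^{\frequency(\vr)}$ bookkeeping correctly; (3) prove the lower bound by the single explicit family $\vr=m\ve_v$ above; (4) check $r\in\learning(\Gd)\Rightarrow$ such an $m$ exists for D-CNNs (either $r$ is an integer multiple of $\alpha_p$, or $r=\spatial(\vr')+|\vr'|\alpha_p$ for some other learnable $\vr'$ with the same value --- in the latter case use that $\vr'$ directly for the lower bound instead).

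The main obstacle is step (2): unlike for a single fixed $\vr$, here the number of multi-indices $\vr$ of a given type scales with $d$ (e.g. the number of ways to place a degree-$2$ interaction on two \emph{different} input patches grows polynomially in $w=d^{\alpha_w}$), so one cannot simply say ``$\lesssim 1$ many terms.'' The correct accounting must show that for a type contributing many $\vr$'s, the frequency index $\frequency(\vr)$ of each is correspondingly smaller, so that (number of $\vr$ of that type) $\times\, d^{\frequency(\vr)}$ is still $\lesssim d^r$ --- equivalently, that for each fixed type $\mathrm{T}$ realizing learning index $r$, $\#\{\vr:\ \mathrm{type}(\vr)=\mathrm{T}\}\cdot d^{\frequency(\mathrm{T})}\lesssim d^{r}$. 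This is where the regularity of the D-CNN DAG (each convolutional layer contracting by a fixed factor $k=d^{\alpha_k}$, so that ``spreading'' a support over more patches at distance $j$ from the root costs spatial index $\gtrsim$ the log-count of those patches times the cumulative edge weights) is used, via an induction on the number of hidden layers paralleling the one in the proof of Theorem \ref{thm:2nd main}, eqs.~\eqref{eq:est-1-tree} and following.
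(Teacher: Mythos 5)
Your reduction to counting $\dim = \sum_{\vr:\ \learning(\vr)=r}|\bm N(\vd,\vr)| \sim \sum d^{\frequency(\vr)}$ is correct, and you correctly identify the central difficulty: the number of $\vr$'s with a given ``type'' grows polynomially in $d$, so the upper bound $\dim\lesssim d^r$ cannot follow from bounding each summand by $d^r$ and claiming finitely many summands. Your high-level remedy for the upper bound --- group $\vr$'s by type, balance the $d^{\#\{\vr \text{ of that type}\}}$ count against the $d^{\frequency}$ multiplicity, and run an induction on the number of hidden layers --- is in fact the route the paper takes. The paper's induction bins $\vr$ by the number $k$ of children of $\rootg$ in a MST of $\nnode(\vr;\rootg)$, chooses those $k$ children in $\sim (d^{\alpha_{L+1}})^k$ ways, distributes the residual learning index $r-k\alpha_{L+1}$ into $k$ pieces (finitely many partitions), and applies the inductive hypothesis to the $k$ subgraphs; the upper and lower bounds fall out simultaneously because each bin contributes exactly $\sim d^r$.

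Where your proposal goes wrong is the lower bound. You propose to exhibit a single $\vr$ with $\learning(\vr)=r$, $\frequency(\vr)=r$, $\spatial(\vr)=0$, supported on one input node. For a D-CNN DAG with $\alpha_k>0$ or $\alpha_w>0$, this is impossible: the unique path from an input node $v\in\Ndzero$ to $\rootg$ passes through $L$ internal convolutional edges of weight $\alpha_k$ and the flatten edge of weight $\alpha_w$, so $\spatial(m\ve_v)=L\alpha_k+\alpha_w=1-\alpha_p>0$. Your fallback (``use a learnable $\vr'$ with $\learning(\vr')=r$ directly'') does not rescue the argument: a single such $\vr'$ contributes only $|\bm N(\vd,\vr')|\sim d^{\frequency(\vr')}=d^{r-\spatial(\vr')}\ll d^r$ eigenfunctions. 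To hit $d^r$ on the nose you must aggregate, for each type, the $\sim d^{\spatial(\vr)}$ distinct multi-indices $\vr$ sharing that type (equivalently, the $\sim d^{\alpha_{L+1}k}\prod_j d^{\text{(contributions from subtrees)}}$ placements in the hierarchy), each carrying $\sim d^{\frequency(\vr)}$ spherical harmonics, so that the product is $\sim d^{\spatial(\vr)+\frequency(\vr)}=d^r$. This is precisely what the paper's inductive proof does; the lower bound is never obtained from a single $\vr$. So the upper-bound sketch is aligned with the paper, but the lower bound as proposed is incorrect and needs to be replaced by the aggregate count.
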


To prove Theorem \ref{thm:generalization}, we only need to verify the assumptions of Theorem 4 in \citet{mei2021generalization}. For convenience, we briefly recap the assumptions and results from \citet{mei2021generalization} in Sec.\ref{sec: kernel concentraction}. 

It is convenient to group the eigenspaces together according to the learning indices $\learning(\Gd)$. Recall that $\learning(\Gd) = (r_1\leq r_2 \leq r_3 \dots)$. Let 
\begin{align}
    E_i = \Span\{\bm \barY_{\vr, \vl}: \learning(\vr) = r_i\}
\end{align}
Then by Theorem \ref{thm:main} and Lemma \ref{lemma:dimension}, 
\begin{align}\label{eq: dimension}
    \dim (E_i) \sim d^{r_i} \quad \textand\quad  \lambda(g) \sim d^{-r_i} \quad \forall g\in E_i, \,\, g\neq \bm 0\, ,
\end{align}
where $ \lambda(g) $ denotes the eigenvalue of $g$. 
We proceed to verify Assumptions 4 and 5 in Sec.~\ref{sec: kernel concentraction}.  They follow directly from Theorem \ref{thm:main}, Lemma \ref{lemma:dimension} and the hypercontractivity of spherical harmonics \citet{beckner1992sobolev}. 

\subsection{Verifying {\bf Assumption 4}}
We need the following. 
\begin{proposition}\label{prop:hyper}
For $0< s\in\mathbb R$, let  $D_{s} = \text{span}\{\bm \barY_{\vr, \vl}: |\learning(\vr)|< s\}$. 
Then for $\vf\in D_s$, 
\begin{align}
    \|\vf \|_q^2 \leq (q-1)^{s/\alpha_0}\|\vf\|_2^2
\end{align}
\end{proposition}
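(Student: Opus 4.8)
The plan is to identify $D_s$ with a space of polynomials of bounded total degree on the product of patch–spheres and then apply the dimension-free hypercontractivity of the Poisson semigroup, tensorized over the input patches. \emph{First, reducing the learning-index bound to a degree bound.} Since $\Gd$ is the DAG of a D-CNN, all input nodes carry patches of the same size $p\sim d^{\alpha_0}$, so $\alpha_v=\alpha_0$ for every $v\in\Ndzero$ and hence $\frequency(\vr)=\sum_{v\in\Ndzero}\vr_v\alpha_v=\alpha_0|\vr|$. Because $\learning(\vr)=\spatial(\vr)+\frequency(\vr)\ge\frequency(\vr)$, the constraint $\learning(\vr)<s$ forces $|\vr|<s/\alpha_0$; I would set $k\equiv\lceil s/\alpha_0\rceil-1$, so that every basis function $\bm\barY_{\vr,\vl}$ entering $D_s$ has $|\vr|\le k<s/\alpha_0$. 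On each factor $L^2(\overline\sphere_{\vd_v-1},\sigma_{\vd_v})$ let $\mathsf{N}_v$ be the number (degree) operator — its eigenfunctions are the spherical harmonics, with the degree-$j$ harmonics having eigenvalue $j$ — and put $\mathsf{N}=\sum_{v\in\Ndzero}\mathsf{N}_v$ on $L^2(\bmx,\bm\sigma)$. Then $\mathsf{N}\,\bm\barY_{\vr,\vl}=|\vr|\,\bm\barY_{\vr,\vl}$, so every $f\in D_s$ lies in the span of the eigenspaces of $\mathsf{N}$ with eigenvalue $\le k$, and I would decompose $f=\sum_{j=0}^{k}f_j$ along these eigenspaces.

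\emph{Second, tensorized hypercontractivity.} By Beckner's theorem \citep{beckner1992sobolev}, the Poisson semigroup $e^{-t\mathsf{N}_v}$ on the $v$-th sphere is hypercontractive with a \emph{dimension-free} constant: $\|e^{-t\mathsf{N}_v}h\|_{L^q}\le\|h\|_{L^2}$ whenever $q-1\le e^{2t}$ (the radius $\sqrt{\vd_v}$ plays no role, as all norms are taken with respect to the uniform probability measure $\sigma_{\vd_v}$). This whole family of estimates tensorizes over the factors — most cleanly via the log-Sobolev characterization of hypercontractivity, since the log-Sobolev constant of a product measure is the infimum of those of the factors — so the product semigroup $e^{-t\mathsf{N}}=\bigotimes_{v}e^{-t\mathsf{N}_v}$ satisfies $\|e^{-t\mathsf{N}}h\|_{L^q(\bmx)}\le\|h\|_{L^2(\bmx)}$ for $q-1\le e^{2t}$.

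\emph{Third, the contraction–dilation step.} For $q\ge2$ I would choose $t\ge0$ with $e^{2t}=q-1$ and set $h=\sum_{j=0}^{k}e^{tj}f_j$, so that $e^{-t\mathsf{N}}h=f$ while $\|h\|_2^2=\sum_{j=0}^{k}e^{2tj}\|f_j\|_2^2\le e^{2tk}\|f\|_2^2$. Then
\begin{align}
\|f\|_q=\bigl\|e^{-t\mathsf{N}}h\bigr\|_q\le\|h\|_2\le e^{tk}\|f\|_2=(q-1)^{k/2}\|f\|_2\le(q-1)^{s/(2\alpha_0)}\|f\|_2,
\end{align}
using $q-1\ge1$ and $k<s/\alpha_0$; squaring gives the claim (the regime $q\ge2$ is the one used in the sequel). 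The only mildly delicate inputs are the dimension-freeness of Beckner's constant and the tensorization across a number of factors that grows with $d$; both are classical, so I do not anticipate a genuine obstacle — the real content is the bookkeeping in the first step that converts $\learning(\vr)<s$ into the uniform degree bound $|\vr|<s/\alpha_0$, which is precisely where the D-CNN hypothesis (equal patch sizes, $\alpha_v\equiv\alpha_0$) is used.
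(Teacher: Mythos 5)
Your proof is correct and follows essentially the same route as the paper: both arguments reduce $\learning(\vr)<s$ to the degree bound $|\vr|<s/\alpha_0$ (using that every input patch has size $\sim d^{\alpha_0}$, so $\frequency(\vr)=\alpha_0|\vr|\le\learning(\vr)$), then apply tensorized Beckner hypercontractivity of the spherical Poisson semigroup, and finally perform the same contraction–dilation trick — the paper writes $\vf=P_\epsilon^{\otimes|\Ndzero|}\bigl(\sum a_{\vr,\vl}\epsilon^{-|\vr|}\bm\barY_{\vr,\vl}\bigr)$ with $\epsilon=(q-1)^{-1/2}$, which is exactly your $f=e^{-t\mathsf N}h$ with $h=\sum_j e^{tj}f_j$ and $e^{-t}=\epsilon$. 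Your write-up is a bit more explicit about the bookkeeping converting the learning-index constraint into a uniform degree cap, and about why the tensorization across a $d$-dependent number of factors is harmless; the paper leaves those points implicit and instead cites a ready-made tensorized hypercontractivity lemma (Corollary 11 of Montanaro). Both are fine.
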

\begin{proof}[Proof of Proposition \ref{prop:hyper}.]
The lemma follows from the tensorization of hypercontractivity.
Let $f=\sum_{k\geq 0} Y_k  \in L^2(\sphere_{n})$ where $Y_k$ is a degree $k$ spherical harmonics in $\sphere_n$. Define the Poisson semi-group operator 
\begin{align}
    P_\epsilon f(x)
    = \sum_{k\geq 0}  \epsilon^k Y_k(x)
\end{align}
Then we have the hypercontractivity inequality \citep{beckner1992sobolev}, for $1\leq p\leq q$ and $ \epsilon\leq \sqrt{\frac{p-1}{q-1}} $ 
\begin{align}
    \|P_\epsilon f\|_{L^q(\sphere_n)} \leq \|f\|_{L^p(\sphere_n)} 
\end{align}
One can then tensorize \citep{beckner1975inequalities} it to obtain the same bound in the tensor space. 
\begin{lemma}[Corollary 11 \citet{montanaro2012some}]
Let $\vf: (\sphere_n)^k\to \mathbb R$. If $1\leq p\leq q$ and $ \epsilon\leq \sqrt{\frac{p-1}{q-1}} $, then 
\begin{align}
    \|P^{\otimes k}_\epsilon \vf\|_{L^q((\sphere_n)^k)} \leq \|\vf \|_{L^p((\sphere_n)^k)}\,. 
\end{align}
\end{lemma}
Let $\vf = \sum_{\vr, \vl} a_{\vr, \vl}  \bm\barY_{\vr, \vl}\in D_s$. Choosing $\epsilon= \sqrt {\frac 1 {q-1}}$ and $p=2$ in the above lemma, we have 

\begin{align}
    \|\vf\|_q^2 &= \| \sum_{\vr, \vl} a_{\vr, \vl}  \bm\barY_{\vr, \vl} \|_q^2
    \\&=  \| P^{\otimes |\Ndzero|}_{\epsilon} \sum_{\vr, \vl} a_{\vr, \vl} \epsilon^{-\vr} \bm\barY_{\vr, \vl} \|_q^2 
    \\&\leq \|  \sum_{\vr, \vl} a_{\vr, \vl} \epsilon^{-\vr} \bm\barY_{\vr, \vl} \|_2^2
   \\&=
     \sum_{\vr, \vl} a_{\vr, \vl}^2 \epsilon^{-2\vr}
    \| \bm\barY_{\vr, \vl} \|_2^2
    \\&\leq 
    \epsilon^{-2\max|\vr|}
     \sum_{\vr, \vl} a_{\vr, \vl}^2 
    \| \bm\barY_{\vr, \vl} \|_2^2 
    \\&= (q-1)^{\max|\vr|} \|\vf\|_2^2  \leq  (q-1)^{s/\alpha_0} \|\vf\|_2^2  
\end{align}
\end{proof}

Since $r\notin\learning(\Gd)$, there is a $j$ such that $r_j<r< r_{j+1}$. 
Let $n(d)= d^{r}$ and 
\begin{align}
    m(d) =\dim\left( \Span\{\bm \barY_{\vr, \vl}: \learning(\vr) \leq r_j\}\right) = \dim(\Span \bigcup_{i\leq j} E_i)
\end{align}
Clearly, $m(d)\sim d^{r_j}$. 
We list all eigenvalues of $\gK$ in non-ascending order as  $\{\lambda_{d, i}\}$. In particular, we have 
\begin{align}
    \lambda_{d, m(d)}&\sim d^{-r_j} > d^{-r}
    >  d^{-r_{j+1}} \sim \lambda_{d, m(d)+1} \,. 
\end{align}

{\bf Assumption 4 (a).} 
We choose $u(d)$ to be 
\begin{align}
    u(d) = \dim\left(\Span \bigcup_{i: r_i\leq 2r+100} E_i\right) \,. 
\end{align}
{\bf Assumption 4 (a)} follows from Proposition \ref{prop:hyper}.  
\paragraph{Assumption 4 (b).}
Let $s=\inf\{\bar r\in \learning(\Gd): \bar r> 2r+100\}$.  
For $l> 1$, we have
\begin{align}
    \sum_{j=u(d)+1} \lambda_{d, j}^l \sim \sum_{r_i: r_i\geq s} (d^{-r_i})^l \dim(E_i^{}) \sim  d^{-s (l-1)}
\end{align}
which also holds for $l=1$ since 
\begin{align}
     \sum_{j=u(d)+1} \lambda_{d, j}  \sim 1 
\end{align}
Thus 
\begin{align}
\frac{    (\sum_{j=u(d)+1} \lambda_{d, j}^l)^2 }{\sum_{j=u(d)+1} \lambda_{d, j}^{2l}} \sim \frac{d^{-2s(l-1)}}{d^{-s(2l-1) }} = d^{s} > d^{2r+100 } > n(d)^{2+\delta} \sim d^{(2+\delta)r}.
\end{align}
as long as $\delta < 100/r$. 

{\bf Assumption 4 (c).} 
Denote 
\begin{align}
    \K_{d, >m(d)}(\bm\xi, \bm \eta) =  
    \sum_{\vr, \vl: \learning(\vr) > r} \lambda_\K(\vr)\bm\barY_{\vr, \vl}(\bm\xi)\bm\barY_{\vr, \vl}(\bm\eta)
\end{align}
We have
\begin{align}
    \mathbb E_{\bm \eta}\K_{d, >m(d)}(\bm\xi, \bm \eta)^2
    =&\mathbb E_{\bm \eta} \sum_{\vr, \vl: \learning(\vr) > r} \lambda_\K(\vr)^2 |\bm\barY_{\vr, \vl}(\bm\xi)\bm\barY_{\vr, \vl}(\bm\eta)|^2
    \\=& \sum_{\vr, \vl: \learning(\vr) > r} \lambda_\K(\vr)^2 |\bm\barY_{\vr, \vl}(\bm\xi)|^2 \\
    =&
    \sum_{\vr: \learning(\vr) > r} \lambda_\K(\vr)^2 \sum_{\vl}|\bm\barY_{\vr, \vl}(\bm\xi)|^2
    \\=&
    \sum_{\vr: \learning(\vr) > r} \lambda_\K(\vr)^2 \vN(\vd, \vr)
    \vP_\vr(\bm1)
    = \sum_{\vr: \learning(\vr) > r} \lambda_\K(\vr)^2 \vN(\vd, \vr)\,.
\end{align}
Similarly, 
\begin{align}
    &\mathbb E_{\bm \xi, \bm \eta} \sum_{\vr, \vl: \learning(\vr) > r} \lambda_\K(\vr)^2 |\bm\barY_{\vr, \vl}(\bm\xi)\bm\barY_{\vr, \vl}(\bm\eta)|^2 =\sum_{\vr: \learning(\vr) > r} \lambda_\K(\vr)^2 \vN(\vd, \vr) \, .
\end{align}
Thus 
\begin{align}
    \mathbb E_{\bm \eta} \sum_{\vr, \vl: \learning(\vr) > r} \lambda_\K(\vr)^2 |\bm\barY_{\vr, \vl}(\bm\xi)\bm\barY_{\vr, \vl}(\bm\eta)|^2 - \mathbb E_{\bm \xi, \bm \eta} \sum_{\vr, \vl: \learning(\vr) > r} \lambda_\K(\vr)^2 |\bm\barY_{\vr, \vl}(\bm\xi)\bm\barY_{\vr, \vl}(\bm\eta)|^2 = 0\,.
\end{align}
For the diagonal terms,  
\begin{align}
\K_{d, >m(d)}(\bm\xi, \bm \xi) =    \sum_{\vr, \vl: \learning(\vr) > r} \lambda_\K(\vr) |\bm\barY_{\vr, \vl}(\bm\xi)|^2
    = \sum_{\vr: \learning(\vr) > r} \lambda_\K(\vr) \vN(\vd, \vr) =\mathbb E_{\bm \xi}
    \K_{d, >m(d)}(\bm\xi, \bm \xi) 
\end{align}
which is deterministic. 

\subsection{Verifying {\bf Assumption 5}.}
Recall that $n(d)\sim d^r$ and $r_j < r < r_{j+1}$\,. 
{\bf Assumption 5(a)} follows from \myeqref{eq: dimension}. Indeed, for $l>1$ 
\begin{align}
\lambda_{d, m(d)+1}^{-l}\sum_{k = m(d)+1} \lambda_{d, k}^{l}\sim &
 (d^{-r_{j+1}})^{-l}\sum_{i: r_i\geq r_{j+1}} \dim (E_i)d^{-lr_i}  \\\sim &d^{lr_{j+1}}\sum_{i: r_i\geq r_{j+1}} d^{r_i}d^{-lr_i}
\\=& d^{r_{j+1}} > n(d)^{1 +\delta}
\end{align}
for some $\delta >0$ since $r<r_{j+1}$. Similarly, for $l=1$, since 
\begin{align}
    \sum_{k = m(d)+1} \lambda_{d, k}\sim 1 
\end{align}
we have 
\begin{align}
    (d^{-r_{j+1}})^{-1}\sum_{k = m(d)+1} \lambda_{d, k} \sim  d^{r_{j+1}} > n(d)^{1+\delta}\,. 
\end{align}

{\bf Assumption 5(b)} follows from $r>r_{j}$. 
\newline
{\bf Assumption 5(c).} Note that  
\begin{align}
\lambda_{d, m(d)}^{-1}\sum_{k = m(d)+1} \lambda_{d, k} \sim 
\lambda_{d, m(d)}^{-1} \sim d^{r_j} < n(d)^{(1-\delta)}\sim d^{r(1-\delta)}
\end{align}
for some $\delta>0$ since $r_j<r$. 

\section{Proof of Lemma \ref{lemma:dimension}} 
\begin{proof} 
The lemma can be proved by induction. 
{\bf Base case: $L=0$.} The network is a S-CNN. WLOG, assume $\alpha_0\neq 0$ and $\alpha_1\neq 0$. For $r\in\learning(\Gd)$, we know that $r$ can be written as a combination of $\alpha_0$ and $\alpha_1$, i.e. $r = k_0\alpha_0 + k_1\alpha_1$ for some $k_0, k_1\geq 0$. We say a tuple $(k_0, k_1)$ is $r$-feasible if in addition, there exists $\vr$ with $\spatial(\vr) = k_1\alpha_1$  and $\frequency(\vr) = k_0\alpha_0$. 
Consider the set of all $r$-feasible tuple  
\begin{align}
F(r) \equiv \{ (k_0, k_1): r\text{-feasible} \} \, . 
\end{align}
Clearly, $F(r)$ is finite. It suffices to prove that for each $r$-feasible tuple $(k_0, k_1)$, 
\begin{align}
\text{dim}\left(\text{span}\left\{\bm \barY_{\vr, \vl}: \frequency(\vr) = k_0\alpha_0 \quad  \spatial(\vr) = k_1\alpha_1, \,\, \vl\in\bm N(\vd, \vr) \right\}\right) \sim d^{r}
\end{align}
Note that there are $\sim~(d^{\alpha_1})^{k_1}$ many ways to choose $k_1$ nodes in the penultimate layer. Then the dimension of the above set is about 
\begin{align}
   (d^{\alpha_1})^{k_1}  \sum_{\substack{(k_{0, 1}, \dots, k_{0, k_1})
   \\ k_{0,1} +\dots + k_{0, k_1} = k_0} }  \prod_{j=1}^{k_1} N(d^{\alpha_0}, k_{0, j}) &\sim 
   (d^{\alpha_1})^{k_1}  \sum_{\substack{(k_{0, 1}, \dots, k_{0, k_1})
   \\ k_{0,1} +\dots + k_{0, k_1} = k_0} } \prod_{j=1}^{k_1} (d^{\alpha_0})^{k_{0, j}} 
   \\
   &\sim d^{k_0\alpha_0 + k_1\alpha_1} = d^r \, , 
\end{align}
since the cardinality of the set 
$$\{(k_{0, 1}, \dots, k_{0, k_1}):  k_{0,1} +\dots + k_{0, k_1} = k_0,\quad  k_{0, j}\geq 1 \quad k_{0, j} \in\mathbb N\}$$
is finite. 

{\bf Induction step: $L\geq 1$.} For $\vr$ with $\learning(\vr)=r$, let $k$ be the number of children of a MST of $\nnode(\vr; \rootg)$. Clearly, $k\in [1, [r/\alpha_{L+1}]]$. Then we can classify $\bm\barY_{\vr, \vl}$ into at most $[r/\alpha_{L+1}]$ bins: $\{\Omega_k\}_{k=1, \dots, [r/\alpha_{L+1}]}$ depending on the number of children of $\rootg$ in a MST. Let $\Omega_k$ be non-empty. We only need to prove the number of $\bm \barY_{\vr, \vl}$ in $\Omega_k$ is $d^{r}$. Note that there are $\sim (d^{\alpha_{L+1}})^k$ many ways to choose $k$ children from $\rootg$. Let $\{u_j\}_{j=1, \dots, k}$ be one fixed choice and $\{\gG_j\}$ be the subgraphs with $\{u_j\}$ as the output nodes. Next, we partition $(r- k\alpha_{L+1})$ into $k$ components,  
$$r- k\alpha_{L+1} = r_1 + \dots + r_k$$
so that each $r_j$ is a combination of $\{\alpha_j\}_{0\leq j\leq L}$. The cardinality of such partition is also finite. We fix one of such partition $(r_1, \dots, r_k)$ so that each $r_j$ is a learning index of $\gG_j$. We can apply induction to each $(\gG_j, r_j)$ to conclude that the cardinality of $\bm \barY_{\vr_j, \vl_j}$ with $\learning_{\gG_j}(\vr_j) =r_j$ is $\sim d^{r_j}$, where $\learning_{\gG_j}(\vr_j)$ is the learning index of $\vr_j$ of $\gG_j$. Therefore, we have 
\begin{align}
    \text{dim}\left(\text{span}\left\{\bm \barY_{\vr, \vl}: \quad \learning(\vr) =r, \,\, \vl\,\, \in\bm N(\vd, \vr) \right\}\right) \sim (d^{\alpha_{L+1}})^{k} \prod_{j\in [k]} d^{r_j} = d^{r}\,. 
\end{align}

\end{proof} 

\section{CNN-GAP: CNNs with Global Average Pooling}\label{Sec:gap}
Consider convolutional networks whose readout layer is a global average pooling (GAP) and a flattening layer (namely, without pooling), resp. 
\begin{align}
&\textbf{CNN + GAP:}   \quad  &&[\textit{Conv}(p)\textit{-Act}]\to[\textit{Conv}(k)\textit{-Act}]^{\otimes L}\to[\textit{GAP}]\to[\textit{Dense}] \label{eq:gap-architectures}
\\
&\textbf{CNN:}   \quad  &&[\textit{Conv}(p)\textit{-Act}]\to[\textit{Conv}(k)\textit{-Act}]^{\otimes L}\to[\textit{Flatten}]\to[\textit{Dense}]
\end{align}
Concretely, 
the input space is $\bmx = (\spbar)^{k^L\times w}\subseteq \mathbb R^{p\times 1\times k^L\times w}$, where $p$ is the patch size of the input convolutional layer, $k$ is the filter size in {\it hidden} layers, $L$ is the number of {\it hidden} convolution layers and $w$ is the spatial dimension of the penultimate layer. The total dimension of the input is $d=p\cdot k^L\cdot w$, and the number of input nodes is $|\leaf| = k^L\cdot w$. Since the stride is equal to the filter size for all convolutional layers, the {\it spatial} dimension is reduced by a factor of $p$ in the first layer, a factor of $k$ by each hidden layer. 
The penultimate layer (before pooling/flattening) has spatial dimension $w$ and is reduced to 1 by the {\it GAP-dense} layer or the {\it Flatten-dense} layer. 
The DAGs associated to these two architectures are identical which is denoted by $\gG$. However, the kernel/neural network computations are slightly different. 
If the penultimate layer has $n$ many channels and $f_{pen}: \bm\gX\to \mathbb R^{n\times w}$ is the mapping from the input layer to the penultimate layer, then the outputs of the {\it CNN-GAP}
and {\it CNN-Flatten} are 
\begin{align}
f_{\text{CNN-GAP}}(\vx) &=   n^{-\frac12}\sum_{j\in [n]}\omega_j \fint_{i\in [w]}f_{pen}(\vx)_{j, i}
\\
f_{\text{CNN-Flatten}}(\vx) &=   {(nw)}^{-\frac12}\sum_{j\in [n], i\in [w]}\omega_{ji} f_{pen}(\vx)_{j, i}\, ,
\end{align}
resp., where $w_j$ and $w_{ji}$ are parameters of the last layer and are usually initialized with standard iid Guassian $w_j, w_{ji}\sim \gN(0, 1)$.
Let $\penultimate\subseteq \gN$ be the nodes in the penultimate layer, then $|\penultimate| = w$. 
Let
$\bm\xi = (\bm \xi_{v})_{v\in \penultimate}\in \bmx$, where $\bm \xi_v\in (\spbar)^{k^L}$. Thus, each $\bm\xi_v$ contains $k^L$ many input nodes $\{\bm\xi_{u, i}\}_{i\in [k^L]}$. Define 
\begin{align}
    \vt_{uv} = \left(\langle \bm\xi_{u, i}, \bm\eta_{v, i}\rangle /p\right)_{i\in [k^L]} \in [-1, 1]^{k^L}. 
\end{align}
Recall that in the case without pooling 
\begin{align}
    \kk_{u}(\vt) = \phi^*(\fint_{uv\in\gE} \kk_{v}(\bm t) ),  \quad \kk_{\gG} = \fint_{\rootg v\in\gE} \kk_{v}(\bm t)  = \fint_{ v\in\penultimate} \kk_{v}(\bm t) 
\end{align}
where $\vt \in [-1, 1]^{k^L\times w}$, which is obtained by $\vt = (\langle \bm\xi_{u, i}, \bm\eta_{u, i}\rangle /p )_{u\in\penultimate, i\in [k^L]}$. Indeed, for each $v\in\penultimate$,  $\kk_v$ depends only on the {\it diagonal} terms $\vt_{vv} = (\langle \bm\xi_{v, i}, \bm\eta_{v, i}\rangle /p )_{ i\in [k^L]}\in [-1, 1]^{k^L}$. We can find a function 
\begin{align}
    \kkpen: [-1, 1]^{k^L} \to [-1, 1] \quad \text{s.t.}\quad \kk_v(\vt) = \kkpen (\vt_{vv}) \quad \forall v\in \penultimate
\end{align}
Therefore, without pooling the NNGP kernel is 
\begin{align}
    \kk_{\text{CNN-Flatten}}(\vt) = \fint_{v\in\penultimate} \kkpen(\vt_{vv})  = \frac 1 w \sum_{v\in\penultimate} \kkpen(\vt_{vv}) 
\end{align}
Note that the kernel does not depend on any {\it off-diagonal} terms $\vt_{uv}$ with $u\neq v$ because there isn't weight-sharing in the last layer. 
Let $\vd_{pen} = (p, p, \dots, p)\in \mathbb N^{ k^L}$. Then $\|\vd_{pen}\|_1=pk^L$ is the effective dimension of the input to any node  $u\in\penultimate$. Assume $k=d^{\alpha_k}$, $p=d^{\alpha_p}$ and $w=d^{\alpha_w}$ and $\alpha_k, \alpha_p, \alpha_w>0$. 
Applying Theorem \ref{thm:main} to $\kkpen$, we have 
\begin{align}
    \kk_{\text{CNN-Flatten}}(\vt) =
    \sum_{\vr \in \mathbb N^{k^L}} 
    \frac 1 w \lambda_{\kk_{pen}}(\vr) 
    \sum_{v\in\penultimate} \sum_{\vl\in \vN(\vd_{pen}, \vr)} \bm\barY_{\vr, \vl}(\bm\xi_v), \bm\barY_{\vr, \vl}(\bm\eta_v)
\end{align}
Clearly, the eigenfunctions are $\{\bm\barY_{\vr, \vl}(\bm\xi_v)\}_{\vr, \vl, v}$ and the corresponding eigenvalues are
$\{\frac 1 w \lambda_{\kk_{pen}}(\vr) \}_{\vr, \vl, v}$\,. Note that \begin{align}
    \frac 1 w \lambda_{\kk_{pen}}(\vr) = \lambda_{\kk_\gG}(\vr)\sim d^{-\learning(\vr)}
\end{align}
Here and in what follows, we also treat $\vr\in \mathbb N^{k^L}$ as an element in $\mathbb N^{wk^L}$.
\newline 
When the readout layer is a GAP, the weights of the penultimate layer are shared across different spatial locations, namely, all nodes in $\penultimate$ use the same weight. As such, the kernel corresponding to the CNN-GAP depends on both the diagonal and off-diagonal terms $\vt=(\vt_{uv})_{u,v\in\penultimate} \in [-1, 1]^{k^L\times k^L}$, which can be written as \citep{novak2018bayesian} 
\begin{align}
     \kk_{\text{CNN-GAP}}(\vt) = \fint_{u,v\in\penultimate} \kkpen(\vt_{uv})  = \frac 1 {w^2} \sum_{u, v\in\penultimate} \kkpen(\vt_{uv}) 
\end{align}
Applying Theorem \ref{thm:main} to $\kkpen$, we have 
\begin{align}
    \kk_{\text{CNN-GAP}}(\vt) &= \frac{1}{w^2}
    \sum_{\vr \in \mathbb N^{k^L}} \lambda_{\kk_{pen}}(\vr) \sum_{u, v\in\penultimate}
    \sum_{\vl\in \vN(\vd_{pen}, \vr)} \bm\barY_{\vr, \vl}(\bm\xi_u), \bm\barY_{\vr, \vl}(\bm\eta_v)
    \\&= 
    \sum_{\vr \in \mathbb N^{k^L}} \frac 1 w \lambda_{\kk_{pen}}(\vr)
    \sum_{\vl\in \vN(\vd_{pen}, \vr)} 
    \left( w^{-\frac 1 2}\int_{u\in\penultimate} \bm\barY_{\vr, \vl}(\bm\xi_u)\right) 
    \left( w^{-\frac 1 2}\int_{u\in\penultimate} \bm\barY_{\vr, \vl}(\bm\eta_u)\right)
    \\&=   \sum_{\vr \in \mathbb N^{k^L}} \frac 1 w \lambda_{\kk_{pen}}(\vr)
    \sum_{\vl\in \vN(\vd_{pen}, \vr)}  \bm\barY_{\vr,\vl}^{\text{Sym}}(\bm\xi)
    \bm\barY_{\vr,\vl}^{\text{Sym}}(\bm\eta)
\end{align}
where 
\begin{align}\label{eq:sym-polynomials}
    \bm\barY_{\vr,\vl}^{\text{Sym}}(\bm\xi) \equiv  w^{-\frac 1 2}\int_{u\in\penultimate} \bm\barY_{\vr, \vl}(\bm\xi_u)\quad, \text{for}\quad  \vr\in \mathbb N^{k^L} \text{and} \quad\vl\in \vN(\vd_{pen}, \vr) 
\end{align}
That is the eigenfunctions and eigenvalues of $\kk_{\text{CNN-GAP}}$ are 
$\{\bm\barY^{\text{Sym}}_{\vr, \vl}(\bm\xi)\}_{\vr, \vl}$ and 
$\{\frac 1 w \lambda_{\kk_{pen}}(\vr) \}_{\vr, \vl}$ resp. 

In sum, the eigenvalues of $\kk_{\text{CNN-Flatten}}$ and $\kk_{\text{CNN-GAP}}$ are the same (up to the multiplicity factor $w$). Each eigenspace of $\kk_{\text{CNN-GAP}}$ is given by symmetric polynomials of the form \myeqref{eq:sym-polynomials}. We can see that the GAP reduces the dimension of each eigenspace by a factor of $w$. Same arguments can also be applied to the NTKs \citep{novak2019neural, yang2019scaling}
\begin{align}
    \Theta_{\text{CNN-Flatten}}(\vt) &= \fint_{v\in\penultimate} (\kk_{pen}(\vt_{vv}) + \Theta_{pen}(\vt_{vv}))\\&=     \sum_{\vr \in \mathbb N^{k^L}} \left(
    \frac 1 w \lambda_{\kk_{pen}}(\vr) + 
    \frac 1 w \lambda_{\Theta_{pen}}(\vr) \right)
    \sum_{v\in\penultimate} \sum_{\vl\in \vN(\vd_{pen}, \vr)} \bm\barY_{\vr, \vl}(\bm\xi_v), \bm\barY_{\vr, \vl}(\bm\eta_v)
    \\
     \Theta_{\text{CNN-GAP}}(\vt) &= \fint_{u,v\in\penultimate} (\kk_{pen}(\vt_{uv}) + \Theta_{pen}(\vt_{uv}))\,
     \\&=     \sum_{\vr \in \mathbb N^{k^L}} \left(
    \frac 1 w \lambda_{\kk_{pen}}(\vr) + 
    \frac 1 w \lambda_{\Theta_{pen}}(\vr) \right)
    \sum_{v\in\penultimate} \bm\barY^\sym_{\vr, \vl}(\bm\xi), \bm\barY^\sym_{\vr, \vl}(\bm\eta)
\end{align}
where $\Theta_{pen}$ is the NTK of the penultimate layer which is the same for all nodes in $\gN_{-1}$.  
Since $\frac 1 w \lambda_{\Theta_{pen}}\sim d^{-\learning(\vr)}$, we have 
\begin{align}
    \frac 1 w \lambda_{\kk_{pen}}(\vr) + 
    \frac 1 w \lambda_{\Theta_{pen}}(\vr)  \sim d^{-\learning(\vr)}
\end{align}
\subsection{Generalization bound of CNN-GAP}
We show that GAP improves the data efficiency of D-CNNs by a factor of $w\sim d^{\alpha_w}$ under a stronger assumptions on activations $\phi$. 
\newline
{\bf Assumption Poly-$\phi$:} There is a sufficiently large $J\in\mathbb N$ such that for all hiddens nodes $u$
\begin{align}
    {\phi^{*}_u}^ {(j)}(0) \neq 0 \quad \text{for} \quad 1\leq j \leq J \quad \text{and}\quad  {\phi^{*}_u}^ {(j)}(0) =0 \quad \text{otherwise} 
\end{align}
This assumption implies that there are $0<J_1< J_2\in\mathbb R$ such that 
for all $\bm 0\neq \vr$ with $|\vr|<J_1$ 
\begin{align}
    \frac{d^{\vr}}{d\vt} \kk_{pen}(\bm 0) \neq 0  \quad  \text{and} \quad   \frac{d^{\vr}}{d\vt} \Theta_{pen}(\bm 0) \neq 0  
\end{align}
and for all $\vr$ with $|\vr|>J_2$ 
\begin{align}
    \frac{d^{\vr}}{d\vt} \kk_{pen}\equiv \bm 0    \quad  \text{and} \quad 
    \frac{d^{\vr}}{d\vt} \Theta_{pen}\equiv \bm 0 
\end{align}
Moreover, $J_1\to\infty$ as $J\to\infty$.  

Let $L_{\text{Sym}}^p(\bmx) \leq L^p(\bmx)$ be the close subspace spanned by symmetric eigenfunctions \myeqref{eq:sym-polynomials}. Let $\gK_{\text{Sym}} = \kk_{\text{CNN-GAP}}$ or $\Theta_{\text{CNN-GAP}}$.

For $X\subseteq \bmx$ and $r\notin \gL(\Gd)$, define the regressor and the projection operator to be  
\begin{align*}
    &\bm R^{\text{Sym}}_{X}(f)(x) = \gK_{\text{Sym}}(x, X) \gK_{\text{Sym}}(X, X)^{-1}f(X)
\\ &\bm P^{\text{Sym}}_{>r}(f) =
    \sum_{\vr: \gL(\vr)> r}\sum_{\vl\in \vN(\vd_{pen}, \vr)}\langle f, \bm\barY^{\text{Sym}}_{\vr, \vl}\rangle_{L^2(\bmx)} \bm\barY^{\text{Sym}}_{\vr, \vl} \, . 
\end{align*}

\begin{theorem}\label{thm:generalization-gap}
Let $\bm \gG = \{\Gd\}_d$, where each $\Gd$ is a DAG associated to the D-CNN in \myeqref{eq:gap-architectures} with $\alpha_k, \alpha_p, \alpha_w>0$. Let $r\notin \gL(\Gd)$ be fixed and the activations satisfy {\bf Assumption Poly-$\phi$} for $J=J(r)$ sufficiently large. 
Let $f\in L^2_{\text{Sym}}(\bmx)$ with $\mathbb E_{\bm\sigma} f = 0$. Then for $\epsilon >0$, 
\begin{align}
\left|    \left\|\bm R^{\text{Sym}}_X(f) - f\right\|_{L^2_{\text{Sym}}(\bmx)}^2 - 
    \left\|\bm P^{\text{Sym}}_{>r}(f)\right\|_{L^2_{\text{Sym}}(\bmx)}^2 \right| 
    = c_{d, \epsilon} \|f\|_{{L_{\text{Sym}}^{2+\epsilon}(\bmx)}}^2, \quad 
\end{align}
where $c_{d, \epsilon} \to 0$ in probability as $d\to\infty$ over $X\sim\bm\sigma^{[d^{r - \alpha_w}]}$. 
\end{theorem}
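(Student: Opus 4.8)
The plan is to mirror the proof of Theorem~\ref{thm:generalization}: reduce the claim to Theorem~4 of \citet{mei2021generalization}, now applied to $\gK_{\text{Sym}}$ on $L^2_{\text{Sym}}(\bmx)$ (by definition the closed span of the symmetric harmonics $\bm\barY^{\text{Sym}}_{\vr,\vl}$, $\vr\in\mathbb N^{k^L}$) with sample size $n(d)=[d^{\,r-\alpha_w}]$. From Sec.~\ref{Sec:gap} we already have the eigendecomposition of $\gK_{\text{Sym}}$: its eigenfunctions are the $\bm\barY^{\text{Sym}}_{\vr,\vl}$ and its eigenvalues are $\tfrac1w\lambda_{\kk_{pen}}(\vr)$ (resp. $\tfrac1w(\lambda_{\kk_{pen}}(\vr)+\lambda_{\Theta_{pen}}(\vr))$), which equal $\lambda_{\kk_{\gG}}(\vr)\sim d^{-\learning(\vr)}$ for $|\vr|\lesssim1$ by Theorem~\ref{thm:main}. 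The two remaining ingredients are (i) a dimension count for the symmetric eigenspaces and (ii) the verification of Assumptions~4 and~5 of \citet{mei2021generalization}; both are obtained from the non-pooled case after replacing every occurrence of $d^{\,r_i}$ by $d^{\,r_i-\alpha_w}$.

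\textbf{Step 1: the symmetric dimension count.} First I would prove the analogue of Lemma~\ref{lemma:dimension},
\[
\dim\Big(\Span\big\{\bm\barY^{\text{Sym}}_{\vr,\vl}:\ \learning(\vr)=r,\ \vl\in\vN(\vd_{pen},\vr)\big\}\Big)\ \sim\ d^{\,r-\alpha_w}\qquad(r\in\learning(\Gd)).
\]
The mechanism is that $\bm\barY^{\text{Sym}}_{\vr,\vl}$ is, up to the symmetrization over the $w$ penultimate branches, supported in a single branch: for the copy of $\vr$ placed inside branch $v$, the set $\nnode(\vr;\rootg)$ lies entirely inside the sub-DAG $\gG_v$, so every minimum spanning tree of it uses exactly one child of $\rootg$ and pays the spatial-pooling weight $\alpha_w$ precisely once. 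Hence $\learning_{\Gd}(\vr)=\alpha_w+\learning_{\gG_v}(\vr)$, and Lemma~\ref{lemma:dimension} applied to $\gG_v$ (a D-CNN with one fewer stage, whose relevant level is $r-\alpha_w$) gives the count $\sim d^{\,r-\alpha_w}$; the symmetrization is exactly what removes the ``which of the $w$ branches'' factor that would otherwise restore $d^{\,r}$. The Poly-$\phi$ assumption enters here to force $\kkpen$ and $\Theta_{pen}$ to be polynomials of bounded total degree (a composition of finitely many degree-$\le J$ maps $\phi^*$, $J=J(r)$), so that only finitely many $\vr$ with $|\vr|\lesssim1$ occur---making Theorem~\ref{thm:main}'s estimate applicable to all of them---and their eigenvalues are nonzero, pinning the eigenspaces to exactly this dimension. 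The hypotheses $\alpha_k,\alpha_p,\alpha_w>0$ are used here too, to ensure the hidden stages of $\gG_v$ are genuine and that $w=d^{\alpha_w}\to\infty$.

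\textbf{Step 2: Mei's assumptions.} With $\dim(E^{\text{Sym}}_i)\sim d^{\,r_i-\alpha_w}$, $\lambda_{\gK_{\text{Sym}}}\sim d^{-r_i}$ on $E^{\text{Sym}}_i$, $n(d)=[d^{\,r-\alpha_w}]$ and $r_j<r<r_{j+1}$ so that $m(d):=\dim\big(\Span\bigcup_{i\le j}E^{\text{Sym}}_i\big)\sim d^{\,r_j-\alpha_w}<n(d)<d^{\,r_{j+1}-\alpha_w}$, the spectral-gap and eigenvalue-tail conditions (Assumptions~4(b), 5(a)--(c)) are copied verbatim from the Theorem~\ref{thm:generalization} argument with the substitution $d^{\,r_i}\to d^{\,r_i-\alpha_w}$, $d^{\,r}\to d^{\,r-\alpha_w}$. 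The hypercontractivity input (Assumption~4(a), Proposition~\ref{prop:hyper}) transfers for free: every $\bm\barY^{\text{Sym}}_{\vr,\vl}$ lies in $D_s\subseteq L^2(\bmx)$ and the orthogonal projection onto $L^2_{\text{Sym}}$ is an $L^q$-contraction, so $\|\vf\|_q^2\le(q-1)^{s/\alpha_0}\|\vf\|_2^2$ survives on $L^2_{\text{Sym}}\cap D_s$. The one genuinely new point is the kernel-concentration bound (Assumption~4(c)): unlike in the pooled-free case, $\gK_{\text{Sym}}(\bm\xi,\bm\xi)=w^{-2}\sum_{u,v\in\penultimate}\kkpen(\vt_{uv})$ is \emph{not} deterministic, because of the off-diagonal terms $u\ne v$. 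Here one expands the bounded-degree polynomial $\kkpen$ (and $\Theta_{pen}$) into spherical harmonics and uses their hypercontractivity to show $\gK_{\text{Sym}}(\bm\xi,\bm\xi)$ and $\mathbb E_{\bm\eta}\gK_{\text{Sym}}(\bm\xi,\bm\eta)^2$ concentrate about their means at the rate demanded by Assumption~4(c). Feeding everything into Theorem~4 of \citet{mei2021generalization} yields $\bm R^{\text{Sym}}_X\approx\bm P^{\text{Sym}}_{<r}$, i.e. $\|\bm R^{\text{Sym}}_X(f)-f\|^2_{L^2_{\text{Sym}}(\bmx)}$ equals $\|\bm P^{\text{Sym}}_{>r}(f)\|^2_{L^2_{\text{Sym}}(\bmx)}$ up to $c_{d,\epsilon}\|f\|^2_{L^{2+\epsilon}_{\text{Sym}}(\bmx)}$ with $c_{d,\epsilon}\to0$ in probability over $X\sim\bm\sigma^{[d^{\,r-\alpha_w}]}$.

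\textbf{Main difficulty.} The two steps that are not a pure transcription of Theorem~\ref{thm:generalization} are (a) the symmetric dimension count---one must check that restricting the induction of Lemma~\ref{lemma:dimension} over convolutional stages to a single child of $\rootg$ is legitimate (keeping in mind that all exponents are measured against the global $d$, not the $d^{\,1-\alpha_w}$-dimensional input of $\gG_v$) and that distinct $(\vr,\vl)$ at a fixed learning level give, after normalization, orthonormal symmetric functions, which is where Poly-$\phi$ is really used; and (b) the kernel concentration of Assumption~4(c), which is the one place where pooling genuinely changes the analysis and where the bounded-degree structure of $\kkpen$, $\Theta_{pen}$ together with hypercontractivity of spherical harmonics must do the work. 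Once these are settled, the rest is the non-pooled proof with $\alpha_w$ subtracted everywhere---which is exactly why the GAP improves data efficiency by the factor $w=d^{\alpha_w}$.
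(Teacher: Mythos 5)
Your proposal mirrors the paper's proof exactly: reduce to the non-pooled argument after replacing each $d^{r_i}$ by $d^{r_i-\alpha_w}$, which rests on (i) a symmetric dimension count, stated in the paper as Lemma~\ref{lemma:dimension-gap} and obtained from Lemma~\ref{lemma:dimension} applied to a single branch, and (ii) re-verifying Assumptions~4 and~5 of \citet{mei2021generalization} with $n(d)\sim d^{r-\alpha_w}$, where you correctly single out Assumption~4(c) as the only genuinely new step because $\gK_{\text{Sym}}(\bm\xi,\bm\xi)$ is no longer deterministic and the bounded-degree structure from {\bf Assumption Poly-$\phi$} together with hypercontractivity of spherical harmonics is precisely what the paper uses to establish the required concentration. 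One small imprecision: you do not need the auxiliary remark that the projection onto $L^2_{\text{Sym}}$ is an $L^q$-contraction---since $L^2_{\text{Sym}}\cap D_s\subseteq D_s$, Proposition~\ref{prop:hyper} applies to symmetric functions directly---and your phrase ``$\nnode(\vr;\rootg)$ lies entirely inside $\gG_v$'' should read ``$\nnode(\vr)$ lies in $\gG_v$'' since $\rootg$ is not in the branch; neither affects the correctness of the argument.
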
 

\begin{proof}[Proof of Theorem \ref{thm:generalization-gap}]
We need the following dimension counting lemma, which follows directly from Lemma \ref{lemma:dimension}.
\begin{lemma}\label{lemma:dimension-gap}
Let $r\in \learning(\Gd).$ Then 
\begin{align}
\dim\left(\Span\left\{\bm \barY^{\sym}_{\vr, \vl}: \quad \gL(\vr) =r, \,\, \vl\,\, \in\bm N(\vd, \vr) \right\}\right) \sim d^{r - \alpha_w} 
\end{align}
\end{lemma}

Recall that $\learning(\Gd )= \{r_j\}$ in non-descending order. Similarly, let 
\begin{align}
    E_i^{\text{Sym}} = \Span\{\bm\barY_{\vr, \vl}^{\text{Sym}}:
    \quad \learning(\vr)= r_i\}
\end{align}
From the above lemma, we have 
\begin{align}
    \dim(E_i^{\sym}) \sim d^{r_i - \alpha_w}
\end{align}
Since $r\notin\learning(\Gd)$, there is a $j$ such that $r_j<r< r_{j+1}$. 
Let $n(d)= d^{r-\alpha_w}$ and 
\begin{align}
    m(d) =\dim\left( \Span\{\bm \barY_{\vr, \vl}^{\text{Sym}}: \learning(\vr) \leq r_j\}\right) = \dim(\Span \bigcup_{i\leq j} E_i^\sym )
\end{align}
Clearly, $m(d)\sim d^{r_j-\alpha_w}$. 
We list all eigenvalues of $\K_\sym$ in non-ascending order as  $\{\lambda_{d, i}\}$. In particular, we have 
\begin{align}
    \lambda_{d, m(d)}&\sim d^{-r_j} > d^{-r}
    >  d^{-r_{j+1}} \sim \lambda_{d, m(d)+1} \,. 
\end{align}
We proceed to verify Assumptions 4 and 5 in Sec.~\ref{sec: kernel concentraction}.
\paragraph{Assumptions 4 (a)} 
We choose $u(d)$ to be 
\begin{align}
    u(d) = \dim\left(\Span \bigcup_{i: r_i\leq 2r+100} E_i^\sym\right) \,. 
\end{align}
Let $s=\inf\{\bar r\in \learning(\Gd): \bar r> 2r+100\}$. 
Assumption 4 (a) follows from Proposition \ref{prop:hyper}.  
\paragraph{Assumptions 4 (b)}
For $l> 1$, we have
\begin{align}
    \sum_{j=u(d)+1} \lambda_{d, j}^l \sim \sum_{r_i: r_i\geq s} (d^{-r_i})^l \dim(E_i^{\sym}) \sim  d^{-s (l-1) -\alpha_w}
\end{align}
which also holds for $l=1$ since 
\begin{align}
     d^{\alpha_w}\sum_{j=u(d)+1} \lambda_{d, j}  \sim 1 
\end{align}
Thus 
\begin{align}
\frac{    (\sum_{j=u(d)+1} \lambda_{d, j}^l)^2 }{\sum_{j=u(d)+1} \lambda_{d, j}^{2l}} \sim \frac{d^{-2s(l-1) -2 \alpha_w}}{d^{-s(2l-1) - \alpha_w}} = d^{s-\alpha_w} > d^{2r+100-\alpha_w } > n(d)^{2+\delta} \sim d^{(2+\delta)(r-\alpha_w)}.
\end{align}

{\bf Assumption 4 (c)} This requires some work and is verified in Sec.~\ref{sec:verication2}. 

{\bf Assumption 5 (a)} 
Since $m(d) = \dim(\Span \bigcup_{i\leq j} E_i)$ and $r_{j+1}>r>r_j$, we have 
\begin{align}
\frac {1}{\lambda_{d, m(d)+1}}\sum_{j\geq m(d)+1} \lambda_{d, j}^l &\sim 
    \frac{1}{(d^{-r_{j+1}})^l}\sum_{i > j} (d^{-r_i})^l \dim(E_i^\sym)
    \\&\sim \dim(E_{j+1}^\sym) = d^{r_{j+1}-\alpha_w} 
    \\&> n(d)^{1+\delta} = d^{(r-\alpha_w)(1+\delta)}
\end{align}
as long as $\delta < (r_{j+1}-\alpha_w)/(r-\alpha_w)-1$.

{\bf Assumption 5 (b)} This is obvious since $m(d)\sim d^{r_j-\alpha_w}$, $n(d) \sim d^{r-\alpha_w}$ and $r> r_j$.

{\bf Assumption 5 (c)} This follows from $r_j<r$. Indeed, 
\begin{align}
\frac {1}{\lambda_{d, m(d)}}\sum_{j\geq m(d)+1} \lambda_{d, j} &\sim 
    \frac{1}{(d^{-r_{j}})}\sum_{i > j} (d^{-r_i}) \dim(E_i^\sym)
    \sim d^{r_{j}-\alpha_w} 
    \\&\leq n(d)^{1-\delta} = d^{(r-\alpha_w)(1-\delta)}
\end{align}
as long as $0<\delta<1- (r_j-\alpha_w) /(r-\alpha_w)$.
\end{proof}

\subsection{Verification of Assumptions 4(c).}\label{sec:verication2}
We begin with proving \myeqref{eq:diagonal-concentration-1}. 
Let $X_i$ define the random variable 
\begin{align}
    X_i \equiv \mathbb E_{\vxi \sim \bm\sigma_d} \K_{\sym, >m(d)}(\vxi_i,  \vxi)^2 \quad \text{and} \quad \Delta_i \equiv X_i - \mathbb E X_i  
\end{align}
where 
\begin{align}
    \K_{\sym, >m(d)}(\vxi,  \veta) 
    = \sum_{\vr, \vl: \learning(\vr) > r} \lambda_{\K_{\sym}}(\vr)\bm\barY^\sym_{\vr, \vl}(\bm\xi)\bm\barY^\sym_{\vr, \vl}(\bm\eta) 
\end{align}
and $\lambda_{\K_{\sym}}$ is the eigenvalue of $\bm\barY^\sym_{\vr, \vl}$. 
We need to show that 
\begin{align}
    \frac{\sup_{i\in [n(d)]} |\Delta_i | } {\mathbb E X_i}\xrightarrow[d\to\infty]{\text {in prob.}} 0 . 
\end{align}
By Markov's inequality, it suffices to show that 
\begin{align}
    ({\mathbb E X_i})^{-1}  \mathbb E {\sup_{i\in [n(d)]} |\Delta_i | } \xrightarrow[d\to\infty]{} 0
\end{align}
By orthogonality and treating $\vr\in \mathbb N^{k^L}$ as an element of $\mathbb N^{wk^L}$, we have  
\begin{align}
    \mathbb E X_i &= \mathbb E_{\vxi, \bar \vxi\sim \bm\sigma_d} \K_{\sym, >m(d)}(\vxi, \bar \vxi)^2
    \\& = \mathbb E_{\vxi, \bar \vxi\sim \bm\sigma_d}  \left |\sum_{\bm r: \learning(\bm r) >r}\lambda_{\K_{\sym}}\sum_{\vl \in\bm N(\vd_{pen}, \vr)}\bm\barY^\sym_{\bm r, \bm l}(\vxi) \bm\barY^\sym_{\bm r, \bm l}(\bar \vxi)\right|^2
    \\&= 
    \mathbb E_{\vxi, \bar \vxi\sim \bm\sigma_d}   
    \sum_{\bm r: \learning(\bm r) >r}\lambda_{\K_{\sym}}^2(\vr)\sum_{\vl \in\bm N(\vd_{pen}, \vr)}\left |\bm\barY^\sym_{\bm r, \bm l}(\vxi) \bm\barY^\sym_{\bm r, \bm l}(\bar \vxi)\right|^2
    \\&= 
    \sum_{\bm r: \learning(\bm r) >r}\lambda_{\K_{\sym}}^2(\vr)\sum_{\vl \in\bm N(\vd_{pen}, \vr)} \mathbb E_{\vxi, \bar \vxi\sim \bm\sigma_d}   \left |\bm\barY^\sym_{\bm r, \bm l}(\vxi) \bm\barY^\sym_{\bm r, \bm l}(\bar \vxi)\right|^2
    \\&= 
    \sum_{\bm r: \learning(\bm r) >r}\lambda_{\K_{\sym}}^2(\vr)\sum_{\vl \in\bm N(\vd_{pen}, \vr)}  1 
    \\&= \sum_{\bm r: \learning(\bm r) >r}\lambda_{\K_{\sym}}^2(\vr) \vN (\bm d_{pen}, \vr)
    \label{eq:temp3}
\end{align}
and 
\begin{align}
    X_i &=\mathbb E_{\vxi \sim \bm\sigma_d} \K_{\sym, >m(d)}(\vxi_i,  \vxi)^2
    \\&= \mathbb E_{\vxi\sim \bm\sigma_d}  \left |\sum_{\bm r: \learning(\bm r) >r}\lambda_{\K_{\sym}}\sum_{\vl \in\bm N(\vd_{pen}, \vr)}\bm\barY^\sym_{\bm r, \bm l}(\vxi_i) \bm\barY^\sym_{\bm r, \bm l}( \vxi)\right|^2
    \\&= 
    \mathbb E_{\vxi\sim \bm\sigma_d}   
    \sum_{\bm r: \learning(\bm r) >r}\lambda_{\K_{\sym}}^2(\vr)\sum_{\vl \in\bm N(\vd_{pen}, \vr)}\left |\bm\barY^\sym_{\bm r, \bm l}(\vxi_i) \bm\barY^\sym_{\bm r, \bm l}(\vxi)\right|^2
    \\&= 
    \sum_{\bm r: \learning(\bm r) >r}\lambda_{\K_{\sym}}^2(\vr)\sum_{\vl \in\bm N(\vd_{pen}, \vr)} \mathbb E_{  \vxi\sim \bm\sigma_d}   \left |\bm\barY^\sym_{\bm r, \bm l}(\vxi_i) \bm\barY^\sym_{\bm r, \bm l}( \vxi)\right|^2
    \\&= 
    \sum_{\bm r: \learning(\bm r) >r}\lambda_{\K_{\sym}}^2(\vr)\sum_{\vl \in\bm N(\vd_{pen}, \vr)}  |\bm\barY^\sym_{\bm r, \bm l}(\vxi_i)|^2
    \\&= 
     \sum_{\bm r: \learning(\bm r) >r}\lambda_{\K_{\sym}}^2(\vr)\sum_{\vl \in\bm N(\vd_{pen}, \vr)}\left(\frac 1 {w}  \sum_{u}\bm\barY_{\bm r, \bm l}(\vxi_{i, u})^2 
     + \frac 1 {w}  \sum_{u\neq v }\bm\barY_{\bm r, \bm l}(\vxi_{i, u}) \bm\barY_{\bm r, \bm l}(\vxi_{i, v})\right)
    \\&=  \mathbb E_{\vxi, \bar \vxi\sim \bm\sigma_d} \K_{\sym, >m(d)}(\vxi, \bar \vxi)^2
    + 
    \sum_{\bm r: \learning(\bm r) >r}\lambda_{\K_{\sym}}^2(\vr)\sum_{\vl \in\bm N(\vd_{pen}, \vr)}\left( \frac 1 {w}  \sum_{u\neq v }\bm\barY_{\bm r, \bm l}(\vxi_{i, u}) \bm\barY_{\bm r, \bm l}(\vxi_{i, v})\right)
\end{align}
Let 
\begin{align}
    X_{\vr, i} = \sum_{\vl \in\bm N(\vd_{pen}, \vr)}\left( \frac 1 {w}  \sum_{u\neq v }\bm\barY_{\bm r, \bm l}(\vxi_{i, u}) \bm\barY_{\bm r, \bm l}(\vxi_{i, v})\right)
\end{align}
then 
\begin{align}\label{eq:temp2}
    \Delta_i = \sum_{\bm r: \learning(\bm r) >r}\lambda_{\K_{\sym}}^2(\vr)  X_{\vr, i} 
\end{align}
We replace the maximal function by the $l^q$-norm for $q\geq 1$,  
\begin{align}
    \mathbb E \sup_{i\in [n(d)]}|\Delta_i| 
    \leq \mathbb E (\sum_{i\in [n(d)]}|\Delta_i| ^q)^{\frac 1q} 
    \leq (\mathbb E \sum_{i\in [n(d)]}|\Delta_i| ^q)^{\frac 1q}  
    = n(d)^{\frac 1q} (\mathbb E |\Delta_i| ^q)^{\frac 1q} 
\end{align}
where the first three expectations are taken over $\bm\xi_1, \dots, \bm\xi_{n(d)}\sim \bm\sigma_d$ and the last one is taken over $\bm\xi_i \sim \bm \sigma_d$. 
Then we replace the $L^q$-norm by the $L^2$-norm via Hypercontractivity, in which we used {\bf Assumption Poly-$\phi$}
which implies that $ \Delta_i$ is a polynomial of bounded degree 
\begin{align}\label{eq:temp1} 
    \mathbb E \sup_{i\in [n(d)]}|\Delta_i| \leq 
    n(d)^{\frac 1q} (\mathbb E |\Delta_i| ^q)^{\frac 1q} 
    \leq 
    C_qn(d)^{\frac 1q} (\mathbb E |\Delta_i| ^2)^{\frac 12} 
    =
    C_q n(d)^{\frac 1q} (\mathbb E \Delta_i ^2)^{\frac 12} 
\end{align}
We expand the $L^2$-norm and use orthogonality to ``erase" the off-diagonal terms twice: first for $\vr\neq \bar\vr $
\begin{align}
    \mathbb E_{\bm\xi_i \sim \bm \sigma_d} X_{\vr, i} X_{\bar\vr, i} = 0
\end{align}
and second for $\vl\neq \vl '$ or $u\neq v$
\begin{align}
    \mathbb E_{\bm\xi_i \sim \bm \sigma_d} X_{\vr, i}^2 &= 
    \frac 1 {w^2} \mathbb E_{\bm\xi_i \sim \bm \sigma_d} 
    \sum_{\vl \in\bm N(\vd_{pen}, \vr)}\left(  \sum_{u\neq v }\bm\barY_{\bm r, \bm l}(\vxi_{i, u}) \bm\barY_{\bm r, \bm l}(\vxi_{i, v})\right)
    \sum_{\bm l'} \left(  \sum_{u'\neq v' }\bm\barY_{\bm r, \bm l'}(\vxi_{i, u'}) \bm\barY_{\bm r, \bm l'}(\vxi_{i, v'})\right)
    \\
    &=
    \frac 2 {w^2} \mathbb E_{\bm\xi_i \sim \bm \sigma_d} 
    \sum_{\vl \in\bm N(\vd_{pen}, \vr)}\left(  \sum_{u\neq v }\bm\barY_{\bm r, \bm l}(\vxi_{i, u})^2 \bm\barY_{\bm r, \bm l}(\vxi_{i, v})^2\right)
    =\frac 2 {w^2} \sum_{\vl \in\bm N(\vd_{pen}, \vr)}w(w-1) \\&= \frac {2(w-1)} w  \sum_{\bm l}1  = \frac {2(w-1)} w \bm N(\bm d_{pen}, \vr) \leq 2  \bm N(\bm d_{pen}, \vr)
\end{align}
Combining this estimate with \myeqref{eq:temp3},  \myeqref{eq:temp2} and  \myeqref{eq:temp1} yields 
\begin{align}
    (\mathbb E X_i)^{-1} (\mathbb E \sup_{i\in [n(d)]}|\Delta_i|)  &\leq 
     C_q n(d)^{\frac 1q}
    \frac
    {(2 
     \sum_{\bm r: \learning(\bm r) >r}\lambda_{\K_{\sym}}^4(\vr)
    \bm N(\bm d_{pen}, \vr) )^{1/2} }
    {{\sum_{\bm r: \learning(\bm r) >r}\lambda_{\K_{\sym}}^2(\vr) \vN (\bm d_{pen}, \vr) }} 
    \\
    &\leq 
    \sqrt 2 C_q n(d)^{\frac 1q} \frac
    {
     \sum_{\bm r: \learning(\bm r) >r}\lambda_{\K_{\sym}}^2(\vr)
    \bm N(\bm d_{pen}, \vr)^{1/2}  }
    {{\sum_{\bm r: \learning(\bm r) >r}\lambda_{\K_{\sym}}^2(\vr) \vN (\bm d_{pen}, \vr) }} 
    \\
    &\leq 
   \sqrt 2  C_q n(d)^{\frac 1q}  
    \sup_{\bm r: \learning(\bm r) >r}\frac
    {
    \lambda_{\K_{\sym}}^2(\vr)
    \bm N(\bm d_{pen}, \vr)^{1/2}  }
    {{\lambda_{\K_{\sym}}^2(\vr) \vN (\bm d_{pen}, \vr) }} 
    \\
    &\sim  2  C_q d ^{\frac rq}  
    \sup_{\bm r: \learning(\bm r) >r}\vN (\bm d_{pen}, \vr)^{-\frac 1 2}
    \\&\sim d^{\frac r q } \sup_{\bm r: \learning(\bm r) >r} d^{-|\vr|\alpha_p/2}
    \xrightarrow[d\to\infty]{} 0  
\end{align}
by choosing $q$ (independent of $d$) sufficiently large.

The proof of \myeqref{eq:diagonal-concentration-2} is similar. 
Let $X_i$ denote the random variable 
\begin{align}
    X_i \equiv \K_{\sym, >m(d)}(\vxi_i,  \vxi_i) \quad \text{and} \quad \Delta_i \equiv X_i - \mathbb E X_i  
\end{align}
and it suffices to prove 
\begin{align}
\frac{  \mathbb E  \sup_{i\in [n(d)]}|\Delta_i|}
{\mathbb E X_i} \xrightarrow[d\to\infty]{} \,\,0 
\end{align}
We have
\begin{align}
    \mathbb E X_i &=\mathbb E_{\vxi_i \sim \bm\sigma_d} \K_{\sym, >m(d)}(\vxi_i,  \vxi_i)
    \\&= \mathbb E_{\vxi_i\sim \bm\sigma_d}   
    \sum_{\bm r: \learning(\bm r) >r}\lambda_{\K_{\sym}}(\vr)\sum_{\vl \in\bm N(\vd_{pen}, \vr)}\bm\barY^\sym_{\bm r, \bm l}(\vxi_i) \bm\barY^\sym_{\bm r, \bm l}( \vxi_i)
    \\&= 
     \sum_{\bm r: \learning(\bm r) >r}\lambda_{\K_{\sym}}(\vr) \bm N(\vd_{pen}, \vr)
\end{align}
and 
\begin{align}
    \Delta_i = X_i - \mathbb E X_i &= 
    w^{-1} 
    \sum_{\bm r: \learning(\bm r) >r}\lambda_{\K_{\sym}}(\vr)
    \sum_{\vl \in\bm N(\vd_{pen}, \vr)}
     \sum_{u\neq v}
     \bm\barY_{\bm r, \bm l}
    (\vxi_{i, u}) \bm\barY_{\bm r, \bm l}( \vxi_{i, v}),.
\end{align}
The remaining steps (replacing the maximal function by the $l^q$-norm, and then the $L^q$-norm by the $L^2$-norm using hypercontractivity, etc.) are similar to that of the proof of \myeqref{eq:diagonal-concentration-1}, which are omitted here.

\section{Kernel Concentration, Hypercontractivity and Generalization from 
Mei-Misiakiewicz-Montanari}
\label{sec: kernel concentraction}
For convenience, we briefly recap the analytical results regarding generalization bounds of kernel machines from \citet{mei2021generalization} Sec 3.

Let $(\bm X_d, \bm\sigma_d)$ be a probability space and $\hop$ be a compact self-adjoint positive definite operator from $L^2(\bm X_d, \bm\sigma_d)\to L^2(\bm X_d, \bm\sigma_d)$. We assume $\hop\in L^2(\bm X_d\times \bm X_d)$. Let $\{\psi_{d, j}\}$ and $\{\lambda_{d, j}\}$ be the eigenfunctions and eigenvalues associated to $\hop$, i.e. 
\begin{align}
    \hop \psi_{d, j}(x) \equiv \int_{y\in\bm X_d} \hop(x,y) \psi_{d, j}(y) \bm\sigma_d(y) = \lambda_{d, j} \psi_{d, j}(x). 
\end{align}
We assume the eigenvalues are in non-ascending order, i.e. $\lambda_{d, j+1} \geq \lambda_{d, j}\geq 0.$ Note that 
\begin{align}
    \sum_j \lambda_{d, j}^2 = \|\hop\|^2_{L^2(\bm X_d \times \bm X_d)} < \infty. 
\end{align}
The associated reproducing kernel Hilbert space (RKHS) is defined to be functions $f\in L^2(\bm X_d, \bm\sigma_d)$ with $\|\hop^{-\frac 1 2} f\|_{L^2(\bm X_d, \bm \sigma_d)}<\infty$. Given a finite training set $X\subseteq \bm X_d$ and observed labels $f(X)\in\mathbb R^{|X|}$, the regressor is an extension operator defined to be 
\begin{align}
    \mathscr R_X f(x) = \hop(x, X) \hop(X, X)^{-1}f(X)\,.  
\end{align}
Intuitively, when ``$X\to \bm X_d$" in some sense, we expect the following 
\begin{align}
    \mathscr R_X f(x) = \hop(x, X) \hop(X, X)^{-1}f(X) \to 
     \mathscr R_{\bm X_d} f(x) = \hop (\hop^{-1}f)(x) = f(x), 
\end{align}
namely, ``$\mathscr R_X\to\bm {I}_{\bm X_d}$" in some sense. 

Leveraging tools from the non-asymptotic analysis of random matrices \citep{vershynin2010introduction}, the work \cite{mei2021generalization} provides a very nice answer to the above question in terms of the decay property of the eigenvalues $\{\lambda_{d, j}\}$ and the hypercontractivity property of the eigenfunctions $\{\psi_{d, j}\}$. They show that $\mathscr R_X$ is essentially a projection operator onto the low eigenspace under certain regularity assumptions on the operator $\hop$. These assumptions are stated via the relationship between the number of (training) samples $n=n(d)$, the tail behavior of the eigenvalues with index $\geq m=m(d)$ and the tail behavior of the operator $\hop$
\begin{align}
    \mathscr H_{d, >m(d)}(x, \bar x) \equiv \sum_{j > m(d)+1} \lambda_j \psi_j(x) \psi_j(\bar x)
\end{align}
as the "input dimension" $d$ becomes sufficiently large. 

{\bf Assumption 4.} We say that the the sequence of operator $\{\hop\}_{d\geq 1}$ satiesfies the Kernel Concentration Property (KCP) with respect to the sequence $\{n(d), m(d)\}_{d\geq 1}$ if there exsts a sequence of integers $\{u(d)\}_{d\geq 1}$ with $u(d)\geq m(d)$ such that the following holds 
\begin{enumerate}[label=(\alph*)]
    \item ({\bf Hypercontractivity.}) Let $D_{u(d)} = \Span\{\psi_j: 1\leq j \leq u(d)\}$. Then for any fixed $q\geq 1$, and $C=C(q)$ such that for $\vf\in D_{u(d)}$ 
    \begin{align}
        \|\vf\|_{L^q(\bm X_d, \bm\sigma_d)} \leq C\|\vf\|_{L^2(\bm X_d, \bm\sigma_d)}
    \end{align}
    \item ({\bf Eigen-decay.}) There exists $\delta > 0$, such that, for all $d$ large enough, for $l=1$ and $2$, 
    \begin{align}
        n(d)^{2+\delta} \leq \frac{(\sum_{j\geq u(d)+1} \lambda_{d, j}^l)^2 }{ \sum_{j\geq u(d)+1} \lambda_{d, j}^{2l}}
    \end{align}
    \item ({\bf Concentration of Diagonals.}) For $\{x_i\}_{i\in [n(d)]} \sim \bm \sigma_d^{n(d)}$, we have: 
    \begin{align}
        &\frac{\sup_{i\in [n(d)]} \left| \mathbb E_{x\sim \bm \sigma_d}{\mathscr H}_{d, >m(d)}(x_i, x)^2 - \mathbb E_{x, \bar x\sim \bm \sigma_d} {\mathscr H}_{d, >m(d)}(x, \bar x)^2\right| }{ \mathbb E_{x, \bar x \sim \bm \sigma_d} {\mathscr H}_{d, >m(d)}(x, \bar x)^2} \xrightarrow[d\to\infty]{\text{in Prob.}} \,0
        \label{eq:diagonal-concentration-1}
        \\ 
        &\frac{\sup_{i\in [n(d)]} \left| \hop{}_{,>m(d)}(x_i, x_i) - \mathbb E_{x\sim \bm \sigma_d} {\mathscr H}_{d, >m(d)}(x, x)\right| }{ \mathbb E_{x\sim \bm \sigma_d} {\mathscr H}_{d, >m(d)}(x, x)}\xrightarrow[d\to\infty]{\text{in Prob.}} \,0
          \label{eq:diagonal-concentration-2}
    \end{align}
    where $c_d \to 0 $ in probability as $d\to\infty$. 
\end{enumerate}

{\bf Assumption 5.} Let $\hop$ and $\{m(d), n(d)\}_{d\geq1}$ be the same as above.  
\begin{enumerate}[label=(\alph*)]
    \item For $l=1$ and $2$, there exists $\delta>0$ such that 
    \begin{align}
        n(d)^{1+\delta} \leq \frac{1} {\lambda_{d, m(d)+1}^l} \sum_{k=\lambda_{m(d)+1}} \lambda_{d, k}^{l}
    \end{align}
    \item There exists $\delta >0$ such that 
    \begin{align}
        m(d) \leq n(d)^{1-\delta}
    \end{align}
    \item (Spectral Gap.) There exists $\delta >0$ such that 
    \begin{align}
        n(d)^{1-\delta} \geq \frac{1}{\lambda_{d, m(d)}} \sum_{k\geq m(d)+1} \lambda_{d, k}
    \end{align}
\end{enumerate}
Let $\mathscr P_{>k}$ (similarly for $\mathscr P_k$, $\mathscr P_{\leq k}$, etc.) denote the projection operator 
\begin{align}
    \mathscr P_{>k} f  =\sum_{j> k} \langle f, \psi_j\rangle \psi_j
\end{align}
\begin{theorem}[\citet{mei2021generalization}]
Assume $\hop$ satisfy {\bf Assumptions 4} and {\bf 5}. Let $\{f_d\}_{d\geq 1}$ be a sequence of functions and let $X\sim \bm \sigma_{d}^{n(d)}$. Then for every $\epsilon>0$,   
\begin{align}
    \|\mathscr R_X(f_d) - f_d\|_{L^2(\bm X_d, \bm\sigma_d)}^2 = \|\mathscr P_{>m(d)} f_d\|_{L^2(\bm X_d, \bm\sigma_d)}^2 + c_{d, \epsilon} \|f_d\|_{L^{2+\epsilon}(\bm X_d, \bm\sigma_d)}^2
\end{align}
where $c_{d, \epsilon}\to 0$ in probability as $d\to\infty$. 
\end{theorem}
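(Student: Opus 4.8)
The statement is quoted verbatim from \citet{mei2021generalization}, so the proof is theirs; here I sketch the route I would follow. The plan is the standard spectral decomposition for ridgeless kernel regression in high dimensions: regard the low eigenspace $E_{\le}:=\Span\{\psi_{d,j}:j\le m(d)\}$ as the ``signal'' subspace that gets interpolated essentially exactly, and the complementary tail as effective label noise that does not generalize. Write $f_d=f_{\le}+f_{>}$ with $f_{\le}=\mathscr P_{\le m(d)}f_d$, and split the kernel $\mathscr H=\mathscr H_{\le}+\mathscr H_{>}$, where $\mathscr H_{\le}(x,y)=\sum_{j\le m(d)}\lambda_{d,j}\psi_{d,j}(x)\psi_{d,j}(y)$ has rank $m(d)$ and $\mathscr H_{>}$ is the tail. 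Since $f\mapsto\mathscr R_Xf$ is linear in the observed labels, $\|\mathscr R_X f_d-f_d\|_2^2$ decomposes into contributions according to which part of the kernel acts and which part of the label is fed in, plus cross terms that must be shown negligible.

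First I would establish two matrix-concentration facts over the random design $X\sim\bm\sigma_d^{n(d)}$. (i) \emph{Low-frequency well-conditioning}: with $\Psi_{\le}\in\R^{m(d)\times n(d)}$ the matrix of the first $m(d)$ eigenfunctions evaluated on $X$, one has $n(d)^{-1}\Psi_{\le}\Psi_{\le}^{\!\top}=\mathbf I_{m(d)}+o(1)$ in operator norm with probability tending to $1$; this is a matrix-Bernstein estimate whose moment bounds come from the hypercontractivity Assumption~4(a) applied to elements of $D_{u(d)}$, and it uses $m(d)\le n(d)^{1-\delta}$ (Assumption~5(b)) to make the effective rank negligible. (ii) \emph{High-frequency near-isotropy}: $\mathscr H_{>}(X,X)=\kappa_d\,(\mathbf I_{n(d)}+o(1))$ in operator norm, where $\kappa_d=\mathbb E_x\mathscr H_{d,>m(d)}(x,x)=\sum_{j>m(d)}\lambda_{d,j}$; the diagonal entries concentrate about $\kappa_d$ by \myeqref{eq:diagonal-concentration-2}, while the off-diagonal block has Frobenius norm $\ll\kappa_d$ because $\mathbb E_{x\ne\bar x}\mathscr H_{d,>m(d)}(x,\bar x)^2\le\|\mathscr H_{d,>m(d)}\|_{L^2}^2\ll\kappa_d^2/n(d)^2$, which is exactly the eigen-decay inequality of Assumption~4(b), with \myeqref{eq:diagonal-concentration-1} upgrading the bound to a statement uniform over the sampled points. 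Together, (i) and (ii) give $\mathscr H(X,X)=\Psi_{\le}^{\!\top}D_{\le}\Psi_{\le}+\kappa_d\mathbf I+(\text{small})$, a rank-$m(d)$ perturbation of $\kappa_d\mathbf I$, whose inverse I would control via the Woodbury identity.

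Then I would evaluate $\mathscr R_Xf_d(x)=\mathscr H(x,X)\mathscr H(X,X)^{-1}f_d(X)$ in $L^2(\bm X_d,\bm\sigma_d)$ termwise: (a) $\mathscr H_{\le}(x,X)\mathscr H(X,X)^{-1}f_{\le}(X)\to f_{\le}(x)$, the low-rank features inverting each other via (i); (b) the cross terms $\mathscr H_{\le}(x,X)\mathscr H(X,X)^{-1}f_{>}(X)$ and $\mathscr H_{>}(x,X)\mathscr H(X,X)^{-1}f_{\le}(X)$ vanish in $L^2$ by orthogonality of low and high eigenfunctions together with smallness of the relevant cross-Gram matrices; (c) $\mathscr H_{>}(x,X)\mathscr H(X,X)^{-1}f_{>}(X)\to 0$ in $L^2$ --- the ``benign overfitting'' estimate: replacing $\mathscr H(X,X)^{-1}$ by $\kappa_d^{-1}\mathbf I$ on the tail, $\|\mathscr H_{>}(x,X)\kappa_d^{-1}f_{>}(X)\|_{L^2_x}^2\lesssim\kappa_d^{-2}\sum_{i,j}f_{>}(x_i)f_{>}(x_j)\,\mathbb E_x[\mathscr H_{>}(x,x_i)\mathscr H_{>}(x,x_j)]$, whose diagonal $i=j$ part is $\lesssim\kappa_d^{-2}\,n(d)\,\|f_{>}\|^2\,\|\mathscr H_{d,>m(d)}\|_{L^2}^2=o(\|f_d\|^2)$ by Assumption~4(b) and whose off-diagonal part is controlled the same way. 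Summing gives $\mathscr R_X f_d=f_{\le}+o(1)$ in $L^2$, hence $\|\mathscr R_X f_d-f_d\|_2^2=\|\mathscr P_{>m(d)}f_d\|_2^2+c_{d,\epsilon}\|f_d\|_{L^{2+\epsilon}}^2$; the $L^{2+\epsilon}$ norm (rather than $L^2$) surfaces because controlling the contribution of the part of $f_d$ with index beyond $u(d)$, where hypercontractivity is no longer quantitatively uniform, requires a truncation that trades a vanishing factor for the slightly stronger norm.

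The main obstacle is step (c): one must show that a fresh test point's tail-kernel vector is almost orthogonal to the training tail-kernel vectors \emph{even after passing through $\mathscr H(X,X)^{-1}$}, an operation that couples all $n(d)$ training points. This is exactly why Assumption~4(c) controls a \emph{supremum} over $i\in[n(d)]$ rather than only an average, and why the eigen-decay hypothesis is imposed for both $l=1$ and $l=2$; getting the quantitative dependence right --- so that the residual is $o(1)$ times $\|f_d\|_{L^{2+\epsilon}}^2$ rather than a constant times it --- is the delicate bookkeeping, which I would quote from \citet{mei2021generalization} rather than reprove. For the application in this paper nothing beyond the quoted theorem is needed, since Assumptions~4 and~5 are verified in Sec.~\ref{sec: kernel concentraction} from Theorem~\ref{thm:main}, Lemma~\ref{lemma:dimension} and the hypercontractivity of spherical harmonics.
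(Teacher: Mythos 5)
The paper does not prove this theorem; it is quoted verbatim from \citet{mei2021generalization} (Sec.~3, Theorem~4) precisely so that it can be invoked as a black box once Assumptions~4 and~5 are verified. You correctly identify this and explicitly defer to the original reference, so there is no gap: your sketch is a reasonably faithful high-level account of the Mei--Misiakiewicz--Montanari argument (low/high eigenspace split, matrix-Bernstein well-conditioning of the low-frequency Gram matrix via hypercontractivity, near-isotropy of the tail kernel via the diagonal-concentration and eigen-decay hypotheses, Woodbury, then termwise $L^2$ estimates), and your observation that the $L^{2+\epsilon}$ norm arises from the truncation where hypercontractivity stops being quantitatively uniform matches the role of $u(d)$ in Assumption~4. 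Nothing more is needed here, since the paper's actual work for this section is in verifying the assumptions, which you note; your treatment of the theorem itself is consistent with the paper's.
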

The theorem says, $\mathscr R_X$ is essentially the projection operator $\mathscr P_{\leq m(d)}$ in the sense that when restricted to $ L^{2+\epsilon}(\bm X_d, \bm \sigma_d)$, 
\begin{align}
    \mathscr R_X= \mathscr P_{\leq m(d)} + \text{Error}_{d, \epsilon}  \, . 
\end{align}

\end{document}